\documentclass[10pt]{article}
\usepackage{bofflab}

\usepackage{microtype}
\usepackage{parskip}
\usepackage[scaled=0.92]{lato}
\usepackage[letterpaper,
  top=0.9in, bottom=0.9in,
  left=1in,  right=1in,
  footskip=0.5in,
  includefoot=false]{geometry}
\usepackage{booktabs}
\usepackage[font=small,labelfont=bf]{caption}
\usepackage{subcaption}
\usepackage{multirow}
\usepackage{ragged2e}
\usepackage{float}
\usepackage{colortbl}
\usepackage{wrapfig}
\usepackage{needspace}
\usepackage{placeins}
\usepackage[T1]{fontenc}

\makeatletter
\let\c@table\c@figure
\makeatother

\usepackage{tikz}
\usetikzlibrary{arrows.meta,calc,positioning,decorations.markings,hobby}
\usepackage{tikz-cd}

\usepackage{amsmath}
\usepackage{amssymb}
\usepackage{mathtools}
\usepackage{amsthm}
\usepackage{stmaryrd}

\usepackage{algorithm}
\usepackage{algorithmic}

\usepackage{appendix}

\usepackage{thmtools}
\usepackage{thm-restate}

\usepackage[capitalise,noabbrev]{cleveref}
\crefformat{section}{Section~#2#1#3}
\crefformat{subsection}{Section~#2#1#3}
\crefformat{subsubsection}{Section~#2#1#3}
\crefformat{equation}{(#2#1#3)}
\crefrangeformat{equation}{(#3#1#4) to (#5#2#6)}
\crefmultiformat{equation}{(#2#1#3)}{ and (#2#1#3)}{, (#2#1#3)}{ and (#2#1#3)}
\crefname{appendix}{Appendix}{Appendices}
\Crefname{appendix}{Appendix}{Appendices}
\crefformat{appendix}{Appendix~#2#1#3}
\AddToHook{cmd/appendix/before}{%
  \crefalias{section}{appendix}%
  \crefalias{subsection}{appendix}%
  \crefalias{subsubsection}{appendix}%
}

\makeatletter
\def\thm@space@setup{%
  \thm@preskip=0.8em plus 0.2em minus 0.2em
  \thm@postskip=0.8em plus 0.2em minus 0.2em
}
\makeatother

\theoremstyle{plain}
\newtheorem{theorem}{Theorem}
\newtheorem{proposition}[theorem]{Proposition}
\newtheorem{lemma}[theorem]{Lemma}

\theoremstyle{definition}

\theoremstyle{remark}

\crefname{theorem}{Theorem}{Theorems}
\Crefname{theorem}{Theorem}{Theorems}
\crefname{lemma}{Lemma}{Lemmas}
\Crefname{lemma}{Lemma}{Lemmas}
\crefname{proposition}{Proposition}{Propositions}
\Crefname{proposition}{Proposition}{Propositions}
\crefname{corollary}{Corollary}{Corollaries}
\Crefname{corollary}{Corollary}{Corollaries}
\crefname{definition}{Definition}{Definitions}
\Crefname{definition}{Definition}{Definitions}

\newcommand{\figonescale}{1}

\newcommand{\image}[4]{\begin{figure}[!tb]
    \centering
    \includegraphics[width=#1\linewidth]{images/#2}
    \caption{#3\label{#4}}
  \end{figure}}

\makeatletter
\renewcommand\fps@table{!tb}
\makeatother

\DeclareMathOperator*{\argmax}{\arg\max}

\newcommand{\E}{\mathbb{E}}

\newcommand{\Cov}{\operatorname{Cov}}
\renewcommand{\P}{\mathbb{P}}

\newcommand{\given}{\,\vert\,}
\newcommand{\givenlarge}{\;\middle\vert\;}
\newcommand{\compare}{\,\|\,}

\newcommand{\indep}{\perp\!\!\!\perp}

\newcommand{\Normal}{\mathcal{N}}

\newcommand{\KL}{\operatorname{KL}}
\newcommand{\indicator}{\mathbf{1}}

\newcommand{\convp}{\overset{p}{\longrightarrow}}

\newcommand{\R}{\mathbb{R}}

\newcommand{\C}{\mathbb{C}}

\newcommand{\tr}{\operatorname{tr}}

\newcommand{\grad}{\nabla}

\newcommand{\mc}{\mathcal}
\newcommand{\erfi}{\operatorname{erfi}}

\DeclarePairedDelimiter{\norm}{\lVert}{\rVert}
\DeclarePairedDelimiter{\abs}{\lvert}{\rvert}

\DeclarePairedDelimiter{\set}{\{}{\}}
\DeclarePairedDelimiter{\floor}{\lfloor}{\rfloor}

\title{Are we really tilting?\\{\Large The mechanics of reward guidance in flow and diffusion models}\\[0.2em]}

\author{Sanjit Dandapanthula}
\author{Nicholas M. Boffi}

\affiliation{Carnegie Mellon University}

\correspondence{Sanjit Dandapanthula at \texttt{\href{mailto:sanjitd@cmu.edu}{sanjitd@cmu.edu}}.}

\begin{document}
\begin{abstract}
    Reward guidance algorithms steer a learned generative process toward the reward-tilted measure at inference time.
While empirically powerful, these methods are prone to \emph{reward hacking}: the guided model over-optimizes the reward at the cost of fidelity to the learned distribution.
Prior work has attributed this to the complexity of neural reward functions or implicit biases in diffusion training, but its fundamental origins remain poorly understood.
We show that reward hacking arises from an approximation made in most practical implementations of reward-guided diffusion---finite-particle \textbf{plug-in estimation} of the Doob $h$-function---even in the simplest non-trivial settings of Gaussian and Gaussian mixture targets with quadratic rewards.
In closed form, we isolate two distinct failure modes of the plug-in estimator: it leads to \textbf{reward hacking within each mode} and it \textbf{cannot select high-reward modes}.
We propose a closed-form reward damping schedule that corrects the within-mode bias with no additional compute, and clarify the role of best-of-$n$ sampling in compensating for the mode selection failure.
Experiments on Gaussian mixture targets, a 2D checkerboard, and FLUX.1 text-to-image generation confirm that our theoretical insights carry over to practical settings.

\end{abstract}
\maketitle

\begin{figure}[H]
    \centering
    \includegraphics[width=\linewidth]{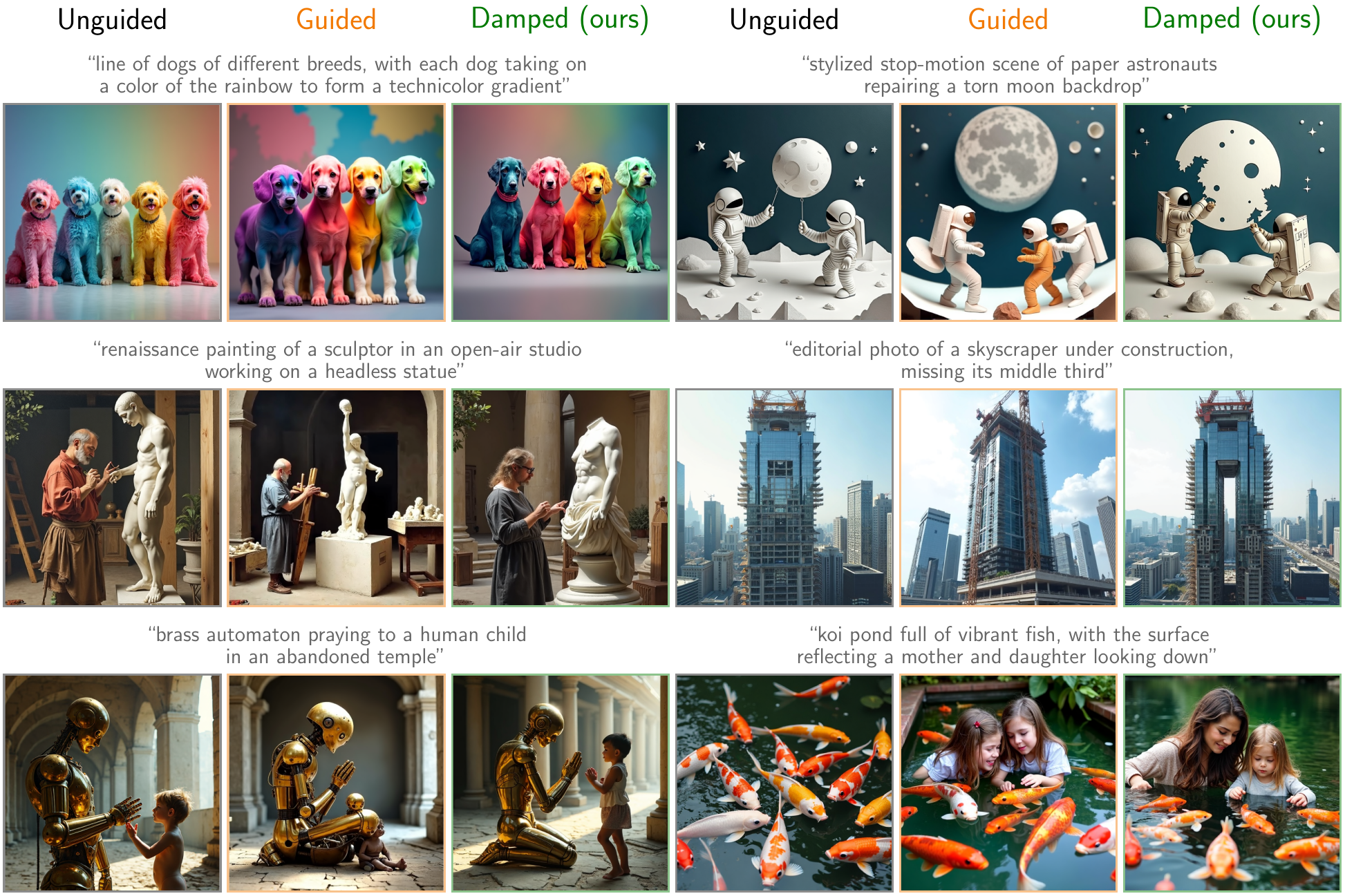}
    \caption{\textbf{Reward damping.} We introduce \emph{reward damping}, a simple and principled guidance schedule to mitigate reward hacking. Base FLUX.1 \citep{esser2024scaling} samples guided with ImageReward \citep{xu2023imagereward}; further experimental details in \Cref{sec:exp-flux-details}.}
    \label{fig:cover}
\end{figure}

\section{Introduction}

Flow and diffusion-based generative models have become the dominant paradigm for high-quality sample generation across diverse domains, powering state-of-the-art systems for text-to-image synthesis \citep{ho2020denoising,rombach2022high,saharia2022photorealistic,esser2024scaling}, molecular design \citep{hoogeboom2022equivariant,xu2022geodiff,guan20233d,schneuing2024structure}, and protein structure prediction \citep{watson2023de,yim2023se,ingraham2023illuminating,wu2024protein}. In many applications, however, sampling from the learned distribution $\rho_1(x)$ is not the end goal---we often want samples from the reward-tilted measure $\tilde{\rho}_1(x) \propto \rho_1(x)\, e^{\lambda r(x)}$ given a reward function $r$ and inverse temperature $\lambda$. For instance, practitioners may desire images that better match a text prompt \citep{xu2023imagereward, kirstain2023pick, black2024ddpo, wallace2024diffusion, huang2025rewarddance}, molecules with higher binding affinity to a target protein pocket \citep{guan2023targetdiff, guan2023decompdiff, jian2024general}, or proteins with improved stability \citep{watson2023de, gruver2023protein, ingraham2023illuminating}. This is the problem of \emph{reward-guided generation}. Widely used methods such as classifier guidance \citep{dhariwal2021diffusion} and classifier-free guidance \citep{ho2022classifier} can be understood as instances of reward guidance for particular choices of reward function, and more general value function approaches have been developed in \citet{uehara2024fine, uehara2025inference}.

Despite widespread empirical use, these methods are often prone to \emph{reward hacking}: as the guidance scale increases, generated samples over-optimize the reward at the expense of fidelity to the learned distribution, producing unrealistic or degenerate outputs \citep{gao2023scaling}. While prior experimental work has attributed reward hacking to the complexity of neural reward functions \citep{gao2023scaling, eisenstein2024helping, rafailov2024scaling} or implicit biases in generative model training \citep{zhang2024confronting}, the fundamental origins of reward hacking still remain poorly understood. Similarly, it has been empirically observed that taking the highest-reward option out of $n$ guided samples (\emph{best-of-$n$}) significantly improves the performance of guidance in generative models in many settings, but the precise mechanism underlying the improvement from best-of-$n$ is not well understood. This motivates the primary question that we study in this work:

\begin{center}
    \emph{Why does guidance fail to sample from the reward tilt, and how can we mitigate this bias?}
\end{center}

\begin{figure}
    \centering
    \scalebox{\figonescale}{\begin{tikzpicture}[
                panel/.style={draw=gray!35, rounded corners=4pt, line width=0.45pt},
                title/.style={font=\sffamily\fontsize{10}{11}\selectfont, align=center, text width=\W cm},
                note/.style={font=\sffamily\fontsize{7.95}{8.95}\selectfont, align=center},
                legendtext/.style={font=\sffamily\fontsize{7.85}{8.75}\selectfont, align=left},
                target/.style={red!88!black, line width=1.6pt}
            ]

            \def\W{3.79}
            \def\H{3.83}
            \def\G{0.25}
            \def\Y{2.52}
            \def\NoteH{1.22}
            \def\NoteTop{2.38}

            \pgfmathsetmacro{\Xzero}{0}
            \pgfmathsetmacro{\Xone}{\W+\G}
            \pgfmathsetmacro{\Xtwo}{2*(\W+\G)}
            \pgfmathsetmacro{\Xthree}{3*(\W+\G)}
            \pgfmathsetmacro{\TotalW}{4*\W+3*\G}

            \newcommand{\rewardmode}[5]{%
                \begin{scope}[rotate around={#5:(#1,#2)}]
                    \fill[gray!10, opacity=0.48] (#1,#2) ellipse ({#3} and {#4});
                    \fill[gray!18, opacity=0.42] (#1,#2) ellipse ({0.73*#3} and {0.73*#4});
                    \fill[gray!25, opacity=0.35] (#1,#2) ellipse ({0.50*#3} and {0.50*#4});
                    \fill[gray!35, opacity=0.32] (#1,#2) ellipse ({0.30*#3} and {0.30*#4});
                    \fill[gray!48, opacity=0.20] (#1,#2) ellipse ({0.17*#3} and {0.17*#4});
                    \draw[gray!36, opacity=0.62, line width=0.18pt] (#1,#2) ellipse ({#3} and {#4});
                    \draw[gray!34, opacity=0.55, line width=0.18pt] (#1,#2) ellipse ({0.73*#3} and {0.73*#4});
                    \draw[gray!32, opacity=0.48, line width=0.18pt] (#1,#2) ellipse ({0.50*#3} and {0.50*#4});
                \end{scope}
            }

            \newcommand{\landscape}[2]{%
                \begin{scope}[shift={(#1,#2)}]
                    \rewardmode{2.32}{2.62}{0.84}{0.56}{-12}
                    \rewardmode{1.00}{1.34}{0.64}{0.82}{15}
                    \rewardmode{2.52}{1.08}{0.64}{0.82}{-28}
                    \draw[target] (3.38,3.35) -- (3.55,3.52);
                    \draw[target] (3.38,3.52) -- (3.55,3.35);
                \end{scope}
            }

            \newcommand{\cloud}[9]{%
                \begin{scope}[shift={(#1,#2)}]
                    \pgfmathsetseed{#9}
                    \foreach \i in {1,...,#8}{
                            \pgfmathsetmacro{\ang}{360*rnd}
                            \pgfmathsetmacro{\rad}{sqrt(rnd)}
                            \pgfmathsetmacro{\jitx}{0.16*(rnd-0.5)}
                            \pgfmathsetmacro{\jity}{0.16*(rnd-0.5)}
                            \fill[#3, opacity=0.78]
                            ({#4 + #6*(\rad*cos(\ang)+\jitx)},
                            {#5 + #7*(\rad*sin(\ang)+\jity)}) circle (0.028);
                        }
                \end{scope}
            }

            \node[title] at ({\Xzero+0.5*\W},{\Y+\H+0.30}) {Base target};
            \node[title, text=blue!78!black] at ({\Xone+0.5*\W},{\Y+\H+0.30}) {Analytic reward tilting};
            \node[title, text=orange!95!black] at ({\Xtwo+0.5*\W},{\Y+\H+0.30}) {Practical guidance};
            \node[title, text=green!48!black] at ({\Xthree+0.5*\W},{\Y+\H+0.30}) {Damping + best-of-\textit{n}};

            \foreach \X in {\Xzero,\Xone,\Xtwo,\Xthree}{
                    \path[fill=white, rounded corners=4pt] (\X,\Y) rectangle ({\X+\W},{\Y+\H});
                    \begin{scope}
                        \clip[rounded corners=4pt] (\X,\Y) rectangle ({\X+\W},{\Y+\H});
                        \landscape{\X}{\Y}
                    \end{scope}
                    \draw[panel] (\X,\Y) rectangle ({\X+\W},{\Y+\H});
                }

            \cloud{\Xzero}{\Y}{gray!52!black}{2.32}{2.62}{0.70}{0.47}{42}{11}
            \cloud{\Xzero}{\Y}{gray!52!black}{1.00}{1.34}{0.55}{0.72}{42}{12}
            \cloud{\Xzero}{\Y}{gray!52!black}{2.52}{1.08}{0.55}{0.72}{42}{13}

            \cloud{\Xone}{\Y}{blue!76!black}{3.04}{3.10}{0.52}{0.31}{92}{21}
            \cloud{\Xone}{\Y}{blue!76!black}{2.88}{2.94}{0.78}{0.45}{28}{22}

            \cloud{\Xtwo}{\Y}{orange!95!black}{1.88}{2.26}{0.22}{0.19}{72}{31}
            \cloud{\Xtwo}{\Y}{orange!95!black}{3.08}{2.38}{0.22}{0.19}{72}{32}
            \cloud{\Xtwo}{\Y}{orange!95!black}{3.59}{3.58}{0.18}{0.16}{72}{33}

            \cloud{\Xthree}{\Y}{green!48!black}{3.01}{3.06}{0.54}{0.35}{108}{41}
            \cloud{\Xthree}{\Y}{green!48!black}{2.86}{2.90}{0.78}{0.46}{26}{42}

            \foreach \X/\C in {
                    \Xzero/gray!7,
                    \Xone/blue!5,
                    \Xtwo/orange!8,
                    \Xthree/green!7}{
                    \fill[\C, rounded corners=5pt] (\X,{\NoteTop-\NoteH}) rectangle ({\X+\W},\NoteTop);
                }

            \node[note, text width=3.36cm] at ({\Xzero+0.5*\W},{\NoteTop-0.5*\NoteH})
            {Samples from a mixture\\of several modes};

            \node[note, text width=3.40cm] at ({\Xone+0.5*\W},{\NoteTop-0.5*\NoteH})
            {Selects correct mode\\and optimizes reward};

            \node[note, text width=3.68cm] at ({\Xtwo+0.5*\W},{\NoteTop-0.5*\NoteH})
            {\textcolor{orange!95!black}{\bfseries Overconcentrates} and\\\textcolor{orange!95!black}{\bfseries overshoots} the mean};

            \node[note, text width=3.40cm] at ({\Xthree+0.5*\W},{\NoteTop-0.5*\NoteH})
            {Recovers similar behavior\\to analytic tilting};

            \draw[panel] (-0.18,0.15) rectangle ({\TotalW+0.18},0.91);
            \node[legendtext, anchor=center] at ({0.5*\TotalW},0.53)
            {\tikz[baseline=-0.55ex]{\fill[gray!12] (0,0) circle (0.17); \fill[gray!20] (0,0) circle (0.12); \fill[gray!32] (0,0) circle (0.075); \fill[gray!46] (0,0) circle (0.035);}\hspace{0.46em}base target
                \hspace{0.50cm}
                \tikz[baseline=-0.55ex]\fill[gray!55!black, opacity=0.72] (0,0) circle (0.065);\hspace{0.46em}base samples
                \hspace{0.50cm}
                \tikz[baseline=-0.55ex]\fill[blue!76!black, opacity=0.78] (0,0) circle (0.065);\hspace{0.42em}tilted samples
                \hspace{0.50cm}
                \tikz[baseline=-0.55ex]\fill[orange!95!black, opacity=0.85] (0,0) circle (0.065);\hspace{0.42em}guided samples
                \hspace{0.50cm}
                \tikz[baseline=-0.55ex]\fill[green!48!black, opacity=0.85] (0,0) circle (0.065);\hspace{0.42em}our proposal
                \hspace{0.50cm}
                \tikz[baseline=-0.55ex]\draw[target] (-0.055,-0.055) -- (0.055,0.055) (-0.055,0.055) -- (0.055,-0.055);\hspace{0.46em}reward maximizer};

        \end{tikzpicture}}

    \caption{\textbf{Overview.}
        Compared to analytic reward tilting, practical guidance algorithms over-concentrate within each mode
        and fail to select high-reward modes. We propose a damped reward scale to mitigate within-mode reward
        hacking and clarify the role of best-of-$n$ in mode selection; combining these two methods often enables
        us to approximately recover the reward tilt.
    }
    \label{fig:reward-tilting-overview}
\end{figure}

Altogether, our \textbf{main contributions} are (visually summarized in \Cref{fig:reward-tilting-overview}):
{\setlength{\leftmargini}{1.4em}
\begin{enumerate}
    \item We prove in Gaussian settings that significant within-mode reward hacking arises from the \emph{finite-particle plug-in estimation} in most implementations of reward-guided diffusion.
    \item We show in Gaussian mixture settings that guidance using plug-in estimation fails to select between modes and has no mechanism for accurately weighting distant high-reward modes.
    \item We propose a simple closed form damped reward schedule $\lambda_t$ to mitigate within-mode reward hacking and clarify the role of best-of-$n$ sampling in performing mode selection.
    \item We demonstrate the effectiveness of our damped reward schedule and showcase the role of best-of-$n$ sampling through a diverse set of experiments on Gaussian mixture targets, a 2D checkerboard, and FLUX.1 text-to-image generation.
\end{enumerate}}

\section{Background on reward guidance} \label{sec:background}

We begin by reviewing the general theoretical framework for reward guidance in generative models through stochastic interpolants and the Doob $h$-transform.

\subsection{Stochastic interpolants and flow matching}

We assume access to a pre-trained flow model $b : [0, 1] \times \R^d \to \R^d$, and samples from the data distribution $\rho_1 \in \mc{P}(\R^d)$ are drawn by numerically integrating the probability flow ODE
\begin{align*}
    \dot{x}_t = b_t(x_t)
\end{align*}
starting from noise $x_0 \sim \Normal(0, I_d)$ until $t = 1$. Such a model $b_t$ can be obtained by minimizing the flow matching objective \citep{lipman2023flow,albergo2025stochastic}. To facilitate our analysis, we consider the case $b_t := \E[\dot{I}_t \given I_t = x]$, where
\begin{equation} \label{eq:linear-interpolant}
    I_t := (1 - t) I_0 + t I_1
\end{equation}
is the \emph{linear interpolant} for $I_0 \sim \Normal(0, I_d)$ and $I_1 \sim \rho_1$ independent. Furthermore, if $\sigma_t \in \R^{d \times \ell}$ is any fixed noise schedule and $\rho_t$ denotes the density of $x_t$, the solution of the SDE
\begin{equation} \label{eq:unguided-diffusion}
    dX_t = \left( b_t(X_t) + \frac{1}{2} (\sigma_t \sigma_t^\top) \grad \log \rho_t(X_t) \right) dt + \sigma_t\, dB_t
\end{equation}
also shares the same time-marginal distribution as $x_t$ (\Cref{app:sde-matches-flow}).

\subsection{Reward guidance and plug-in estimation}

The problem of reward-guided generation is often formulated as steering a trained generative model at inference-time (without additional fine-tuning) to obtain samples from the \emph{reward-tilted measure}
\begin{equation} \label{eq:reward-tilted-measure}
    \tilde{\rho}_1(x) \propto \rho_1(x)\, e^{\lambda r(x)}
\end{equation}
for a reward function $r : \R^d \to \R$ and inverse temperature $\lambda > 0$. The reward-tilted measure \eqref{eq:reward-tilted-measure} commonly arises in reinforcement learning \citep{uehara2025inference} as the solution to a KL-regularized variational optimization problem (\Cref{app:variational-tilt}).

\paragraph{Doob $h$-transform,} The Doob $h$-transform (e.g., \cite{rogers2000diffusions}) provides a principled framework to solve the reward guidance problem by modifying the drift of the unguided SDE \eqref{eq:unguided-diffusion} to steer the terminal distribution toward the reward-tilted measure \eqref{eq:reward-tilted-measure}. Defining the \emph{Doob $h$-function} $h_t(x) \coloneq \E[e^{\lambda r(X_1)} \given X_t = x]$, the \emph{Doob $h$-transform} follows the guided ODE
\begin{equation} \label{eq:guided-ode}
    \dot{\tilde{x}}_t = b_t(\tilde{x}_t) + \frac{1}{2} (\sigma_t \sigma_t^\top) \grad \log h_t(\tilde{x}_t)
\end{equation}
from $\tilde{x}_0 = x_0$ yields a sample $\tilde{x}_1 \sim \tilde{\rho}_1$ from the reward tilt \emph{as long as $X_0 \indep X_1$}. To satisfy this constraint and ensure validity of the Doob $h$-transform, \citet{domingo2025adjoint} demonstrates that choosing the \emph{memoryless noise schedule} $\sigma_t = \sqrt{2 (1 - t) / t}\, I_d$ ensures that $X_0 \indep X_1$.

\paragraph{Plug-in estimation.} The Doob $h$-function $h_t(x) = \E[e^{\lambda r(X_1)} \given X_t = x]$ in \Cref{thm:doob-h-transform} is intractable in general, so practitioners usually approximate it with a \emph{$k$-particle plug-in estimator}
\begin{equation} \label{eq:plugin-estimator}
    \hat{h}^{(k)}_t(x) = \frac{1}{k} \sum_{i=1}^k e^{\lambda r(X_1^{(i)})},
\end{equation}
where $X_1^{(1)}, \ldots, X_1^{(k)} \sim  p_{1 \vert t}(\cdot \given x)$ are independent samples ($p_{1 \vert t}$ denotes the law of $(X_1 \given X_t = x)$). In practice, $k$ is often chosen to be small due to computational constraints \citep{holderrieth2026glass, potaptchik2026meta, holderrieth2026diamond}, introducing finite-sample bias.
\section{Related work}

In this section, we briefly review related work on reward guidance for generative models.

\paragraph{Applications of reward guidance.}
Reward guidance has become a central ingredient in aligning generative models with downstream objectives across several domains. In text-to-image generation, learned human-preference reward models such as ImageReward \citep{xu2023imagereward}, PickScore \citep{kirstain2023pick}, and HPSv2 \citep{wu2023hpsv2} are routinely used to fine-tune and steer state-of-the-art diffusion and flow-matching models. In structure-based drug design, reward guidance is used to bias diffusion-generated ligands toward target pockets and higher predicted binding affinity \citep{guan2023targetdiff, guan2023decompdiff, jian2024general}. In protein design, classifier-guided and fine-tuned diffusion models are used to generate sequences and structures optimized for stability, binding, and other functional properties \citep{watson2023de, gruver2023protein, ingraham2023illuminating, uehara2024fine}.

\paragraph{Practical approaches to reward guidance.}
The \emph{stochastic interpolant} framework \citep{albergo2025stochastic, boffi2025build} provides a flexible formulation of diffusion and flow-matching generative models, and underpins many modern reward-alignment algorithms. Within this framework, \emph{GLASS flows} \citep{holderrieth2026glass} implement the Doob $h$-transform by approximating the intractable value function $h_t(x) = \E[e^{\lambda r(X_1)} \given X_t = x]$ with a $k$-particle plug-in estimator, where each particle is obtained by simulating an inner ODE starting from fresh Gaussian noise. This inner ODE simulation is expensive, motivating the development of \emph{stochastic flow map} frameworks that sample the posterior $(X_1 \given X_t)$ in a single step \citep{potaptchik2026meta,holderrieth2026diamond}. A different deterministic approach is \emph{flow map reward guidance} \citep{huang2026guide}, which backpropagates the reward gradient through the unguided probability flow ODE from $X_t$ to $X_1$. A complementary line of work \citep{uehara2024fine, uehara2025inference,domingo2025adjoint} casts reward-guided generation as a reinforcement learning or stochastic optimal control problem and fine-tunes the generative model via policy-gradient or value-function learning; we discuss the connection to stochastic optimal control in \Cref{app:stochastic-optimal-control}.

\paragraph{Theoretical results for guidance.}
Several recent works have studied guidance in simplified settings. \citet{chidambaram2024does} analyzed classifier-free guidance for Gaussian mixtures, showing it can amplify the signal-to-noise ratio but distort mode weights. \citet{wu2024theoretical} provided theoretical insights under classifier guidance, while \citet{pavasovic2025classifier} and \citet{ventura2026emergence} study classifier-free guidance from a high-dimensional perspective. \citet{xi2024analysis} finds that monotonic schedulers perform well for classifier-free guidance in practice, and our reward damping schedule in \Cref{sec:reward-damping} is a principled choice for such a monotonic schedule. These works focus on specific guidance schemes; in contrast, we study the bias introduced by practical approximations to the general reward-guided Doob $h$-transform framework. Closest to our setting, \citet{potaptchik2026meta} prove that the $k$-particle plug-in estimator of the Doob $h$-transform converges to exact guidance in the limit as $k \to \infty$; we complement this asymptotic guarantee by characterizing the bias induced by a finite $k$ and showing that it systematically produces overly aggressive guidance. A recent line of work also proves computational hardness results for reward-guided diffusion models \citep{moitra2026steering, moitra2026tractability}, showing that exact reward tilting is NP-hard even with certain quadratic rewards.
\section{Theoretical results}
\subsection{Within-mode reward hacking} \label{sec:reward-hacking}

We now show that the plug-in approximation \eqref{eq:plugin-estimator} leads to reward hacking within a single mode; we consider a Gaussian target $\Normal(\mu, \Sigma)$ and a quadratic reward $r(x) = -\norm{x - a}_2^2$ for a target point $a \in \R^d$. We will compare the plug-in guidance to the analytic guidance framework described in \Cref{app:analytic-guidance}, which provides the benchmark for exact reward guidance. Throughout this section, we use the notation of \Cref{sec:background} and assume noise schedule $\sigma_t = \eta_t I_d$ for some $\eta_t > 0$. We let $\mu_t = t \mu$ and $\Sigma_t = (1-t)^2 I_d + t^2 \Sigma$ denote the mean and covariance of $X_t$, and let
\begin{align*}
    \mu_{1 \vert t}(x) = \mu + \Sigma^{1/2} \Sigma_t^{-1/2} \Psi_t^{1/2} (x - \mu_t),
    \qquad
    \Sigma_{1 \vert t} = \Sigma (I_d - \Psi_t),
\end{align*}
denote the mean and covariance of the Gaussian distribution $(X_1 \given X_t = x)$, where $\Psi_t \coloneq \exp\!\left( -\int_t^1 \eta_v^2\, \Sigma_v^{-1}\, dv \right)$ (see \Cref{app:ivf-bias-gaussian-proof} for the derivation).

\subsubsection{Plug-in estimation bias} \label{sec:plugin-bias}

In this section, we analyze the bias due to the $k$-particle plug-in estimator \eqref{eq:plugin-estimator} in the limit as the step size in the numerical ODE solver goes to zero; we call this the \emph{plug-in flow}. Defining $u_t^{(k)}(x) = \E[\grad \log \hat{h}^{(k)}_t(X_t) \given X_t = x]$ (where the expectation is taken over the $k$ particle draws), we obtain the following result.

\begin{proposition}[Convergence to plug-in flow, informal]
    Under appropriate regularity conditions, the Euler trajectory $\bar{x}_t$ obtained by running the guided ODE with the plug-in estimator converges uniformly in probability as the step size in the solver goes to zero:
    \begin{equation*}
        \sup_{t \in [0, 1]}\; \norm{\bar{x}_t - \tilde{x}_t}_2 \convp 0.
    \end{equation*}
    where the limiting trajectory starts at $\tilde{x}_0 = X_0$ and follows the plug-in flow defined by
    \begin{lavenderbox}
        \begin{equation} \label{eq:plugin-flow}
            \dot{\tilde{x}}_t = b_t(\tilde{x}_t) + \frac{1}{2} (\sigma_t \sigma_t^\top)\, u_t^{(k)}(\tilde{x}_t).
        \end{equation}
    \end{lavenderbox}
\end{proposition}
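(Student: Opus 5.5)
The plan is to treat the Euler scheme with freshly drawn particles at each step as a stochastic-approximation recursion. The random guidance term is split into its conditional mean $u^{(k)}_t$ plus a martingale-difference noise, and we show the noise averages out as the step size $\delta \to 0$.

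Fix a grid $0=t_0<t_1<\dots<t_N=1$ with mesh $\delta$. At step $n$, draw $k$ particles $X_1^{(i,n)} \sim p_{1\vert t_n}(\cdot\given \bar x_{t_n})$ conditionally independently given the past. Write the update as
\begin{equation*}
    \bar x_{t_{n+1}} = \bar x_{t_n} + \delta\Big(b_{t_n}(\bar x_{t_n}) + \tfrac12 \sigma_{t_n}\sigma_{t_n}^\top u^{(k)}_{t_n}(\bar x_{t_n})\Big) + \tfrac{\delta}{2}\sigma_{t_n}\sigma_{t_n}^\top \xi_n,
\end{equation*}
where $\xi_n := \grad\log\hat h^{(k)}_{t_n}(\bar x_{t_n}) - u^{(k)}_{t_n}(\bar x_{t_n})$. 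By construction $\E[\xi_n\given\mathcal F_n]=0$, with $\mathcal F_n$ the sigma-algebra generated by all previous draws. Here $\grad\log\hat h^{(k)}$ is interpreted as the pathwise (reparametrized) gradient. Its conditional expectation is exactly $u^{(k)}$, which is how that quantity is defined.

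The regularity conditions I would impose are the following.
\begin{itemize}
    \item The drift $F_t(x)=b_t(x)+\tfrac12\sigma_t\sigma_t^\top u^{(k)}_t(x)$ is Lipschitz in $x$ uniformly on $[0,1]$. In the Gaussian/quadratic case it is affine, and this holds on $[\epsilon,1]$.
    \item The conditional second moment is bounded: $\E[\norm{\sigma_t\sigma_t^\top\xi}^2\given x]\le C(1+\norm{x}^2)$.
\end{itemize}
The singularity of the memoryless schedule at $t=0$ is handled by truncating to $[\epsilon,1]$, or by assuming $\eta_t^2$ is integrable with the bounds weighted accordingly.

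Let $e_n=\bar x_{t_n}-\tilde x_{t_n}$. Subtracting the Euler-type expansion of the plug-in flow \eqref{eq:plugin-flow} gives
\begin{equation*}
    e_{n+1} = e_n + \delta\big(F_{t_n}(\bar x_{t_n}) - F_{t_n}(\tilde x_{t_n})\big) + \delta M_n + R_n,
\end{equation*}
where $M_n=\tfrac12\sigma\sigma^\top\xi_n$ and $R_n=O(\delta^2)$ is the local truncation error of the ODE. Summing and applying discrete Gronwall yields
\begin{equation*}
    \max_n\norm{e_n}\le e^{L}\Big(\max_m\Big\|\delta\sum_{j<m}M_j\Big\| + O(\delta)\Big).
\end{equation*}
The key step is bounding the martingale term with Doob's $L^2$ maximal inequality:
\begin{equation*}
    \E\max_m\Big\|\delta\sum_{j<m}M_j\Big\|^2 \le 4\delta^2\sum_j \E\norm{M_j}^2 \le C'\delta,
\end{equation*}
using the moment bound together with a preliminary a priori bound $\sup_n\E\norm{\bar x_{t_n}}^2<\infty$. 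That a priori bound itself follows from a Gronwall argument on second moments using linear growth. So the martingale term is $O_p(\sqrt\delta)$. Interpolating between grid points (piecewise-linearly or piecewise-constantly) adds only $O(\delta)$ plus one increment $\delta\norm{M_n}$. The maximum of these increments is controlled by the same $L^2$ bound, since $\E\max_n\delta^2\norm{M_n}^2\le\delta^2\sum_n\E\norm{M_n}^2=O(\delta)$. Hence $\sup_t\norm{\bar x_t-\tilde x_t}\convp 0$.

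I expect the main obstacle to be the moment and regularity control of $\xi_n$. The plug-in term is a log of an average of exponentials, so its gradient is a self-normalized weighted average of $\lambda\grad r(X_1^{(i)})\,\partial X_1^{(i)}/\partial x$. Its variance is bounded by a polynomial moment of the particles rather than by exponential weights, because the weights lie in $[0,1]$. For quadratic $r$ and Gaussian posteriors this gives the required $C(1+\norm{x}^2)$ bound. Near $t=0$, however, $\eta_t^2\sim 2/t$ blows up. I would first try to show that $\delta\sum_j\eta_{t_j}^4$ still vanishes, perhaps after taking the first grid point at order $\delta$. If that fails, I would state the result on $[\epsilon,1]$ and let $\epsilon\to0$ separately, using continuity of both flows at the origin.
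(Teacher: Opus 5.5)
Your proposal is correct and is essentially the paper's argument for \Cref{prop:plugin-flow}. Both split the plug-in gradient into $u_t^{(k)}$ plus a martingale-difference residual, close with Gr\"onwall, and kill the residual with Doob's $L^2$ maximal inequality and orthogonal increments. The paper works with the continuous-time interpolant and assumes uniform bounds $V_k$ on the residual and $B$ on the drift along the Euler path. Your linear-growth condition plus a priori moment estimate is slightly more general, and unlike the bound $B$ it actually holds for the affine Gaussian drift.

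Your worry about $t=0$ is unnecessary in the Gaussian case. The reparametrized Jacobian $\Sigma^{1/2}\Sigma_t^{-1/2}\Psi_t^{1/2} = t\,\Sigma\Sigma_t^{-1}$ cancels $\eta_t^2 = 2(1-t)/t$, so $\tfrac12\eta_t^2\xi_t$ is bounded in $L^2$ uniformly on $[0,1]$ and no truncation is needed. This matters, because the quantity $\delta\sum_j\eta_{t_j}^4$ you proposed to check actually grows like $1/\delta$.
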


Intuitively, this result means that the Monte Carlo error of the particles in the plug-in estimator vanishes as the step size goes to zero, so the plug-in estimator behaves like its expectation $u_t^{(k)}(x)$ in the limit; note that each $k$ produces a \emph{different} biased plug-in flow as the step size vanishes ($k \to \infty$ recovers the analytic guidance). We defer the formal statement to \Cref{prop:plugin-flow} in \Cref{app:plugin-flow-convergence}, whose proof relies on Gr\"onwall's inequality along with Doob's $L^2$ maximal inequality, and study the bias induced by the plug-in flow directly. We first characterize the plug-in flow with $k = 1$ particle for a Gaussian target under the memoryless schedule; the corresponding result for an arbitrary noise schedule is deferred to \Cref{thm:plugin-flow-gaussian-nonmemoryless} in \Cref{app:plugin-flow-gaussian-nonmemoryless}.

\begin{restatable}[Plug-in flow for Gaussian target]{theorem}{thmPluginGaussian} \label{thm:plugin-flow-gaussian}
    Under the memoryless schedule, the guidance term for the $k = 1$ plug-in flow is given by
    \begin{align*}
        u_t^{(1)}(x) = -2 \lambda t\, \Sigma \Sigma_t^{-1} (\mu_{1 \vert t}(x) - a),
    \end{align*}
    and the terminal distribution of the plug-in flow is $\tilde{x}_1 \sim \Normal(\mu^{(1)}, \Sigma^{(1)})$, where
    \begin{align*}
        \mu^{(1)}     \coloneq \mu - T_{\mathrm{pull}}^{(1)} (\mu - a), & \qquad
        T_{\mathrm{pull}}^{(1)} \coloneq \sqrt{\pi}\, (\lambda \Sigma)^{1/2} \exp(-\lambda \Sigma)\, \erfi\!\left((\lambda \Sigma)^{1/2}\right), \\
        \Sigma^{(1)}                                                    & \coloneq \Sigma \exp(-2 \lambda \Sigma).
    \end{align*}
\end{restatable}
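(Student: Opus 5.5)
The plan is a direct computation in three steps: evaluate the guidance term $u_t^{(1)}$ by reparametrization, observe that the plug-in flow \eqref{eq:plugin-flow} is then a linear ODE, and solve that ODE eigendirection by eigendirection.

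\textbf{Step 1: the guidance term.} With $k=1$ we have $\log \hat h^{(1)}_t(x) = \lambda r(X_1^{(1)})$. Reparametrize the particle as $X_1^{(1)} = \mu_{1\vert t}(x) + \Sigma_{1\vert t}^{1/2} z$ with $z \sim \Normal(0, I_d)$, where $z$ does not depend on $x$. The chain rule then gives
\begin{equation*}
\grad \log \hat h^{(1)}_t(x) = -2\lambda\, J_t^\top \bigl(X_1^{(1)} - a\bigr), \qquad J_t = \Sigma^{1/2}\Sigma_t^{-1/2}\Psi_t^{1/2}.
\end{equation*}
Taking the expectation over $z$ yields $u_t^{(1)}(x) = -2\lambda J_t^\top(\mu_{1\vert t}(x) - a)$.

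All of $\Sigma$, $\Sigma_t$ and $\Psi_t$ are functions of $\Sigma$, so they commute, and I will work in a single eigendirection with eigenvalue $s$. Write $\Sigma_t = (1-t)^2 + t^2 s$, and use the memoryless schedule $\eta_t^2 = 2(1-t)/t$. I claim that $\Psi_t = t^2 s/\Sigma_t$. This function equals $1$ at $t=1$, and a short computation gives
\begin{equation*}
\frac{d}{dt}\log\frac{t^2 s}{\Sigma_t} = \frac{2\Sigma_t - t\dot\Sigma_t}{t\,\Sigma_t} = \frac{2(1-t)}{t\,\Sigma_t} = \frac{\eta_t^2}{\Sigma_t},
\end{equation*}
which matches the definition of $\Psi_t$. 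Hence $J_t = t\,\Sigma\Sigma_t^{-1}$, and $u_t^{(1)}$ has the claimed form.

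\textbf{Step 2: the flow is linear, so the output is Gaussian.} For a Gaussian target the linear-interpolant velocity is affine:
\begin{equation*}
b_t(x) = \mu + \bigl(t\Sigma - (1-t)I_d\bigr)\Sigma_t^{-1}(x - t\mu) = \mu + \tfrac12\dot\Sigma_t\Sigma_t^{-1}(x-\mu_t).
\end{equation*}
The term $\tfrac12\sigma_t\sigma_t^\top u_t^{(1)} = \tfrac{1-t}{t}u_t^{(1)}$ is also affine in $x$. So the plug-in flow is a linear ODE started from the Gaussian $\tilde x_0 \sim \Normal(0, I_d)$, and $\tilde x_1$ is Gaussian. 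It remains to track its mean and covariance.

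In one eigendirection, the fluctuation about the mean evolves with rate
\begin{equation*}
A_t = \frac{\dot\Sigma_t}{2\Sigma_t} - 2\lambda s^2\,\frac{t(1-t)}{\Sigma_t^2}.
\end{equation*}
The first term integrates from $0$ to $1$ to give an unguided variance of $s$. For the second term, the same computation as above gives the identity
\begin{equation*}
\frac{d}{dt}\,\frac{t^2}{\Sigma_t} = \frac{2t(1-t)}{\Sigma_t^2}.
\end{equation*}
It follows that $\int_0^1 t(1-t)/\Sigma_t^2\,dt = 1/(2s)$. The terminal variance is therefore $s\,e^{-2\lambda s}$, i.e.\ $\Sigma^{(1)} = \Sigma\exp(-2\lambda\Sigma)$.

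\textbf{Step 3: the mean (the main obstacle).} The mean $m_t$ solves the inhomogeneous linear ODE
\begin{equation*}
\dot m_t = \mu + A_t(m_t - t\mu) - 2\lambda s(1-t)\Sigma_t^{-1}\bigl(\mu - a + \text{(correction from } \mu_{1\vert t}(m_t) - \mu)\bigr).
\end{equation*}
I will write $m_t = t\mu + e_t$, which removes the unguided part. I then solve for $e_t$ by variation of constants using the propagator $\Phi(\tau,1) = \exp\!\bigl(\int_\tau^1 A\bigr)$.

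Two facts make the resulting integral tractable:
\begin{itemize}
\item The substitution $w = \Psi_\tau = \tau^2 s/\Sigma_\tau \in [0,1]$ satisfies $dw = 2s\,\tau(1-\tau)/\Sigma_\tau^2\,d\tau$.
\item In terms of $w$, the propagator equals $\sqrt{s/\Sigma_\tau}\,e^{-\lambda s(1-w)}$, and $\sqrt{s/\Sigma_\tau} = \sqrt{w}/\tau$.
\end{itemize}
After these substitutions the forcing integral should reduce to a multiple of $\int_0^1 w^{-1/2}e^{-\lambda s(1-w)}\,dw$. Setting $w = v^2$ turns this into $\sqrt{\pi}\,e^{-\lambda s}\operatorname{erfi}(\sqrt{\lambda s})/\sqrt{\lambda s}$. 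Multiplied by the prefactor $\lambda s$ this gives $T^{(1)}_{\mathrm{pull}}$, and hence $\mu^{(1)} = \mu - T^{(1)}_{\mathrm{pull}}(\mu - a)$.

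I expect the hardest part to be the bookkeeping of the $\tau$-dependent prefactors, checking that they collapse exactly to $w^{-1/2}$ and that the $(x - \mu_t)$ feedback term inside $u_t^{(1)}$ is fully absorbed into $A_t$. If the collapse fails, I will first verify the answer in the scalar case $s = 1$ as a sanity check.
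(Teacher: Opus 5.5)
Your proposal is correct and is essentially the paper's proof. The paper proves the general-schedule version and then sets $\Psi_0 = 0$, but it uses the same ingredients: the reparametrization, the integrating factor $\Sigma_t^{-1/2}\exp(\lambda\Sigma\Psi_t) = \exp(-\int_0^t A_\tau\,d\tau)$ (equivalent to your variation-of-constants propagator), the substitution $M_t = (\lambda\Sigma)^{1/2}\Psi_t^{1/2}$ (your $w = v^2$), and the exponential solution of the covariance ODE. The bookkeeping you were worried about does collapse exactly: $\frac{1-\tau}{\tau\Sigma_\tau}\,d\tau = \frac{dw}{2w}$, so the forcing integral is precisely $\frac12\int_0^1 w^{-1/2}e^{-\lambda s(1-w)}\,dw$. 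This holds provided you drop the ``correction'' term in your displayed mean ODE, since that term is already contained in $A_t$ and would otherwise be counted twice.
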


The proof is in \Cref{app:plugin-flow-gaussian-memoryless-proof} and follows by the reparameterization trick and solving ODEs for the mean and covariance of the plug-in flow in closed form. Since $T_{\mathrm{pull}}^{(1)}$ and $\Sigma^{(1)}$ are matrix functions of $\Sigma$, all operators simultaneously diagonalize. Along an eigendirection $v$ with $\Sigma$ having eigenvalue $\sigma > 0$, let $x \coloneq \lambda \sigma$. The eigenvalues of $T_{\mathrm{pull}}^{(1)}$ and $T_\mathrm{pull}$ along $v$ are
\begin{align*}
    \lambda_v(T_\mathrm{pull}^{(1)}) = \sqrt{\pi x}\, e^{-x}\, \erfi\!\left(\sqrt{x}\right), \qquad
    \lambda_v(T_\mathrm{pull}) = \frac{2x}{1 + 2x}.
\end{align*}
Note that the eigenvalues of $T_\mathrm{pull}^{(1)}$ overshoot 1 around $x \approx 0.854$ before decaying back down to 1, while the eigenvalue of the true $T_{\mathrm{pull}}$ increases to 1 at a rational rate. Therefore, the mean of the plug-in target will overshoot in the direction of the reward maximizer $a$ compared to the analytic tilt. Similarly, the eigenvalues of $\Sigma^{(1)}$ and $\tilde{\Sigma}$ along $v$ are
\begin{align*}
    \lambda_v(\Sigma^{(1)}) = \sigma e^{-2x}, \qquad \lambda_v(\tilde{\Sigma}) = \frac{\sigma}{1 + 2x}.
\end{align*}
This implies that the plug-in estimator leads to exponentially fast contraction in the covariance of the guided samples. This mean overshoot and covariance contraction demonstrates that \emph{significant} reward hacking arises from the $k = 1$ plug-in estimator, even in a simple Gaussian setting (panel (B) of \Cref{fig:gmm-quadratic-main}). Although this result holds for $k = 1$ particle, we prove next that exponentially many particles are required to resolve reward hacking within a single mode. Recall that $W_\infty(\mu, \nu) \geq W_p(\mu, \nu)$ for all $p \geq 1$.

\begin{restatable}[$\infty$-Wasserstein bound]{theorem}{thmWasserstein} \label{thm:wasserstein-bound}
    Assuming that $\rho_1 = \Normal(\mu, \Sigma)$ and $r(x) = -\norm{x - a}_2^2$, let $\tilde{\rho}_1^{(k)}$ denote the terminal distribution of the plug-in flow~\eqref{eq:plugin-flow} with $k$ particles. Then, the density $\tilde{\rho}_1^{(k)}$ is close to the density $\tilde{\rho}_1^{(1)}$ in the $\infty$-Wasserstein distance:
    \begin{align*}
        W_\infty(\tilde{\rho}_1^{(k)}, \tilde{\rho}_1^{(1)})
        \lesssim \sqrt{\log k}.
    \end{align*}
\end{restatable}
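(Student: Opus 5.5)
We plan to couple the two plug-in flows by running them from the same initial point $\tilde{x}_0 = X_0$. Then, via Grönwall, we control the distance between the two trajectories uniformly over starting points. A deterministic coupling whose displacement is bounded by $C\sqrt{\log k}$ for every initial condition immediately gives $W_\infty(\tilde{\rho}_1^{(k)}, \tilde{\rho}_1^{(1)}) \lesssim \sqrt{\log k}$.

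\textbf{Step 1: rewrite the guidance term.} With $r(x) = -\norm{x-a}_2^2$, the plug-in gradient is a self-normalized average
\begin{align*}
\grad \log \hat h^{(k)}_t(x) = \sum_{i=1}^k w_i\, \grad \big(\lambda r(X_1^{(i)})\big), \qquad w_i = \frac{e^{\lambda r(X_1^{(i)})}}{\sum_j e^{\lambda r(X_1^{(j)})}}.
\end{align*}
Here the particles are reparameterized as $X_1^{(i)} = \mu_{1\vert t}(x) + \Sigma_{1\vert t}^{1/2} Z_i$ with $Z_i \sim \Normal(0, I_d)$ i.i.d. The gradient $\grad_x$ then passes through the reparameterization, exactly as in the proof of \Cref{thm:plugin-flow-gaussian}. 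The Jacobian $\grad_x \mu_{1\vert t}$ is the deterministic matrix $J_t = \Sigma^{1/2}\Sigma_t^{-1/2}\Psi_t^{1/2}$, so
\begin{align*}
u_t^{(k)}(x) = -2\lambda J_t^\top\big(\mu_{1\vert t}(x) - a + \Sigma_{1\vert t}^{1/2}\E\big[\textstyle\sum_i w_i Z_i\big]\big).
\end{align*}
For $k=1$ the last expectation vanishes, which recovers $u_t^{(1)}$. Hence
\begin{align*}
u^{(k)}_t(x) - u^{(1)}_t(x) = -2\lambda J_t^\top \Sigma_{1\vert t}^{1/2}\, m_t(x), \qquad m_t(x) = \E\big[\textstyle\sum_i w_i Z_i\big].
\end{align*}

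\textbf{Step 2: bound $m_t$ uniformly in $x$ by $O(\sqrt{\log k})$.} A softmax average of $Z_1,\dots,Z_k$ is a convex combination, so $\norm{\sum_i w_i Z_i}_2 \le \max_i \norm{Z_i}_2$. Therefore
\begin{align*}
\norm{m_t(x)}_2 \le \E\max_{i\le k}\norm{Z_i}_2 \le \sqrt{d} + \sqrt{2\log k},
\end{align*}
using Gaussian concentration of the norm together with a union bound. This holds regardless of $x$ and of the weights. This is the crux of the argument: the bias term is bounded uniformly by $O(\sqrt{\log k})$, with $d$-dependence absorbed into $\lesssim$, or handled by noting that for $k=1$ one could subtract $\E Z_1 = 0$.

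\textbf{Step 3: Grönwall on the coupled difference.} Let $\delta_t = \tilde{x}^{(k)}_t - \tilde{x}^{(1)}_t$. Both flows share the drift $b_t$ plus the affine part of $u^{(1)}_t$, so $\delta_t$ satisfies
\begin{align*}
\dot\delta_t = A_t \delta_t + \tfrac12\eta_t^2\, e_t, \qquad \norm{e_t}_2 \le 2\lambda \norm{J_t^\top \Sigma_{1\vert t}^{1/2}}\,(\sqrt{d} + \sqrt{2\log k}).
\end{align*}
Here $A_t$ is the matrix such that $b_t(x) + \tfrac12\eta_t^2 u^{(1)}_t(x)$ is affine with linear part $A_t$; its explicit form comes from \Cref{thm:plugin-flow-gaussian}. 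There is a subtlety: $u^{(k)}$ is nonlinear in $x$ through $m_t(x)$. We therefore treat $m_t(\tilde{x}^{(k)}_t)$ as a bounded forcing term rather than linearizing it, and solve by variation of constants. This gives
\begin{align*}
\norm{\delta_1}_2 \le \int_0^1 \norm{\Phi_{1,s}}\,\tfrac12\eta_s^2\norm{e_s}_2\,ds,
\end{align*}
where $\Phi$ is the propagator of $A$.

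\textbf{Main obstacle.} The expected hardest part is showing this integral is finite under the memoryless schedule. There $\eta_s^2 = 2(1-s)/s$ blows up at $s = 0$, and $A_t$ has singular coefficients near the endpoints. At $s \to 0$ I would argue that $\Sigma_{1\vert s} \to \Sigma$ while $J_s \to 0$, since $\Psi_s \to 0$ like a power of $s$. The product $\eta_s^2 \norm{J_s}$ should then be integrable, and the closed-form propagator from \Cref{thm:plugin-flow-gaussian}, being a matrix function of $\Sigma$, should be bounded on $[0,1]$. Near $s = 1$, $\Sigma_{1\vert s} \to 0$ and $\eta_s^2 \to 0$, so no difficulty arises there. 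This yields $\norm{\delta_1}_2 \le C(\lambda,\Sigma)\sqrt{\log k}$ almost surely, and hence the claimed $W_\infty$ bound.
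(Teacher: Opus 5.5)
Your route is the paper's: couple both flows from the same $X_0$, isolate $u_t^{(k)}-u_t^{(1)}=-2\lambda J_t\Sigma_{1\vert t}^{1/2}\,\E[\sum_i w_iZ_i]$ by reparameterization (this is \eqref{eq:plugin-multi-bias} in \Cref{lem:plugin-flow-gaussian-multi}), bound it uniformly in $x$, then close with a Gr\"onwall-type estimate and the deterministic coupling.

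Your Step 3 is slightly sharper than the paper's. Because $b_t^{(1)}$ is affine, you can use the exact propagator $\Phi_{1,s}=\Sigma^{1/2}\Sigma_s^{-1/2}\exp(-\lambda\Sigma(I_d-\Psi_s))$. Its norm is at most $\norm{\Sigma^{1/2}\Sigma_s^{-1/2}}_2$ since $\Psi_s\preceq I_d$, which beats the paper's factor $\exp(\int_s^1\norm{A_v}_2\,dv)$. Your endpoint concern also dissolves. Under the memoryless schedule $\Psi_s^{1/2}=s\,\Sigma^{1/2}\Sigma_s^{-1/2}$, so $\eta_s^2J_s=2(1-s)\Sigma\Sigma_s^{-1}$ is bounded on $[0,1]$, and $A_t$ has no singularity at all.

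The one place you fall short is Step 2. Passing through $\norm{\E Y}_2\le\E\norm{Y}_2\le\E\max_i\norm{Z_i}_2$ introduces a $\sqrt d$ that is pure slack: it does not vanish even at $k=1$, where the bias is exactly zero. So you only get $W_\infty\le C(\lambda,\Sigma)(\sqrt d+\sqrt{2\log k})$. This is $\lesssim\sqrt{\log k}$ for $k\ge 2$ only with a dimension-dependent constant, whereas the paper states its constant is dimension-free.

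The missing step is to dualize before taking the maximum:
\[
\norm{\E[\textstyle\sum_iw_iZ_i]}_2=\sup_{\norm{v}_2=1}\E\langle v,\textstyle\sum_iw_iZ_i\rangle\le\sup_{\norm{v}_2=1}\E\max_{i\le k}\langle v,Z_i\rangle\le\sqrt{2\log k}.
\]
This works because for each fixed unit $v$ the projections $\langle v,Z_i\rangle$ are i.i.d.\ $\Normal(0,1)$. With this in place of your Step 2, $\norm{e_t}_2\le 2\lambda\norm{J_t\Sigma_{1\vert t}^{1/2}}_2\sqrt{2\log k}$. The rest of your argument then gives a constant depending only on $\lambda$ and the eigenvalues of $\Sigma$. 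Your aside about subtracting $\E Z_1=0$ does not remove the $\sqrt d$ for $k\ge 2$.
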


The constant in this corollary does not depend on dimension. The proof, given in \Cref{app:wasserstein-proof}, follows from the reparameterization trick, Gaussian concentration inequality, and Gr\"onwall's inequality applied to the coupled ODE trajectories. Thus, increasing $k$ can only logarithmically mitigate the reward hacking from the $k = 1$ plug-in flow (panel (C) of \Cref{fig:gmm-quadratic-main}). In \Cref{app:fmrg-comparison}, we compare flow map reward guidance (FMRG) \citep{huang2026guide} to the plug-in flow and show that their behavior is qualitatively similar.

\subsubsection{Reward scale damping for bias correction} \label{sec:reward-damping}

In this section, we propose a simple practical method to correct the plug-in bias.

\begin{proposition}[Reward damping schedule]
    Assuming an isotropic Gaussian target $\rho_1 = \Normal(\mu, \sigma^2 I_d)$, we can recover the correct guidance by replacing the reward scale $\lambda$ with the \emph{reward damping} schedule
    \begin{lavenderbox}
        \begin{equation} \label{eq:reward-damping}
            \lambda_t \coloneq \frac{\lambda}{1 + 2 \lambda \sigma_{1 \vert t}^2}, \qquad \sigma_{1 \vert t}^2 = \frac{\sigma^2 (1-t)^2}{(1-t)^2 + t^2 \sigma^2}.
        \end{equation}
    \end{lavenderbox}
\end{proposition}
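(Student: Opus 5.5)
The plan is to show that, with $\lambda$ replaced by $\lambda_t$, the $k=1$ plug-in guidance field $u_t^{(1)}$ coincides \emph{pointwise} with the exact guidance field $\grad \log h_t$. Once the two vector fields agree for every $(t,x)$, the plug-in flow \eqref{eq:plugin-flow} and the analytic guided ODE \eqref{eq:guided-ode} are the same ODE from the same initial condition. Their terminal laws then agree, so the plug-in flow lands exactly on the tilt $\tilde{\rho}_1$. In the isotropic case everything is scalar except the mean map, which makes the comparison transparent.

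\textbf{Step 1: closed form for $\Psi_t$ and $\sigma_{1\vert t}^2$.} Under the memoryless schedule $\eta_v^2 = 2(1-v)/v$ with $\Sigma_v = ((1-v)^2 + v^2\sigma^2) I_d$, a direct differentiation gives
\begin{equation*}
\frac{d}{dv}\log\frac{(1-v)^2+v^2\sigma^2}{v^2} = -\frac{2(1-v)}{v\,((1-v)^2+v^2\sigma^2)} = -\eta_v^2\,\Sigma_v^{-1}
\end{equation*}
(as a scalar). Integrating from $t$ to $1$ gives
\begin{equation*}
\Psi_t = \frac{t^2\sigma^2}{(1-t)^2+t^2\sigma^2}\, I_d .
\end{equation*}
Hence $\Sigma_{1\vert t} = \sigma^2(I_d - \Psi_t) = \sigma_{1\vert t}^2 I_d$, with $\sigma_{1\vert t}^2$ exactly as in \eqref{eq:reward-damping}. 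Crucially, $\sigma_{1\vert t}^2$ depends on $t$ but not on $x$.

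\textbf{Step 2: exact guidance.} Write $m(x) \coloneq \mu_{1\vert t}(x)$, which is affine in $x$ with Jacobian $J_t = \Sigma^{1/2}\Sigma_t^{-1/2}\Psi_t^{1/2}$. Also write $s^2 \coloneq \sigma_{1\vert t}^2$. Since $X_1 \given X_t = x \sim \Normal(m(x), s^2 I_d)$, the standard Gaussian integral of $e^{-\lambda\norm{y-a}^2}$ against this law gives
\begin{equation*}
\log h_t(x) = -\frac{\lambda}{1+2\lambda s^2}\norm{m(x)-a}_2^2 + c_t ,
\end{equation*}
where $c_t$ is independent of $x$. Therefore
\begin{equation*}
\grad\log h_t(x) = -\frac{2\lambda}{1+2\lambda s^2}\, J_t^\top\big(m(x)-a\big).
\end{equation*}

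\textbf{Step 3: plug-in guidance with a time-dependent scale.} By the reparameterization used in \Cref{thm:plugin-flow-gaussian}, the single particle is $X_1^{(1)} = m(x) + s Z$ with $Z \sim \Normal(0, I_d)$, and $\log \hat h_t^{(1)}(x) = \lambda_t\, r(m(x)+sZ)$. Its pathwise gradient is
\begin{equation*}
-2\lambda_t J_t^\top\big(m(x)+sZ-a\big),
\end{equation*}
and taking the expectation over $Z$ kills the noise term. This gives $u_t^{(1)}(x) = -2\lambda_t J_t^\top(m(x)-a)$, which agrees with the formula of \Cref{thm:plugin-flow-gaussian} with $\lambda \mapsto \lambda_t$, since $\Sigma\Sigma_t^{-1} t = J_t^\top$ in the isotropic memoryless case. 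Comparing with Step 2, equality holds for all $x$ if and only if $\lambda_t = \lambda/(1+2\lambda s^2)$, which is \eqref{eq:reward-damping}. The factor $\tfrac12\sigma_t\sigma_t^\top$ multiplies both fields identically, so the drifts coincide and the terminal law is $\tilde{\rho}_1$.

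\textbf{Main obstacle.} The main obstacle is conceptual rather than computational: the damping must not depend on $x$, otherwise it could not be implemented as a reward-scale schedule. This is exactly why isotropy, together with the $x$-independence of $\Sigma_{1\vert t}$ established in Step 1, is needed. In the anisotropic case one would instead obtain a matrix-valued damping $(I+2\lambda\Sigma_{1\vert t})^{-1}$, which cannot be absorbed into a scalar $\lambda_t$. A minor technical point is interchanging the gradient and the expectation in Step 3; this is routine by dominated convergence because the integrand is a polynomial in the Gaussian variable $Z$.
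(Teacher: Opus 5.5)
Your proof is correct and is essentially the argument the paper leaves implicit. The paper specializes the exact guidance of \Cref{prop:analytic-guidance-gaussian} to $\Sigma = \sigma^2 I_d$, which turns $(I_d + 2\lambda\Sigma_{1\vert t})^{-1}$ into the scalar $(1+2\lambda\sigma_{1\vert t}^2)^{-1}$; this scalar is then absorbed into the $\lambda$ of the $k=1$ plug-in guidance from \Cref{thm:plugin-flow-gaussian}. You re-derive both formulas (including $\Psi_t = t^2\Sigma\Sigma_t^{-1}$) from scratch rather than citing them, and your restriction to $k=1$ matches the paper's intended setting.
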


In practice, although the target may be non-isotropic with unknown covariance, we propose treating $\sigma > 0$ as a tunable hyperparameter ($\sigma = 0$ recovers constant $\lambda$ and increasing $\sigma$ corresponds to damping the guidance). This method is computationally free compared to a $k$-fold increase in the cost from using $k > 1$ particles. We visualize the effect of damping in \Cref{fig:gmm-quadratic-main} for a symmetric isotropic 2-component Gaussian mixture with quadratic reward $r(x) = -\norm{x - a}_2^2$; experimental details and additional ablations are provided in \Cref{app:gmm-additional}. We verify our theory in \Cref{sec:exp-damping-gaussian}, and in \Cref{sec:experiments-within-mode}, we further demonstrate that these considerations extend to high-dimensional text-to-image settings.

\image{0.95}{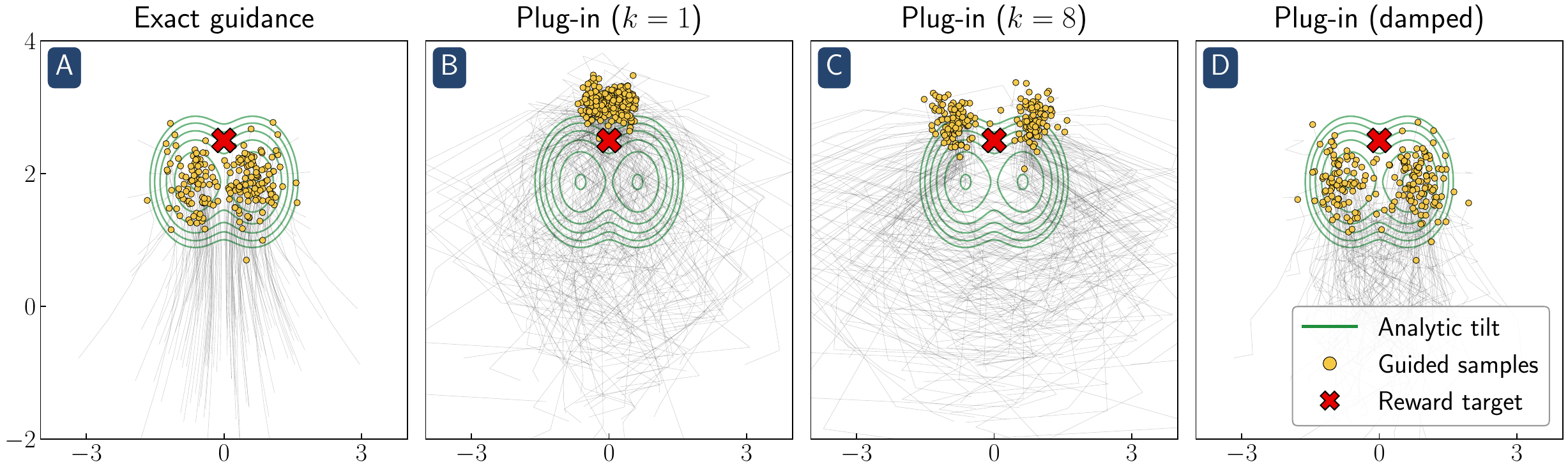}{\textbf{Reward hacking for Gaussian mixtures.} Exact guidance (A) faithfully samples the tilted distribution $\tilde{\rho}_1$. The $k = 1$ plug-in estimator (B) overshoots the mean and shrinks covariance; reward damping (D) corrects the collapse better than $k = 8$ (C) with $8 \times$ less computational cost.}{fig:gmm-quadratic-main}

Two natural alternatives to the reward damping schedule fail to sample from the reward-tilted measure, even for isotropic Gaussians. We show in \Cref{sec:ivf-bias} that non-memoryless schedules do under-correct the mean and inflate the variance, but no practical choice of non-memoryless schedule can match the analytic tilt. It also follows from \Cref{thm:plugin-flow-gaussian} that using a smaller reward scale $\lambda^\prime < \lambda$ similarly fails to recover the analytic tilt.

\subsection{Mode selection: the role of the initial seed} \label{sec:best-of-n}

Throughout this section, we consider a Gaussian mixture target $I_1 \sim \sum_{i=1}^m \pi_i\, \Normal(\cdot \given \mu_i, \Sigma_i)$ under the memoryless noise schedule $\sigma_t = \sqrt{2 (1 - t) / t}\, I_d$. Let $\rho_{it} = \Normal(\cdot \given t \mu_i,\, (1 - t)^2 I_d + t^2 \Sigma_i)$ denote the unguided marginal density of the $i$th component, $\rho_t = \sum_{i=1}^m \pi_i\, \rho_{it}$ the unguided density, and $h_{it}$ the Doob $h$-function for the $i$th component (given by \Cref{prop:analytic-guidance-gaussian} with $\mu_i$, $\Sigma_i$ in place of $\mu$, $\Sigma$).

\subsubsection{Plug-in guidance cannot select modes} \label{sec:plugin-mode-selection}

We now extend the analysis of \Cref{sec:reward-hacking} from Gaussian targets to Gaussian mixtures. The exact Doob $h$-transform performs long-range mode selection, while the $k = 1$ plug-in flow does not. In \Cref{app:ivf-bias-gmm-proof}, we show that the overall analytic guidance term for a Gaussian mixture is given by a weighted sum of the component-wise guidance terms $\sum_{i=1}^m w_{it}\, \grad \log h_{it}$, where $w_{it} \propto \pi_i\, \rho_{it}(x)\, h_{it}(x)$. Evaluating the Gaussian integral for $h_{it}$, one finds
\begin{align*}
    h_{it}(x) \propto \exp\!\left( -\lambda\, (\mu_{i, 1 \vert t}(x) - a)^\top (I_d + 2 \lambda \Sigma_{i, 1 \vert t})^{-1} (\mu_{i, 1 \vert t}(x) - a) \right),
\end{align*}
so a mode $i$ far from the current location can still exert exponentially large guidance if its reward is high. This long-range mode attraction is a special feature of the exact $h$-transform; we now show that the plug-in estimator does not share this property.

\begin{restatable}[Plug-in flow for Gaussian mixture target]{theorem}{thmPluginGMM} \label{thm:plugin-flow-gmm}
    Fixing $t \in [0, 1]$, let $\bar{X}_1(x)$ denote a sample from $p_{1 \vert t}(\cdot \given x)$. Then, the guidance term for the plug-in flow with $k = 1$ particle is given by
    \begin{equation} \label{eq:plugin-gmm-guidance}
        u_t^{(1)}(x) = -2 \lambda\, \E\!\left[\grad \bar{X}_1(x)^\top (\bar{X}_1(x) - a)\right].
    \end{equation}
\end{restatable}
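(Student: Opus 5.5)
With $k = 1$ the plug-in estimator is $\hat h^{(1)}_t(x) = e^{\lambda r(\bar X_1(x))}$. Its logarithm is simply $\lambda r(\bar X_1(x))$, so the log-ratio structure of $\hat h^{(k)}$ disappears. The plan is to differentiate this logarithm pathwise and then take the expectation over the particle randomness. This mirrors the reparameterization argument behind \Cref{thm:plugin-flow-gaussian}.

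\textbf{Step 1: reparameterize the particle.} Fix a measurable map $\bar X_1(x) = \Phi_t(x,\xi)$ with $\xi$ drawn from a law independent of $x$, such that $\Phi_t(x,\xi) \sim p_{1 \vert t}(\cdot \given x)$ and $x \mapsto \Phi_t(x,\xi)$ is differentiable for a.e.\ $\xi$. The gradient $\grad \log \hat h^{(1)}_t$ in the guided ODE is taken with this noise held fixed, as in practice: the particle is produced by a differentiable sampler and the reward is backpropagated through it. For a Gaussian mixture, the posterior $(X_1 \given X_t = x)$ is again a Gaussian mixture. Its component weights are proportional to $\pi_i \rho_{it}(x)$, and each component has mean $\mu_{i,1 \vert t}(x)$ and covariance $\Sigma_{i,1\vert t}$. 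So one admissible choice is the flow of the unguided memoryless SDE \eqref{eq:unguided-diffusion} from $t$ to $1$, started at $x$ and driven by the Brownian increments $\xi$; this is smooth in $x$ under the standing regularity assumptions. The statement is written abstractly in terms of $\grad \bar X_1(x)$, so no particular parameterization needs to be committed to.

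\textbf{Step 2: chain rule.} Since $r(y) = -\norm{y - a}_2^2$ has $\grad r(y) = -2(y - a)$, the chain rule gives, pathwise in $\xi$,
\begin{equation*}
\grad_x \log \hat h^{(1)}_t(x) = \lambda\, \grad_x\bigl[ r(\bar X_1(x)) \bigr] = -2\lambda\, \grad \bar X_1(x)^\top (\bar X_1(x) - a),
\end{equation*}
where $\grad \bar X_1(x)$ is the Jacobian of $x \mapsto \Phi_t(x,\xi)$.

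\textbf{Step 3: take the expectation.} By the definition $u_t^{(1)}(x) = \E[\grad \log \hat h^{(1)}_t(X_t) \given X_t = x]$, taking the expectation over $\xi$ of the identity in Step 2 yields \eqref{eq:plugin-gmm-guidance}. Note that only one gradient is taken and it sits inside the expectation, so no exchange of differentiation and integration is needed. The only requirement is that the right-hand side be integrable. This is where I expect the sole (mild) obstacle: showing that $\E\bigl[\norm{\grad \bar X_1(x)}\,\norm{\bar X_1(x) - a}\bigr] < \infty$. I would get this from Gaussian moment bounds on $\bar X_1$ together with a Grönwall bound on the Jacobian of the SDE flow. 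Such a bound is available for $t$ bounded away from $1$; the mixture score is Lipschitz for each fixed $t < 1$, and at $t = 1$ the claim is trivial.

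Finally, I would sanity-check the formula against the single-Gaussian case. There one can take $\bar X_1(x) = \mu_{1\vert t}(x) + \Sigma_{1 \vert t}^{1/2}\xi$, whose Jacobian is deterministic. Substituting the expression for $\mu_{1 \vert t}$ under the memoryless schedule should reproduce \Cref{thm:plugin-flow-gaussian}.
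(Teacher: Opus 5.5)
Your argument is correct and is essentially the paper's: reparameterize the single particle, apply the chain rule to $\lambda r(\bar X_1(x))$ pathwise, and average over the noise. The paper takes the GLASS ODE rather than the SDE flow as the differentiable sampler, which is what lets it write a variational equation for $\grad \bar X_1(x)$; one minor slip in your integrability aside is that, for the SDE-flow parameterization, a plain Gr\"onwall bound on the Jacobian degenerates as $t \to 0$ (where $\eta_t^2 = 2(1-t)/t$ and the drift carry $1/t$ factors), not as $t \to 1$, which is the trivial endpoint.
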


The proof (\Cref{app:plugin-flow-gmm-proof}) follows from applying the chain rule. In \Cref{app:plugin-flow-gmm-proof}, we additionally provide a variational equation using GLASS flows (\Cref{thm:glass-flow}) which can be solved to obtain the Jacobian term $\grad \bar{X}_1(x)$. In \eqref{eq:plugin-gmm-guidance}, the reward enters only through the term $(\bar{X}_1(x) - a)$. Since the sample from $p_{1 \vert t}(\cdot \given x)$ typically remains in the same mode as $x$, a far-away mode contributes negligibly to $\E[\grad \bar{X}_1(x)^\top (\bar{X}_1(x) - a)]$ regardless of how large its reward is. This is in contrast to the exact $h$-transform, which attracts trajectories toward high-reward modes exponentially through the weights $h_{it}(x)$. In this sense, the $k = 1$ plug-in flow cannot select high-reward modes from a distance.

\subsubsection{Best-of-\texorpdfstring{$n$}{n} can select modes} \label{sec:best-of-n-modes}

We now show through a simple representative example that plug-in guidance with any finite number of particles fundamentally cannot select between well-separated modes with poor reward gradients, while best-of-$n$ sampling compensates for this failure. In the following theorem, we let $\rho_1(x) = \frac{1}{2} \Normal(x \given \mu, \sigma^2) + \frac{1}{2} \Normal(x \given -\mu, \sigma^2)$ for $\mu > 0$, use the reward $r(x) = -R\, \indicator_{x < 0}$ for $R > 0$, and assume the memoryless noise schedule (\Cref{thm:memoryless-schedule}). Define the \emph{correct mode probability} as the probability that the final sample is nonnegative.

\begin{restatable}[Best-of-$n$ for mode selection]{theorem}{thmModeSelection} \label{thm:mode-selection}
    In the above setting, we have:
    \begin{enumerate}
        \item[(i)] \emph{(Analytic guidance)} Under the tilted measure, the correct mode probability is $\tilde{p} = (1 + e^{-\lambda R})^{-1} \to 1$ exponentially as $R \to \infty$.
        \item[(ii)] \emph{(Plug-in guidance)} Under the $k$-particle plug-in flow with any finite $k \geq 1$, the correct mode probability is $\tilde{p}_1^{(k)} = 1/2$.
        \item[(iii)] \emph{(Best-of-$n$)} Running $n$ independent finite-$k$ plug-in trajectories and selecting the highest-reward output yields $\tilde{p}_n^{(k)} = 1 - 2^{-n}$.
    \end{enumerate}
\end{restatable}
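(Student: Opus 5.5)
For part (i), I will compute the tilted measure directly. Since $\tilde{\rho}_1(x) \propto \rho_1(x) e^{-\lambda R \indicator_{x<0}}$, the mass on $[0,\infty)$ is proportional to $\P_{\rho_1}(x \ge 0)$ and the mass on $(-\infty,0)$ to $e^{-\lambda R}\P_{\rho_1}(x<0)$. The mixture is symmetric about the origin, so $\P_{\rho_1}(x\ge 0) = \P_{\rho_1}(x<0) = 1/2$. Normalizing then gives $\tilde p = (1+e^{-\lambda R})^{-1}$, and this tends to $1$ exponentially as $R\to\infty$. To be careful about "correct mode" meaning $x\ge 0$ rather than membership in a component, I will phrase everything in terms of the sign of the sample.

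For part (ii), the key observation is that the plug-in estimator's guidance term is a gradient of $\log \hat h^{(k)}_t$. I will use the reparameterized form underlying \Cref{thm:plugin-flow-gmm}: write $X_1^{(i)} = \bar X_1(x;\xi_i)$ with $\xi_i$ independent of $x$. Then
\[
\hat h^{(k)}_t(x) = \frac1k\sum_i e^{-\lambda R \indicator\{\bar X_1(x;\xi_i)<0\}}
\]
is piecewise constant in $x$. Its gradient is therefore zero almost everywhere, because for fixed noise the map $x\mapsto \bar X_1(x;\xi_i)$ is smooth (it is a flow/GLASS map) and crosses $0$ only on a null set. This gives $u^{(k)}_t(x)=0$, so the plug-in flow coincides with the unguided flow, and its terminal law is $\rho_1$. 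By symmetry, $\tilde p^{(k)}_1 = \P_{\rho_1}(x\ge0)=1/2$. I would also argue more invariantly that $u^{(k)}_t$ is defined via pathwise differentiation (as in \eqref{eq:plugin-gmm-guidance}, where the reward enters only through $\grad r(\bar X_1)$), and $\grad r = 0$ almost everywhere. That argument already appears in the chain-rule formula, with $-2(\bar X_1 - a)$ replaced by $\grad r(\bar X_1)$.

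\textbf{Main obstacle.} The delicate point is justifying that the derivative through the indicator vanishes rather than producing a distributional (delta) term at the boundary $\{\bar X_1 = 0\}$. The delta term is exactly what the exact $h$-transform retains, because differentiating the expectation first smooths the indicator. My plan is to define the guidance, as the paper does, as the expectation of the pathwise gradient $\E[\grad\log\hat h^{(k)}_t]$. I will then note that for each fixed $\xi$ the integrand equals zero except on the set of $x$ with some $\bar X_1(x;\xi_i)=0$. This set has measure zero under the law of $\xi$ for each $x$, because $\bar X_1(x;\cdot)$ has a density.

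For part (iii), the $n$ trajectories are i.i.d. unguided samples from $\rho_1$, each nonnegative with probability $1/2$ independently. Best-of-$n$ picks a nonnegative sample, which has reward $0$, over any negative sample, which has reward $-R$, whenever at least one nonnegative sample exists. The procedure therefore fails only if all $n$ samples are negative, an event of probability $2^{-n}$, so $\tilde p^{(k)}_n = 1-2^{-n}$.
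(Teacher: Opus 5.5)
Your proposal is correct and follows the paper's proof essentially step for step. For (i) you use the same normalizing-constant computation. For (ii) you use the same chain-rule argument that $\grad \log \hat{h}^{(k)}_t = 0$ almost surely, because $\grad r = 0$ Lebesgue-a.e.\ and the law of $(X_1 \given X_t = x)$ has a density; the paper gets symmetry from the oddness of the unguided drift and you get it from the terminal law being $\rho_1$, which is equivalent. For (iii) you use the same all-$n$-negative computation.
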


\begin{figure}
    \centering
    \includegraphics[width=0.9\linewidth]{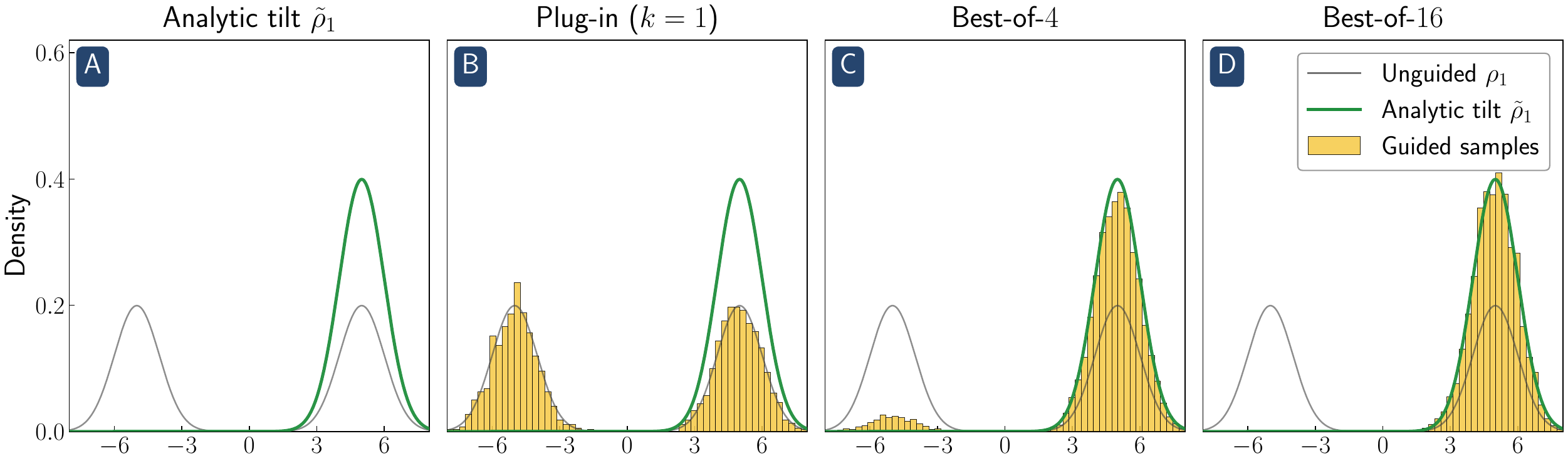}
    \caption{\textbf{Mode selection for Gaussian mixture.} Best-of-$n$ increases the correct mode probability compared to $k = 1$ guidance, better matching the analytic tilt.\label{fig:ms-step-overview}}
\end{figure}

The proof is given in \Cref{app:mode-selection-proof}. Part (ii) holds because the indicator reward has zero gradient Lebesgue-a.e., so the plug-in guidance vanishes and the guided flow coincides with the unguided flow, which is symmetric. \Cref{fig:ms-step-overview} illustrates this empirically on the setting above; we defer a similar demonstration with a Gaussian reward (deviating from the theory) as well as further experiments to \Cref{app:mode-selection-gmm}. In \Cref{sec:experiments-mode-selection}, we demonstrate that this phenomenon arises in a 2D checkerboard setting as well as in a realistic text-to-image setting with a VLM reward.

\section{Experiments} \label{sec:experiments}

All code to reproduce these experiments is available in the following GitHub repository:
\begin{center}
    \url{https://github.com/sanjitdp/reward-guidance}.
\end{center}
All experiments run on a single NVIDIA RTX A6000 or L40S GPU, and each image takes less than 1.5 minutes to generate; see \Cref{sec:exp-compute} for further descriptions of the compute used.

\subsection{Within-mode reward hacking (FLUX.1: text-to-image)} \label{sec:experiments-within-mode}

\begin{figure}[!b]
    \centering
    \includegraphics[width=0.95\linewidth]{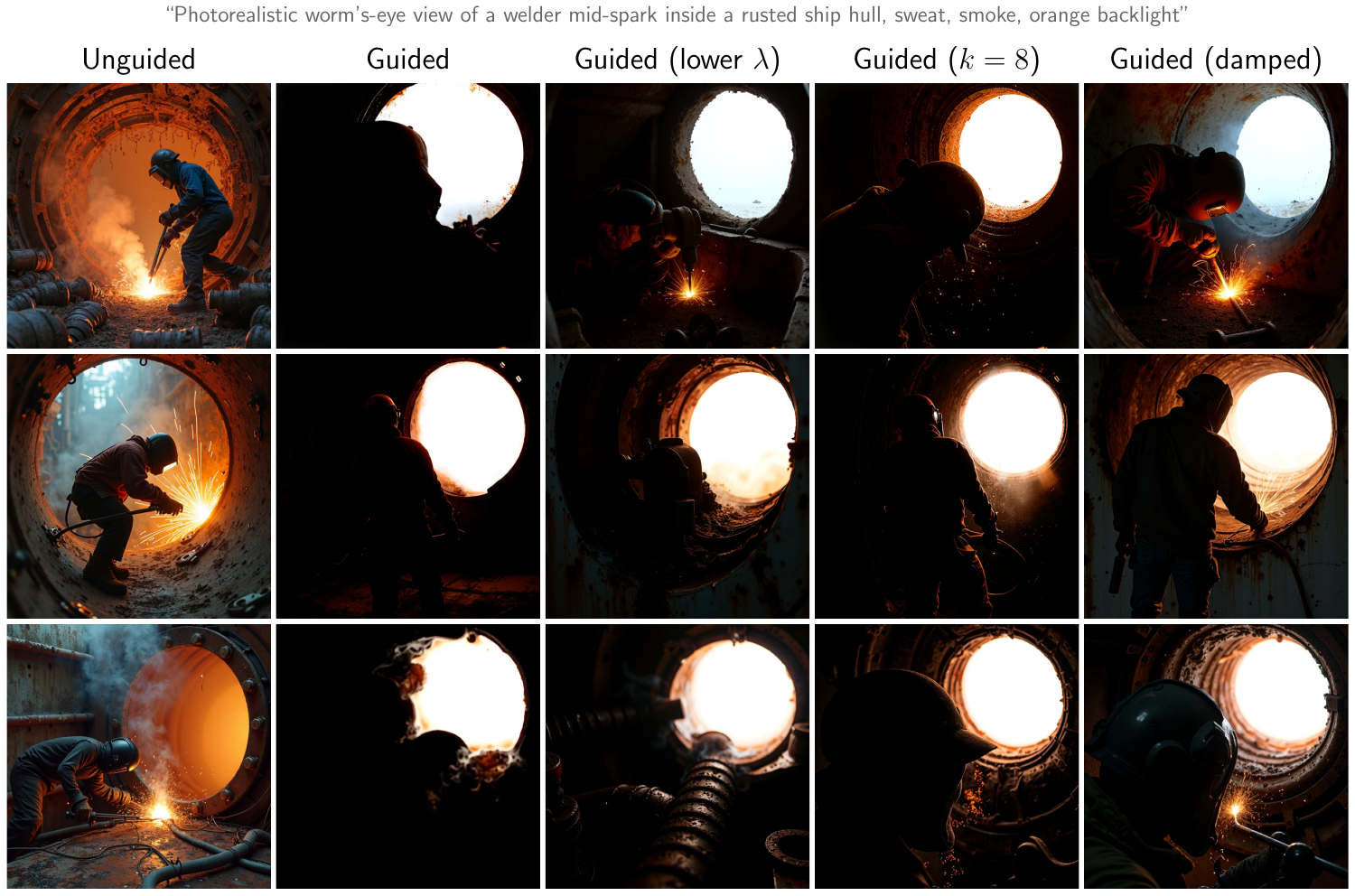}
    \caption{\textbf{Masked intensity reward.} The reward is the mean pixel intensity inside a top-right circular mask minus the mean intensity outside. Naive guidance maximizes the masked-region brightness by removing the welder from the frame; reward damping obtains a high-reward sample including the welder.\label{fig:flux-brightness}}
\end{figure}

\begin{figure}[!b]
    \centering
    \includegraphics[width=0.95\linewidth]{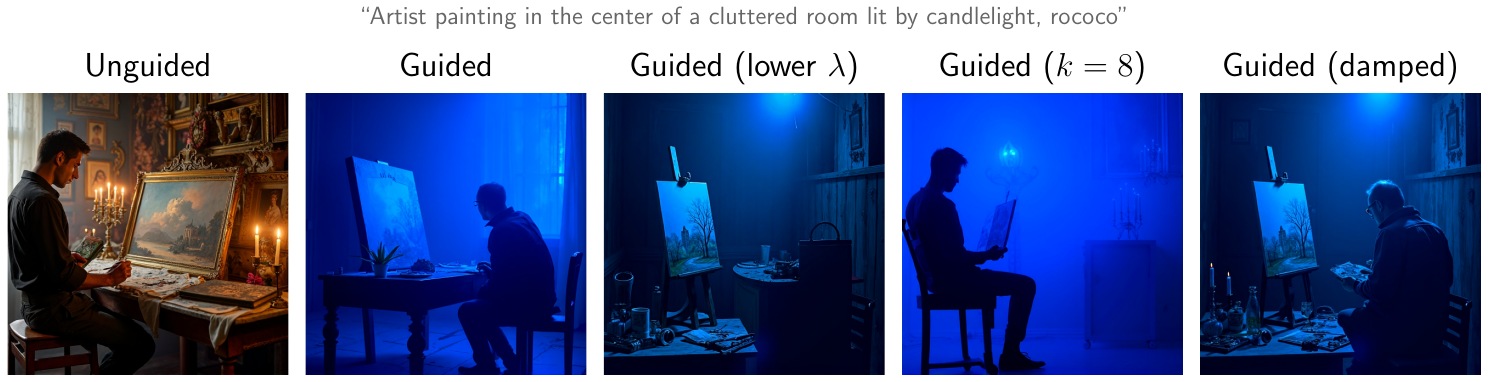}
    \caption{\textbf{Blueness reward (artist).} The reward is the mean blue channel minus the mean of the red and green channels. Naive guidance produces blue images that forget the artist or candles, while reward damping produces a very blue image that retains the artist and shows the warm candlelight.\label{fig:flux-color}}
\end{figure}

\begin{figure}[!b]
    \centering
    \includegraphics[width=0.95\linewidth]{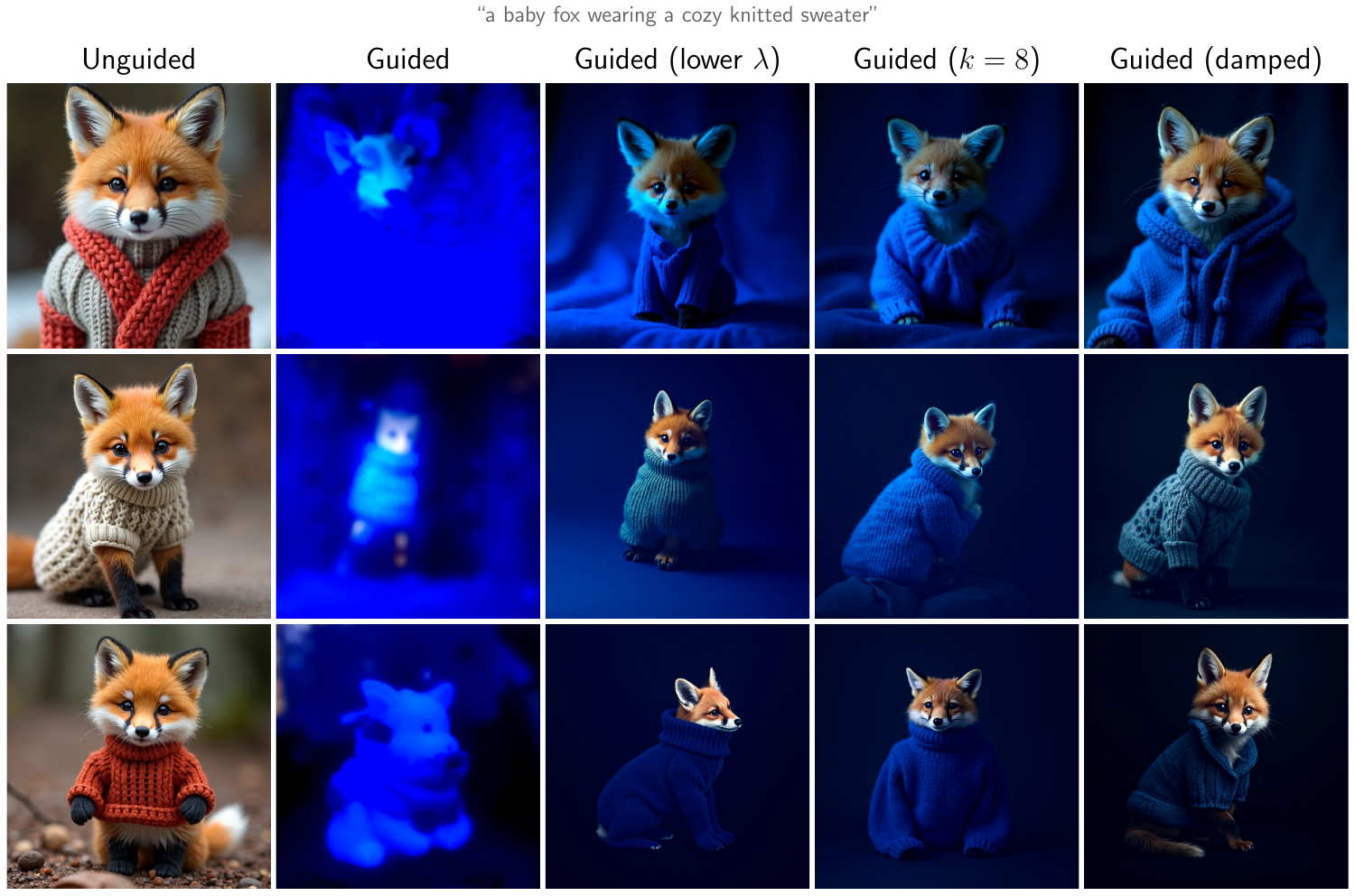}
    \caption{\textbf{Blueness reward (fox).} Naive guidance creates overwhelmingly blue outputs that wash out the fox's orange fur; reward damping recognizes that the fox should remain orange while the sweater and background turn blue.\label{fig:flux-fox}}
\end{figure}

\image{0.95}{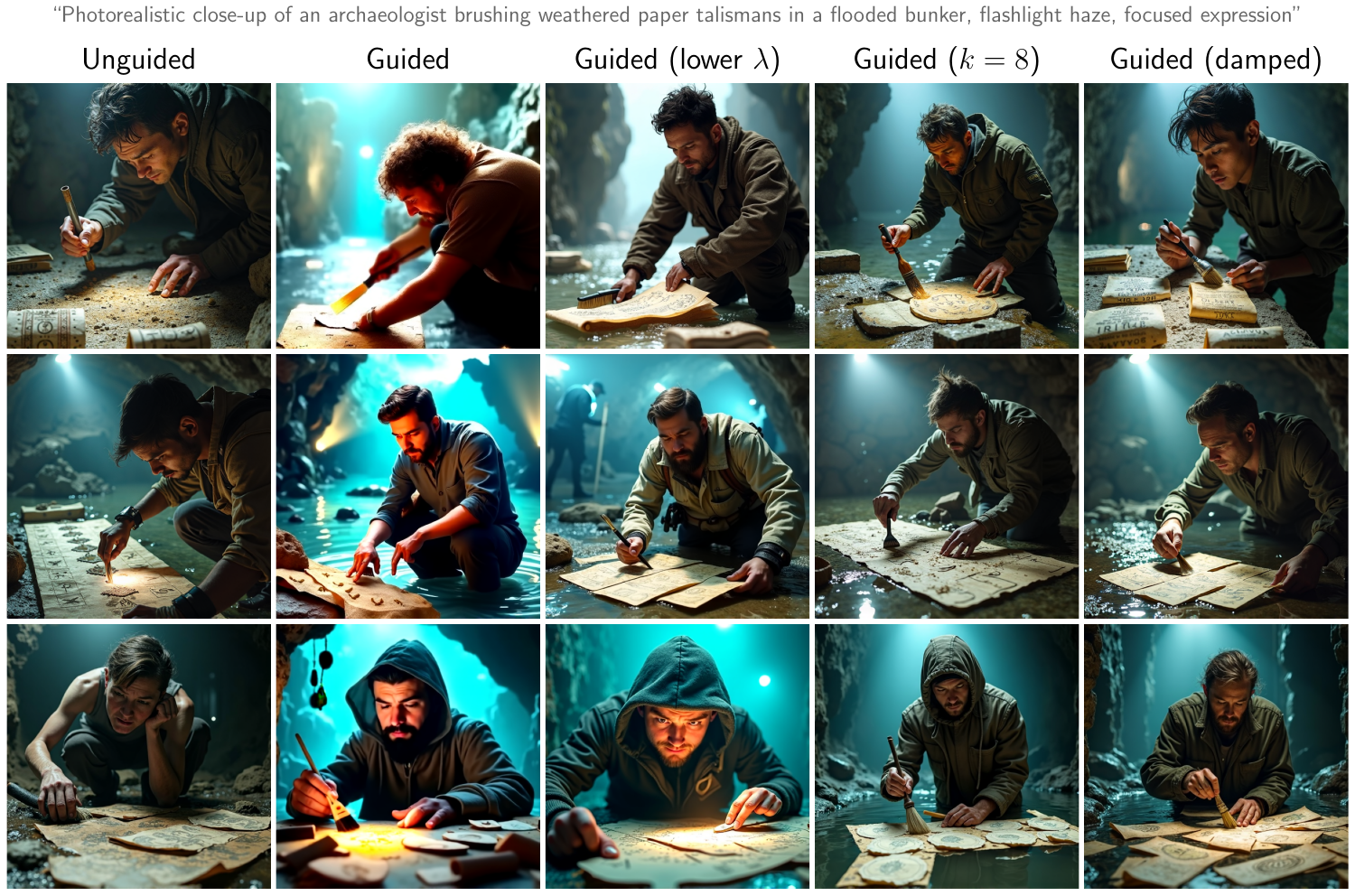}{\textbf{ImageReward (archaeologist).} We guide using ImageReward \citep{xu2023imagereward}, a learned human-preference reward. Both the unguided and naively guided images produce unnatural images or fail to show a brush, while the damped guidance produces a more natural image with a visible brush.}{fig:flux-archaeo}

\image{0.95}{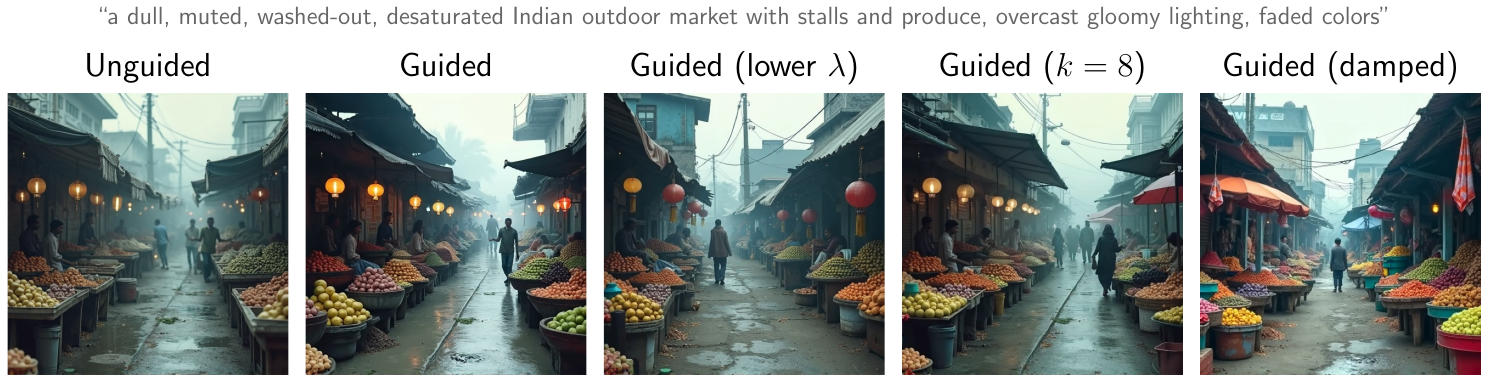}{\textbf{ImageReward (market).} The image is scored against the prompt ``a vibrant Indian outdoor market with colorful stalls and produce.'' Naive guidance brightens the lamps while leaving the rest of the image washed out; reward damping produces a more balanced improvement across brightness and color.}{fig:flux-market}

\image{0.95}{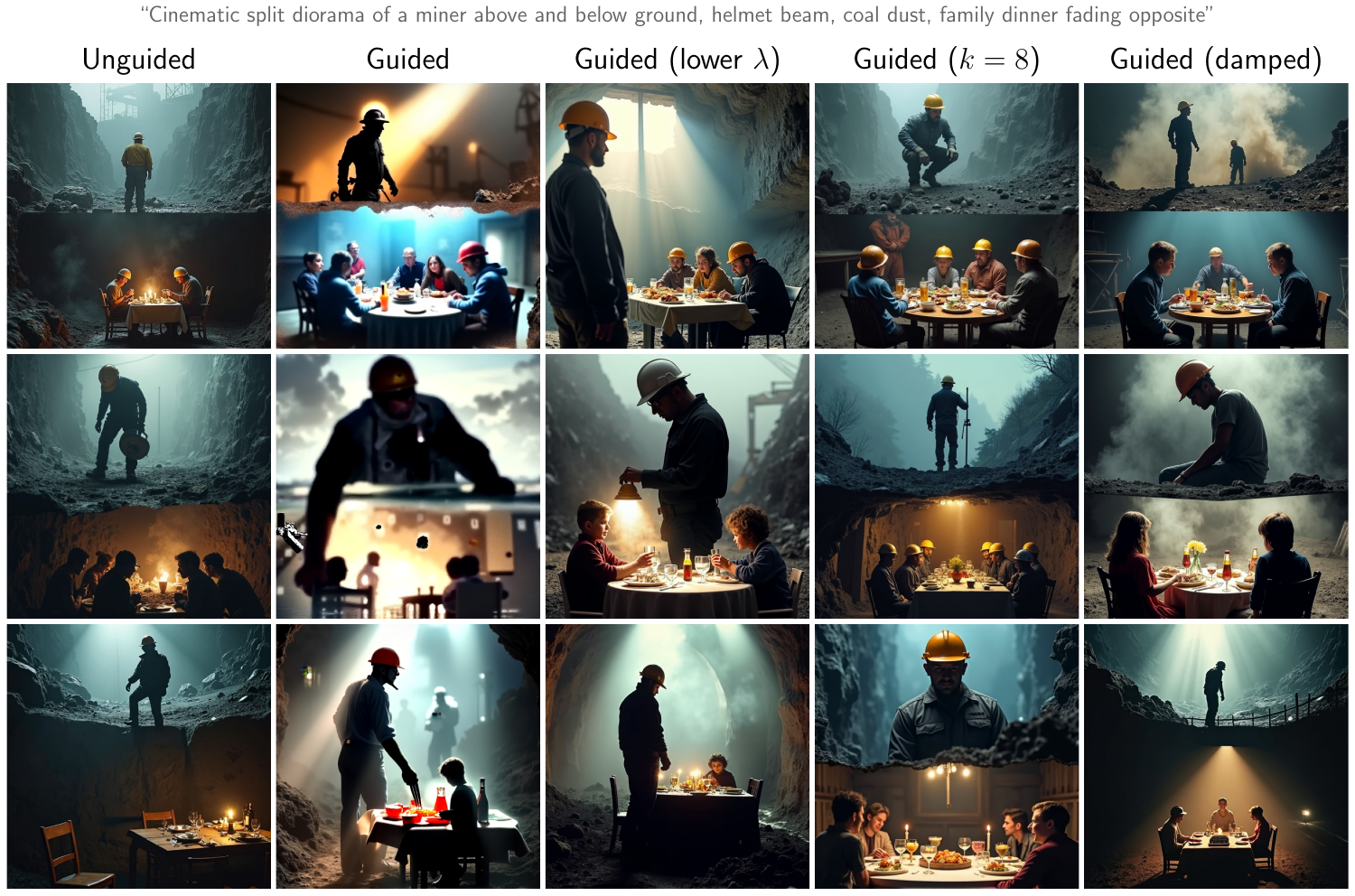}{\textbf{ImageReward (miner).} Naive guidance hacks ImageReward by ignoring the split diorama constraint; reward damping produces dramatic images that still respect the constraint.}{fig:flux-miner}

We apply guidance to FLUX.1-dev \citep{esser2024scaling}, a state-of-the-art 12-billion-parameter text-to-image diffusion transformer using Diamond maps \citep{holderrieth2026diamond} for lookahead; further details are provided in \Cref{sec:exp-flux-details}. Across our experiments, damped guidance (\Cref{sec:reward-hacking}) matches or exceeds the reward hacking mitigation of plug-in with $k = 8$ particles using $8\times$ less compute per step. \Cref{fig:flux-brightness} contains an experiment with a masked-intensity reward, \Cref{fig:flux-color,fig:flux-fox} contain experiments with a blueness reward, and \Cref{fig:flux-archaeo,fig:flux-miner,fig:flux-market} contain experiments with ImageReward \citep{xu2023imagereward}, a learned human-preference reward.

\paragraph{Base model.} All FLUX experiments use the pretrained \texttt{black-forest-labs/FLUX.1-dev} checkpoint at $512 \times 512$ resolution with $28$ inference steps, the standard FLUX guidance scalar $3.5$, and the FLUX flow-matching schedule. The posterior sampling from the law of $(X_1 \given X_t = x)$ is computed with a Diamond map \citep{holderrieth2026diamond} with $5$ inner steps.

\paragraph{Reward functions.} We use three rewards: the blueness reward $r(x) = \bar{x}_{\mathrm{blue}} - \bar{x}_{\mathrm{red}} - \bar{x}_{\mathrm{green}}$ (mean blue-channel excess), the masked-brightness reward (mean intensity inside a top-right circular mask, minus the mean intensity outside), ImageReward \citep{xu2023imagereward} (a learned human-preference scorer applied with the BLIP visual encoder, BLIP text encoder, and reward MLP exposed in-graph for backpropagation).

\paragraph{Results.} For the masked-intensity reward (\Cref{fig:flux-brightness}), naive guidance removes the welder to brighten the masked region. Damping reaches a high reward while keeping the welder in the scene. For the blueness reward, naive guidance produces a flat blue image that drops the artist and the candles (\Cref{fig:flux-color}), and washes the orange out of the fox's fur (\Cref{fig:flux-fox}). Damping produces very blue images that keep the artist visible with the warm candlelight, and that leave the fox's fur orange while turning the sweater and background blue. For ImageReward, naive guidance hacks the reward in three different ways: it produces an unnatural archaeologist scene with no brush (\Cref{fig:flux-archaeo}), brightens the market lamps and leaves the rest of the scene washed out (\Cref{fig:flux-market}), and produces a single dramatic miner scene that ignores the split-diorama constraint (\Cref{fig:flux-miner}). With damping, the archaeologist holds a brush, the market improves in brightness and color evenly, and the miner images stay dramatic while respecting the split-diorama constraint.

\subsection{Mode selection} \label{sec:experiments-mode-selection}

\subsubsection{Checkerboard.} \label{sec:experiments-mode-selection-checkerboard}

\begin{table}[!b]
    \centering
    \caption{\textbf{Checkerboard statistics.} Mean reward and covariance trace for each method ($\lambda = 10$). Uncertainties are $\pm 2$ standard errors of the mean; covariance-trace uncertainties are bootstrapped.\label{tab:cb-stats}}
    \small
    \setlength{\tabcolsep}{4pt}
    \begin{tabular}{lcc@{\hspace{0.9em}}|@{\hspace{0.9em}}lcc}
        \toprule
        Method          & Mean reward           & Cov.\ trace           & Method                                      & Mean reward           & Cov.\ trace           \\
        \midrule
        Analytic tilt   & $0.914_{\,\pm 0.003}$ & $0.457_{\,\pm 0.021}$ & Best-of-$2$                                 & $0.914_{\,\pm 0.003}$ & $0.520_{\,\pm 0.026}$ \\
        Plug-in ($k=1$) & $0.764_{\,\pm 0.004}$ & $1.930_{\,\pm 0.048}$ & Best-of-$4$                                 & $0.978_{\,\pm 0.002}$ & $0.107_{\,\pm 0.010}$ \\
        Plug-in ($k=8$) & $0.804_{\,\pm 0.007}$ & $1.397_{\,\pm 0.068}$ & Best-of-$4$, $\sigma_{\mathrm{damp}} = 0.2$ & $0.920_{\,\pm 0.004}$ & $0.429_{\,\pm 0.022}$ \\
        \bottomrule
    \end{tabular}
\end{table}

\begin{figure}[!tb]
    \centering
    \includegraphics[width=0.95\linewidth]{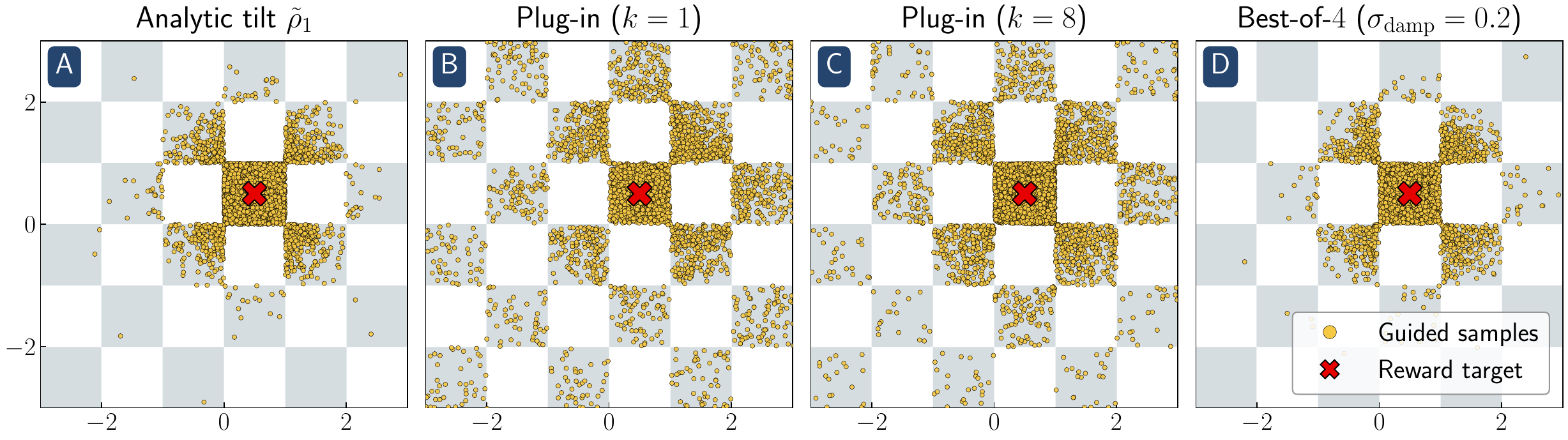}
    \caption{\textbf{Checkerboard guidance.} Unlike plug-in guidance, best-of-$n$ can select modes. With reward damping, best-of-$n$ significantly improves fidelity to the analytic tilt.\label{fig:cb-guidance-failure}}
\end{figure}

We first demonstrate the failure of plug-in guidance for mode selection using a base model sampling uniformly from a 2D checkerboard, where the reward is a Gaussian bump $r(x) = \exp(-\norm{x - c}_2^2 / (2\sigma_r^2))$. \Cref{fig:cb-guidance-failure} and \Cref{tab:cb-stats} show that plug-in estimators fail to concentrate samples near $c$, while a combination of best-of-$n$ and damping improves fidelity to $\tilde{\rho}_1$. We provide further details on the experimental setup and additional results in \Cref{sec:exp-checkerboard-details}.

\subsubsection{FLUX.1: text-to-image.} \label{sec:experiments-mode-selection-flux}

\begin{figure}[!b]
    \centering
    \includegraphics[width=0.98\linewidth]{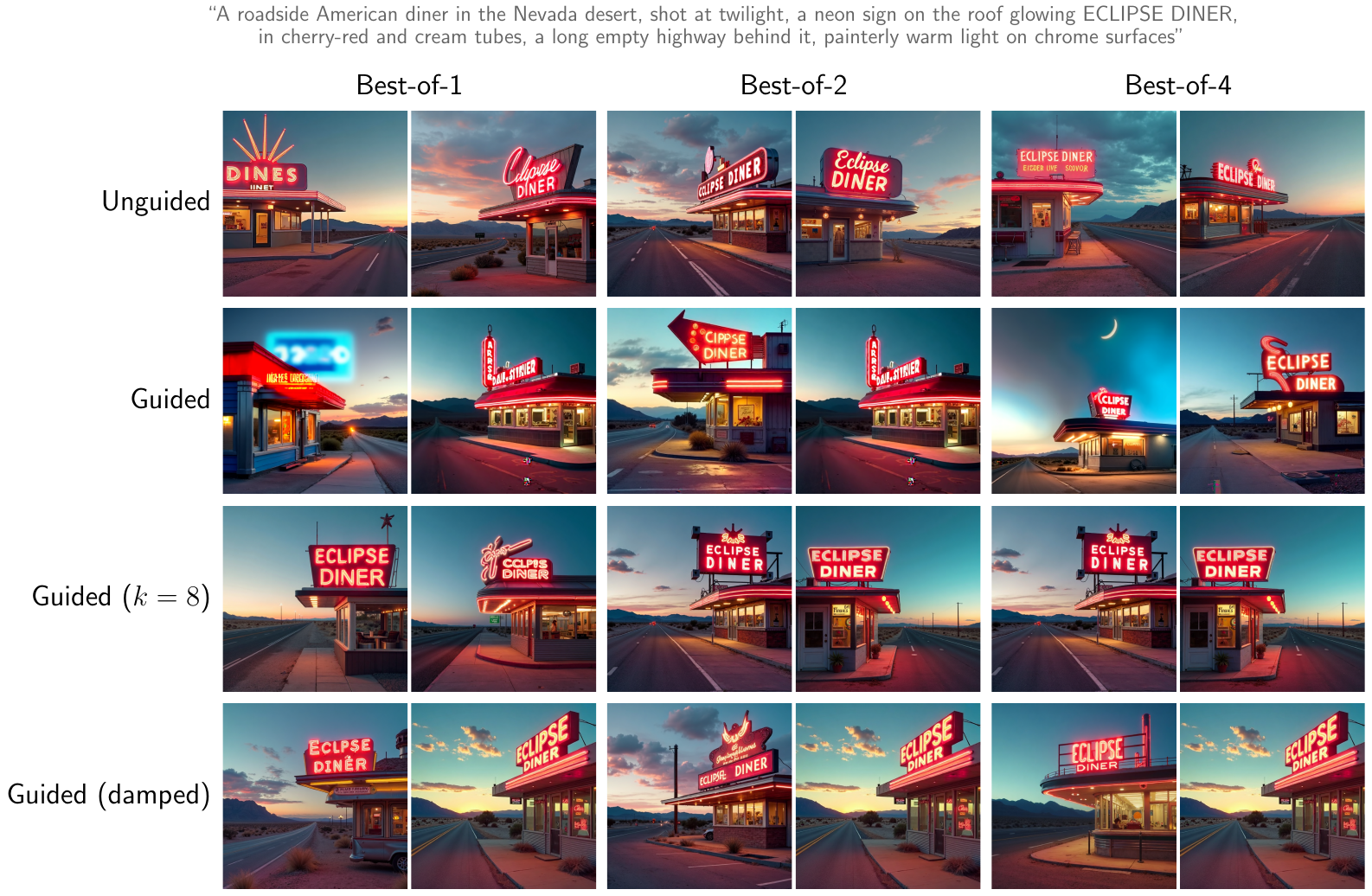}
    \caption{\textbf{FLUX mode selection (ECLIPSE DINER).} The VLM reward is derived from the question ``Does this image clearly show a neon sign with the word `ECLIPSE' as the main readable text?'' Finite-particle guidance hacks the complex reward function, damping slightly improves the reward, and best-of-$n$ substantially improves the reward, confirming the importance of the initial seed for mode selection.\label{fig:ms-flux-diner}}
\end{figure}

\image{0.98}{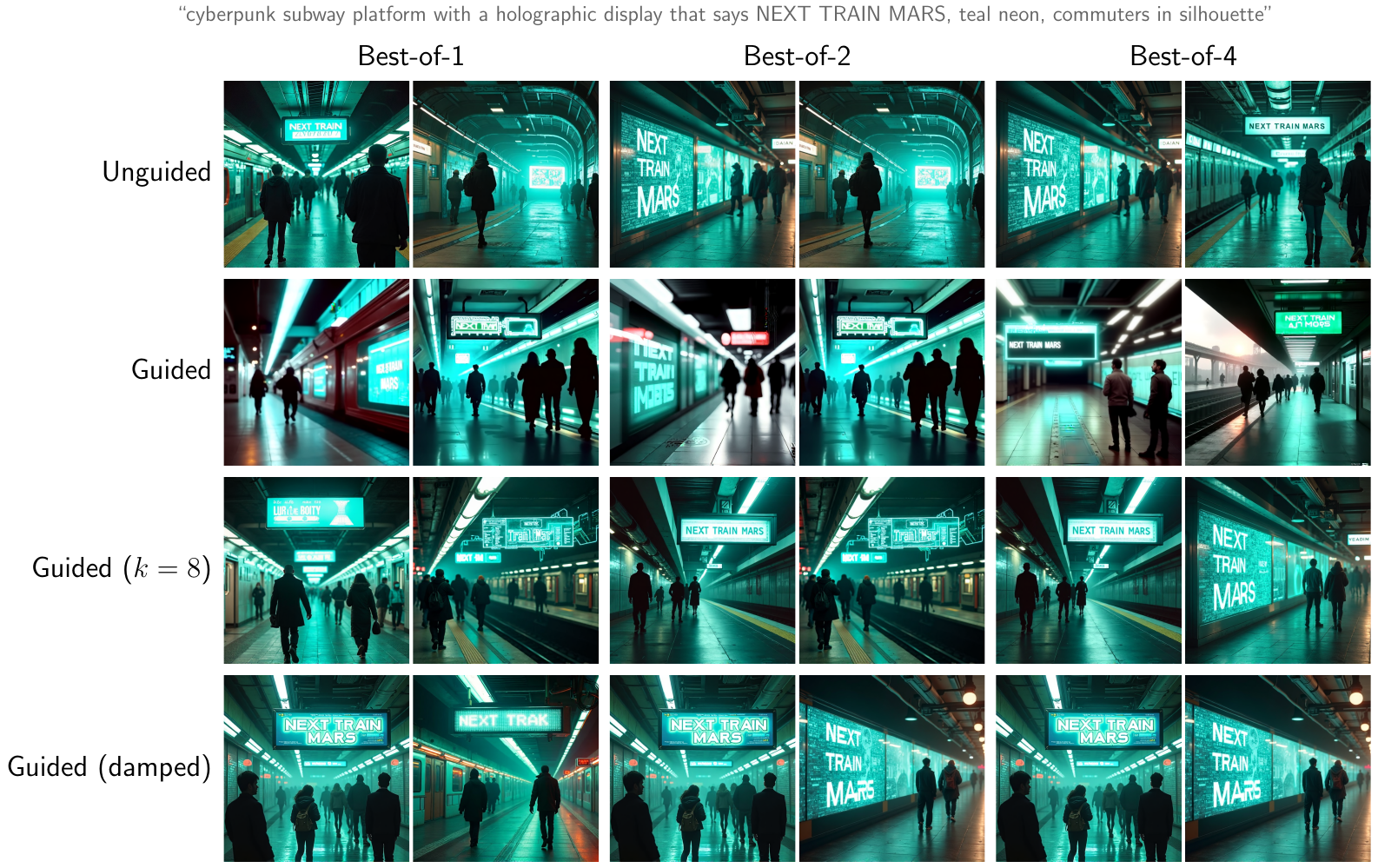}{\textbf{FLUX mode selection (NEXT TRAIN MARS).} The VLM reward is derived from the question ``Does this image clearly show a display with the text `NEXT TRAIN MARS' as the main readable text?'' We see the qualitative pattern of \Cref{fig:ms-flux-diner}: plug-in guidance hacks the reward, and best-of-$n$ substantially improves it.}{fig:ms-flux-subway}

In \Cref{fig:ms-flux-diner,fig:ms-flux-subway}, we use FLUX.1-dev \citep{esser2024scaling} with a VLM reward using Qwen2.5-VL-3B \citep{bai2025qwen25vl} to demonstrate that plug-in guidance cannot perform mode selection in a realistic setting. Here, the reward is $r(x) = \log(p(\mathrm{Yes})) - \log(p(\mathrm{No}))$, where $p(\cdot)$ denotes the VLM next-token probability. We provide numerics in \Cref{app:flux-mode-selection-additional}.

\section{Conclusion}

We demonstrate that reward hacking arises from the plug-in estimator commonly used in reward-guided generative modeling even in simple Gaussian settings, and showed that plug-in guidance cannot select between distant high-reward modes for Gaussian mixtures. We proposed reward damping, a principled time-dependent reward scale that corrects the within-mode bias at no additional cost, and clarified the role of best-of-$n$ in compensating for the mode selection failure. Our experiments in Gaussian mixture, checkerboard, and text-to-image settings demonstrate the limitations of guidance and showcase the usefulness of our methods for overcoming these limitations.

\paragraph{Limitations and future work.}
Our closed-form theoretical results are derived for Gaussian and Gaussian-mixture targets with quadratic rewards under the memoryless noise schedule, and the reward damping schedule itself is obtained from the Gaussian analysis; our empirical study is limited to a 2D checkerboard and FLUX.1 text-to-image generation. Several directions remain open: an analytic study of plug-in bias for non-Gaussian targets and non-quadratic rewards, an extension of our analysis to broader reward-tilted-sampling problems in reinforcement learning beyond diffusion, and practical applications of reward damping to settings such as preference fine-tuning, controlled molecular design, and constrained protein generation.

\section*{Acknowledgements}

We thank Aaditya Ramdas, Sivaraman Balakrishnan, and Arun Kuchibhotla for providing helpful feedback on earlier versions of this work. We also thank Jerry Huang and Peter Holderrieth for useful discussions about text-to-image guidance experiments.

\bibliographystyle{unsrtnat}
\bibliography{bibliography}

\newpage
\appendix
\crefalias{section}{appendix}
\section{Further background} \label{app:further-background}

\subsection{Forward SDE matches the probability flow time-marginals} \label{app:sde-matches-flow}

\begin{proposition}[Time-marginals of the forward SDE]
    The solution of the forward SDE \eqref{eq:unguided-diffusion} has the same time-marginal distribution as the probability flow ODE $\dot{x}_t = b_t(x_t)$.
\end{proposition}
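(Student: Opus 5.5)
The plan is to compare the Fokker--Planck equations of the two processes. First, the time-marginals $\rho_t$ of the linear interpolant $x_t = I_t$ satisfy the continuity equation
\begin{equation*}
    \partial_t \rho_t + \grad \cdot (\rho_t\, b_t) = 0,
\end{equation*}
with $b_t(x) = \E[\dot I_t \given I_t = x]$. I will justify this in weak form. For a test function $\varphi \in C_c^\infty(\R^d)$,
\begin{equation*}
    \frac{d}{dt}\E[\varphi(I_t)] = \E[\grad\varphi(I_t)^\top \dot I_t] = \E[\grad \varphi(I_t)^\top b_t(I_t)],
\end{equation*}
where the last equality is the tower property conditioning on $I_t$. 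Because $I_0$ is Gaussian, $\rho_t$ is smooth and strictly positive for $t<1$, so $\grad \log \rho_t$ is well defined. The probability flow ODE $\dot x_t = b_t(x_t)$ started from $x_0 \sim \Normal(0,I_d) = \rho_0$ then transports $\rho_0$ along this same continuity equation. Uniqueness of its solution, given regularity of $b_t$, gives $\mathrm{Law}(x_t) = \rho_t$.

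Next, I will write the Fokker--Planck equation for the SDE \eqref{eq:unguided-diffusion} with $D_t \coloneq \sigma_t\sigma_t^\top$, using the ansatz that its marginal is $\rho_t$. The drift contributes $-\grad\cdot\bigl(\rho_t (b_t + \tfrac12 D_t \grad\log\rho_t)\bigr)$, and the diffusion contributes $\tfrac12 \grad\cdot\grad\cdot(D_t \rho_t)$. Since $D_t$ does not depend on $x$,
\begin{equation*}
    \tfrac12 \grad\cdot\grad\cdot(D_t\rho_t) = \tfrac12\grad\cdot(D_t \grad\rho_t),
\end{equation*}
and $\rho_t \grad\log\rho_t = \grad\rho_t$. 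The score correction in the drift therefore exactly cancels the diffusion term, leaving $\partial_t \rho_t = -\grad\cdot(\rho_t b_t)$. This is the same continuity equation as before. Hence $\rho_t$ solves the SDE's Fokker--Planck equation with the correct initial condition $\rho_0$.

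Finally, I will conclude by uniqueness. The law of $X_t$ solves the same linear Fokker--Planck equation from $\rho_0$, so it coincides with $\rho_t$.

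The main obstacle is this uniqueness step, together with regularity near $t=1$. There $\rho_1$ may lack a density, and under the memoryless schedule $\sigma_t$ blows up as $t\to 0$. I would handle this by proving the identity on $[\epsilon, 1-\epsilon]$, where $\rho_t$ is a Gaussian convolution and $b_t$ and $\grad\log\rho_t$ are locally Lipschitz. On that interval, uniqueness of linear Fokker--Planck solutions holds, for instance via superposition or standard well-posedness for locally Lipschitz drifts with linear growth. I would then pass to the limits $\epsilon\to0$ and $t\to1$ by weak continuity of the marginals. I expect to state these regularity conditions as standing assumptions rather than verify them in full generality.
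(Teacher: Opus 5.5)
Your proposal is correct and is essentially the paper's argument. The paper writes the Fokker--Planck equation of \eqref{eq:unguided-diffusion}, uses $\tfrac12\rho_t\,\sigma_t\sigma_t^\top\grad\log\rho_t = \tfrac12\grad\cdot(\rho_t\,\sigma_t\sigma_t^\top)$ to cancel the diffusion term, and matches the resulting $\partial_t\rho_t = -\grad\cdot(\rho_t b_t)$ with the continuity equation of the probability flow, without discussing uniqueness or endpoint regularity.

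Your extra rigor goes beyond the paper, but one step needs more than you give it. For the memoryless schedule, the $\epsilon\to0$ passage does not follow from weak continuity alone, because uniqueness on $[\epsilon,1-\epsilon]$ already presupposes that the two laws agree at time $\epsilon$. You need a stability estimate that is uniform in $\epsilon$. For example, for compactly supported $\rho_1$ the memoryless SDE drift has Jacobian $\approx -\tfrac1t I_d$ near $t=0$, so a synchronous coupling shows that the law at time $t$ forgets the law at time $\epsilon$.
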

\begin{proof}
    The Fokker--Planck equation associated with the SDE \eqref{eq:unguided-diffusion} is
    \begin{align*}
        \partial_t \rho_t = -\grad \cdot \left( \rho_t b_t + \frac{1}{2} \grad \cdot (\rho_t\, \sigma_t \sigma_t^\top) \right) + \frac{1}{2} \grad \cdot (\grad \cdot (\rho_t\, \sigma_t \sigma_t^\top)) = -\grad \cdot (\rho_t b_t),
    \end{align*}
    which matches the continuity equation for the probability flow ODE.
\end{proof}

\subsection{Reward-tilted measure as a KL-regularized variational problem} \label{app:variational-tilt}

\begin{proposition}[Variational characterization of the reward tilt]
    The reward-tilted measure \eqref{eq:reward-tilted-measure} solves the KL-regularized variational problem
    \begin{align*}
        \tilde{\rho}_1 = \argmax_\rho\, \set*{\E_{X \sim \rho}[r(X)] - \frac{1}{\lambda}\, \KL(\rho \compare \rho_1)}.
    \end{align*}
\end{proposition}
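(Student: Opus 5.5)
The plan is to prove this with the Gibbs variational principle: write the objective as a single KL divergence to the tilted measure, up to a constant. Let $Z \coloneq \int \rho_1(x)\, e^{\lambda r(x)}\, dx$, assumed finite and positive so that $\tilde{\rho}_1 = \rho_1 e^{\lambda r}/Z$ is a probability density. Restrict to $\rho \ll \rho_1$; otherwise $\KL(\rho \compare \rho_1) = \infty$ and the objective equals $-\infty$, so such $\rho$ cannot be maximizers. Also restrict to $\rho$ with $\E_\rho[r] < \infty$ (or $\E_\rho[r_+]<\infty$), so that the objective is well defined.

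For such $\rho$, expand the divergence from $\rho$ to $\tilde{\rho}_1$:
\begin{align*}
    \KL(\rho \compare \tilde{\rho}_1) = \int \rho \log \frac{\rho}{\rho_1} - \lambda \int \rho\, r + \log Z = \KL(\rho \compare \rho_1) - \lambda\, \E_\rho[r] + \log Z .
\end{align*}
Dividing by $-\lambda$ and rearranging gives the identity
\begin{align*}
    \E_{X\sim\rho}[r(X)] - \frac{1}{\lambda}\KL(\rho \compare \rho_1) = \frac{1}{\lambda}\log Z - \frac{1}{\lambda}\KL(\rho \compare \tilde{\rho}_1).
\end{align*}

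Since $\KL(\rho \compare \tilde{\rho}_1) \ge 0$, with equality if and only if $\rho = \tilde{\rho}_1$ a.e. (Gibbs' inequality), the objective is bounded above by $\lambda^{-1}\log Z$. This bound is attained exactly at $\rho = \tilde{\rho}_1$, so $\tilde{\rho}_1$ is the unique maximizer.

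The only delicate point is the integrability bookkeeping: the expansion of $\KL$ must not involve $\infty - \infty$. This is handled by requiring $\E_{\tilde{\rho}_1}[|r|]<\infty$ (or $r$ bounded above) and treating the remaining $\rho$ as giving value $-\infty$, which is the standard convention. Everything else is a direct calculation.
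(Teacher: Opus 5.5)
Your proof is correct, and it takes a different route from the paper. The paper rewrites the objective as $\frac{1}{\lambda}\E_{X\sim\rho_1}\bigl[f(X)(\lambda r(X) - \log f(X))\bigr]$ in the density ratio $f = \rho/\rho_1$. It then maximizes pointwise under the constraint $\E_{\rho_1}[f] = 1$ with a Lagrange multiplier, which yields $f \propto e^{\lambda r}$.

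That is a first-order (stationarity) argument. Its advantage is that it \emph{derives} the form of the optimizer rather than guessing it. But to conclude that the stationary point is the global, unique maximizer, one must add two things the outline leaves implicit: strict concavity of $f \mapsto f(\lambda r - \log f) - \nu f$ (second derivative $-1/f$), and a Lagrangian sufficiency step. The outline also says nothing about integrability.

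Your Gibbs/Donsker--Varadhan identity $\E_\rho[r] - \frac{1}{\lambda}\KL(\rho \compare \rho_1) = \frac{1}{\lambda}\log Z - \frac{1}{\lambda}\KL(\rho \compare \tilde{\rho}_1)$ instead gives a global certificate in one line. Optimality, uniqueness (a.e.), and the optimal value $\frac{1}{\lambda}\log Z$ all follow from nonnegativity of KL, with no calculus of variations.

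One small point in your bookkeeping: restricting to $\rho$ with $\E_\rho[r_+] < \infty$ is not just a convention. A $\rho$ with $\KL(\rho \compare \rho_1) < \infty$ but $\E_\rho[r_+] = \infty$ would have objective $+\infty$ and would break the claim, so you should say why such $\rho$ cannot exist. Apply the same computation to the bounded functions $g_n = \min(\lambda r_+, n)$ and let $n \to \infty$ by monotone convergence. This gives $\lambda \E_\rho[r_+] \le \KL(\rho \compare \rho_1) + \log \E_{\rho_1}[e^{\lambda r_+}] \le \KL(\rho \compare \rho_1) + \log(1 + Z)$. So finite KL already forces $\E_\rho[r_+] < \infty$ whenever $Z < \infty$.
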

\begin{proof}[Proof outline]
    Writing the objective as a single integral against $\rho_1$,
    \begin{align*}
        \E_{X \sim \rho}[r(X)] - \frac{1}{\lambda}\, \KL(\rho \compare \rho_1)
        = \frac{1}{\lambda}\, \E_{X \sim \rho_1}\!\left[ \frac{\rho(X)}{\rho_1(X)} \left( \lambda r(X) - \log \frac{\rho(X)}{\rho_1(X)} \right) \right].
    \end{align*}
    Pointwise maximization in the density ratio $f(x) = \rho(x) / \rho_1(x)$ subject to $\E_{X \sim \rho_1}[f(X)] = 1$ yields $f(x) \propto e^{\lambda r(x)}$ via a Lagrange multiplier, so the optimal $\rho$ is exactly $\tilde{\rho}_1(x) \propto \rho_1(x)\, e^{\lambda r(x)}$.
\end{proof}

\subsection{Connections to stochastic optimal control} \label{app:stochastic-optimal-control}

The Doob $h$-transform approach to reward guidance can also be interpreted through the lens of stochastic optimal control. We start with an uncontrolled diffusion
\begin{align*}
    dX_t = \mu_t(X_t)\, dt + \sigma_t\, dB_t
\end{align*}
where $\mu_t(x) \coloneq b_t(x) + \frac{1}{2} (\sigma_t \sigma_t^\top) \grad \log \rho_t(x)$ and define a controlled process under a new measure $Q_u$ by
\begin{align*}
    dX_t = (\mu_t(X_t) + \sigma_t u_t(X_t))\, dt + \sigma_t\, d\tilde{B}_t.
\end{align*}
Now, the goal is to maximize the expected reward while minimizing the control cost, over all admissible controls $u$:
\begin{align*}
    \max_u\, \E_{Q_u}\!\left[ r(X_1) - \frac{1}{2 \lambda} \int_0^1 \norm{u_t(X_t)}_2^2\, dt \right].
\end{align*}
We then define the stochastic value function
\begin{align*}
    V_t(x) = \sup_u\, \E_{Q_u}\!\left[ r(X_1) - \frac{1}{2 \lambda} \int_t^1 \norm{u_s(X_s)}_2^2\, ds \givenlarge X_t = x \right],
\end{align*}
noting that it must satisfy the Hamilton--Jacobi--Bellman (HJB) equation
\begin{align*}
    \partial_t V_t(x) + \sup_u\, \set*{ \left( \mu_t(x) + \sigma_t u \right)^\top \grad V_t(x) + \frac{1}{2} \tr((\sigma_t \sigma_t^\top)\, \grad^2 V_t(x)) - \frac{1}{2 \lambda} \norm{u}_2^2 } = 0
\end{align*}
with the boundary condition $V_1(x) = r(x)$. Then, stationarity implies that the optimal control is given by $u_t^*(x) = \lambda\, \sigma_t^\top \grad V_t(x)$, and substituting back into the HJB equation yields the nonlinear PDE
\begin{align*}
    \partial_t V_t(x) + \mu_t(x)^\top \grad V_t(x) + \frac{1}{2} \tr((\sigma_t \sigma_t^\top)\, \grad^2 V_t(x)) + \frac{\lambda}{2} \norm{\sigma_t^\top \grad V_t(x)}_2^2 = 0.
\end{align*}
The only source of nonlinearity in this equation comes from the control cost term $\frac{\lambda}{2} \norm{\sigma_t^\top \grad V_t(x)}_2^2$, so we apply the Cole--Hopf transformation $h_t(x) = e^{\lambda V_t(x)}$ to linearize the PDE:
\begin{align*}
    \frac{\partial_t h_t(x)}{\lambda h_t(x)} + \mu_t^\top \left( \frac{\grad h_t(x)}{\lambda h_t(x)} \right) + \frac{\lambda}{2} \norm*{\sigma_t^\top \left( \frac{\grad h_t(x)}{\lambda h_t(x)} \right)}_2^2 + \frac{1}{2} \tr\!\left( \sigma_t \sigma_t^\top \left( \frac{\grad^2 h_t(x)}{\lambda h_t(x)} - \frac{\grad h_t(x) (\grad h_t(x))^\top}{\lambda h_t(x)^2} \right) \right) = 0.
\end{align*}
Here, note that
\begin{align*}
    \frac{1}{2 \lambda h_t(x)^2} \tr(\sigma_t \sigma_t^\top \grad h_t(x) (\grad h_t(x))^\top)
    = \frac{1}{2 \lambda h_t(x)^2} \norm*{\sigma_t^\top \grad h_t(x)}_2^2
\end{align*}
by the cyclic property of trace, so the nonlinear terms cancel and we are left with the linear PDE
\begin{align*}
    \partial_t h_t(x) + \mu_t(x)^\top \grad h_t(x) + \frac{1}{2} \tr((\sigma_t \sigma_t^\top)\, \grad^2 h_t(x)) = 0.
\end{align*}
Recalling the formula for the generator $\mc{L}_t$ of the unguided diffusion from~\eqref{eq:unguided-generator}, it follows that $h_t$ must satisfy the Kolmogorov backward equation $\partial_t h_t + \mc{L}_t h_t = 0$. Noting that $h_t(x) = \E[e^{\lambda r(X_1)} \given X_t = x]$ also satisfies this PDE with boundary condition $h_1(x) = e^{\lambda r(x)}$, we conclude that the stochastic value function is given by $V_t(x) = \frac{1}{\lambda} \log h_t(x)$ and the optimal control is given by $u_t^*(x) = \sigma_t^\top \grad \log h_t(x)$, which matches the Doob $h$-transform guidance term.

\subsection{GLASS flows for transition sampling} \label{app:glass-flows}

In this section, we describe the GLASS flow framework introduced by \citet{holderrieth2026glass} for sampling from the law of $(I_1 \given I_t = x)$. Instead of simulating the forward SDE, which is computationally expensive and hard to differentiate through, GLASS flows simulate a deterministic ODE starting from a Gaussian initial condition. This way, transition sampling is much cheaper to simulate, more numerically stable, and easier to differentiate through. The GLASS framework operates under the assumption that the initial distribution $\rho_0$ is Gaussian and that we have access to the optimal denoiser $D_t(x) \coloneq \E[I_1 \given I_t = x]$. Although the GLASS flow is defined more generally, we specialize it here to the setting of linear stochastic interpolants.

\begin{theorem}[GLASS flow for transition sampling] \label{thm:glass-flow}
    Consider the linear interpolant $I_t = (1 - t) X_0 + t X_1$ for $t \in [0, 1]$ with $X_0 \sim \Normal(0, I_d)$. For $s, t \in [0, 1)$, define the combined signal-to-noise ratio
    \begin{align*}
        \eta^*_s \coloneq \sqrt{\left( \frac{t}{1 - t} \right)^2 + \left( \frac{s}{1 - s} \right)^2}.
    \end{align*}
    To sample from the law of $(I_1 \given I_t = x)$, we can simulate the ODE
    \begin{align*}
        d\bar{X}_s = \frac{1}{1 - s}\, (D_{\tau_s^*}(X^*_s(x)) - \bar{X}_s)\, ds,
    \end{align*}
    from $s = 0$ to $s = 1$ starting from $\bar{X}_0 \sim \Normal(0, I_d)$, where
    \begin{align*}
        \tau_s^* \coloneq \frac{\eta^*_s}{1 + \eta^*_s}, \qquad X^*_s(x) \coloneq \frac{1}{\eta^*_s (1 + \eta^*_s)} \left( \frac{t}{(1 - t)^2}\, x + \frac{s}{(1 - s)^2}\, \bar{X}_s \right).
    \end{align*}
\end{theorem}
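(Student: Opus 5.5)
The plan is to recognize the GLASS ODE as the ordinary flow-matching probability flow ODE for a \emph{new} linear interpolant whose endpoint is the posterior $(I_1 \given I_t = x)$. Then we identify its velocity field with the stated expression by a Gaussian sufficiency argument.

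Fix $t \in [0,1)$ and $x$, and let $Y \sim p_{1 \vert t}(\cdot \given x)$, the law of $(I_1 \given I_t = x)$. Let $Z \sim \Normal(0, I_d)$ be independent of $Y$, and define $J_s = (1-s) Z + s Y$. By the same reasoning that makes $b_t = \E[\dot I_t \given I_t]$ generate the marginals of \eqref{eq:linear-interpolant}, the ODE
\begin{align*}
\dot{\bar X}_s = v_s(\bar X_s), \qquad v_s(y) = \E[Y - Z \given J_s = y],
\end{align*}
started from $\bar X_0 \sim \Normal(0, I_d)$ has time marginals $\mathrm{Law}(J_s)$. Consequently $\bar X_1 \sim \mathrm{Law}(Y)$, which is the required posterior. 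Using $Z = (J_s - sY)/(1-s)$, we get
\begin{align*}
v_s(y) = \frac{\E[Y \given J_s = y] - y}{1-s}.
\end{align*}
So it remains to show that $\E[Y \given J_s = y] = D_{\tau_s^*}(X_s^*(x))$, where $X_s^*(x)$ is evaluated at $\bar X_s = y$.

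For this I would model both observations jointly. Let $X_1 \sim \rho_1$, and take two independent Gaussian channels:
\begin{align*}
x/t = X_1 + \tfrac{1-t}{t} Z_1, \qquad y/s = X_1 + \tfrac{1-s}{s} Z_2 .
\end{align*}
Conditioning $X_1$ on the first channel gives $Y$, so $\E[Y \given J_s = y] = \E[X_1 \given I_t = x, J_s = y]$. The likelihood is a product of isotropic Gaussians with precisions $\alpha_t = (t/(1-t))^2$ and $\alpha_s = (s/(1-s))^2$. It therefore depends on $X_1$ only through the precision-weighted average
\begin{align*}
\bar y = \frac{\alpha_t x/t + \alpha_s y/s}{\alpha_t + \alpha_s},
\end{align*}
and $\bar y$ is itself distributed as $X_1 + (\eta_s^*)^{-1} Z$ with $(\eta_s^*)^2 = \alpha_t + \alpha_s$. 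Hence $\bar y$ is sufficient, and the posterior of $X_1$ given both observations equals its posterior given $\bar y$ alone.

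Next I match this single observation to the original interpolant at time $\tau$. Since $I_\tau/\tau = X_1 + \frac{1-\tau}{\tau} Z$, choosing $\tau/(1-\tau) = \eta_s^*$, i.e.\ $\tau = \tau_s^*$, gives the same noise level. Then $\E[X_1 \given \bar y] = D_{\tau_s^*}(\tau_s^* \bar y)$. A short algebraic check gives
\begin{align*}
\tau_s^* \bar y = \frac{1}{\eta_s^*(1+\eta_s^*)}\Bigl(\tfrac{t}{(1-t)^2} x + \tfrac{s}{(1-s)^2} y\Bigr) = X_s^*(x).
\end{align*}
This matches the stated formula.

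The main obstacle is the endpoint behavior rather than the algebra. The factor $1/(1-s)$ blows up as $s \to 1$, while $\eta_s^* \to \infty$ and $\tau_s^* \to 1$ at the same time. So I would justify the transport claim on $[0, 1-\epsilon]$, where $v_s$ is regular under the same assumptions used for $b_t$. I would then pass $\epsilon \to 0$ using $J_{1-\epsilon} \to Y$ in distribution. The case $t = 0$ is also allowed: there $\alpha_t = 0$ and the construction degenerates to the unconditional flow, which is consistent with the formulas.
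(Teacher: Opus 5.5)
Your proposal is correct and follows essentially the same route as the paper. Both arguments write the GLASS ODE as the conditional probability flow of the auxiliary interpolant $J_s = (1-s)\epsilon + s I_1$ given $I_t = x$, and then show that conditioning on $(I_t, J_s)$ is equivalent to conditioning on $I_{\tau_s^*} = X_s^*(x)$. Your sufficiency argument via the precision-weighted average $\bar y$ repackages the paper's matching of the quadratic and linear coefficients in $I_1$, and your treatment of the $s \to 1$ endpoint and the $t = 0$ case adds care that the paper omits.
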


Let $X_t$ denote the solution to the forward SDE~\eqref{eq:unguided-diffusion} with initial condition $X_0 \sim \Normal(0, I_d)$. Then, note that the GLASS flow will work for transition sampling as long as the memoryless noise schedule is used, since the law of $(I_1 \given I_t = x)$ then matches the law of $(X_1 \given X_t = x)$ by \Cref{thm:memoryless-schedule}.

\begin{proof}
    Define another linear interpolant $J_s = (1 - s)\, \epsilon + s I_1$, where $\epsilon \sim \Normal(0, I_d)$ is independent of everything else; note that $(J_0 \given I_t = x) \sim \Normal(0, I_d)$ and $(J_1 \given I_t = x)$ follows the law of $(I_1 \given I_t = x)$. Then, the conditional probability flow ODE for $(J_s \given I_t = x)$ is given by
    \begin{align*}
        d\bar{X}_s
        = \frac{1}{1 - s}\, (\E[I_1 \given J_s = \bar{X}_s,\, I_t = x] - \bar{X}_s)\, ds.
    \end{align*}
    Therefore, it remains to show that conditioning on $J_s = \bar{X}_s$ and $I_t = x$ is equivalent to conditioning on $I_{\tau_s^*} = X^*_s(x)$ for an appropriate choice of effective time $\tau_s^*$ and effective observation $X^*_s(x)$. Since $(I_t \given I_1) \sim \Normal(t I_1, (1 - t)^2 I_d)$ and $(J_s \given I_1) \sim \Normal(s I_1, (1 - s)^2 I_d)$ are independent, their joint density is given by
    \begin{align*}
        p(I_t = x, J_s = \bar{X}_s \given I_1)
         & \propto \exp\!\left( -\frac{1}{2 (1 - t)^2} \norm{x - t I_1}_2^2 - \frac{1}{2 (1 - s)^2} \norm{\bar{X}_s - s I_1}_2^2 \right)                                                                        \\
         & \propto \exp\!\left( -\frac{1}{2} I_1^\top \left( \frac{t^2}{(1 - t)^2} + \frac{s^2}{(1 - s)^2} \right) I_1 + I_1^\top \left( \frac{t}{(1 - t)^2} x + \frac{s}{(1 - s)^2} \bar{X}_s \right) \right).
    \end{align*}
    On the other hand, the density of $(I_{\tau_s^*} \given I_1)$ is given by
    \begin{align*}
        p(I_{\tau_s^*} = X^*_s(x) \given I_1)
         & \propto \exp\!\left( -\frac{1}{2 (1 - \tau_s^*)^2} \norm{X^*_s(x) - \tau_s^* I_1}_2^2 \right)                                                                                     \\
         & \propto \exp\!\left( -\frac{1}{2} I_1^\top \left( \frac{(\tau_s^*)^2}{(1 - \tau_s^*)^2} \right) I_1 + I_1^\top \left( \frac{\tau_s^*}{(1 - \tau_s^*)^2} X^*_s(x) \right) \right).
    \end{align*}
    It suffices now to show that the two densities are proportional to each other, since Bayes' rule will then imply that $(I_1 \given I_t = x,\, J_s = \bar{X}_s) \overset{d}{=} (I_1 \given I_{\tau_s^*} = X^*_s(x))$. Equating the quadratic coefficients gives
    \begin{align*}
        \frac{\tau_s^*}{1 - \tau_s^*} = \eta^*_s
        \implies \tau_s^* = \frac{\eta^*_s}{1 + \eta^*_s},
    \end{align*}
    and because $\tau_s^* / (1 - \tau_s^*)^2 = \eta^*_s (1 + \eta^*_s)$, equating the linear coefficients gives
    \begin{align*}
        X^*_s(x) = \frac{1}{\eta^*_s (1 + \eta^*_s)} \left( \frac{t}{(1 - t)^2} x + \frac{s}{(1 - s)^2} \bar{X}_s \right). \tag*{\qedhere}
    \end{align*}
\end{proof}

\section{Additional results} \label{app:additional-results}

\subsection{Analytic guidance for the Gaussian target} \label{app:analytic-guidance}

In this appendix, we compute the exact Doob $h$-transform guidance for a Gaussian target distribution under the memoryless noise schedule, providing a benchmark for the biases that we study in \Cref{sec:reward-hacking}.

\begin{proposition}[Analytic guidance for Gaussian target] \label{prop:analytic-guidance-gaussian}
    Suppose $I_1 \sim \Normal(\mu, \Sigma)$ and $\sigma_t = \sqrt{2 (1 - t) / t}\, I_d$ is the memoryless noise schedule. The guidance term of the Doob $h$-transform is
    \begin{align*}
        \grad \log h_t(x) = -2 \lambda t\, \Sigma \Sigma_t^{-1} (I_d + 2 \lambda \Sigma_{1 \vert t})^{-1} (\mu_{1 \vert t}(x) - a),
    \end{align*}
    where $\mu_{1 \vert t}(x) = \mu + t \Sigma \Sigma_t^{-1} (x - t \mu)$ and $\Sigma_{1 \vert t} = (1 - t)^2 \Sigma \Sigma_t^{-1}$ are the mean and covariance of $(X_1 \given X_t = x)$. The terminal distribution of the guided ODE is the reward-tilted measure
    \begin{align*}
        \tilde{x}_1 \sim \Normal\!\left(\mu - 2 \lambda \Sigma (I_d + 2 \lambda \Sigma)^{-1} (\mu - a),\; \Sigma (I_d + 2 \lambda \Sigma)^{-1}\right).
    \end{align*}
\end{proposition}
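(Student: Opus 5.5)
The plan has three steps: get the posterior law of $(X_1 \given X_t = x)$ in closed form, evaluate $h_t$ as a Gaussian integral and differentiate it, and then use the Doob $h$-transform to identify the terminal law with the tilt $\tilde{\rho}_1$, computed directly by completing the square.

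\emph{Step 1 (posterior).} Under the memoryless schedule, \Cref{thm:memoryless-schedule} says the law of $(X_1 \given X_t = x)$ equals the law of $(I_1 \given I_t = x)$ for the linear interpolant \eqref{eq:linear-interpolant}. So it suffices to condition in the jointly Gaussian pair $(I_1, I_t)$. We have $\Cov(I_1, I_t) = t\Sigma$ and $\Cov(I_t) = \Sigma_t = (1-t)^2 I_d + t^2 \Sigma$. Gaussian conditioning then gives $\mu_{1\vert t}(x) = \mu + t\Sigma\Sigma_t^{-1}(x - t\mu)$ and
\[
\Sigma_{1\vert t} = \Sigma - t^2 \Sigma\Sigma_t^{-1}\Sigma = \Sigma\Sigma_t^{-1}(\Sigma_t - t^2\Sigma) = (1-t)^2\Sigma\Sigma_t^{-1}.
\]
Since $\Sigma_t$ is a matrix polynomial in $\Sigma$, every matrix that appears commutes with $\Sigma$. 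I will use this freely below. As a consistency check, I would verify that the general $\Psi_t$ expressions from \Cref{sec:reward-hacking} reduce to these formulas for $\eta_t^2 = 2(1-t)/t$.

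\emph{Step 2 (the $h$-function and its gradient).} Write $Y \sim \Normal(m, S)$ with $m = \mu_{1\vert t}(x)$ and $S = \Sigma_{1\vert t}$. The standard Gaussian integral gives
\[
\E[e^{-\lambda\norm{Y - a}_2^2}] = \det(I_d + 2\lambda S)^{-1/2} \exp\!\left(-\lambda (m - a)^\top (I_d + 2\lambda S)^{-1}(m - a)\right),
\]
which I obtain by completing the square in $y$. The determinant prefactor does not depend on $x$. Hence
\[
\grad\log h_t(x) = -2\lambda\, (\grad_x m)^\top (I_d + 2\lambda S)^{-1}(m - a).
\]
The Jacobian is $\grad_x m = t\Sigma\Sigma_t^{-1}$, which is symmetric because of the commutation noted above. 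This yields the stated formula.

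\emph{Step 3 (terminal law).} The memoryless schedule guarantees $X_0 \indep X_1$, so \Cref{thm:doob-h-transform} applies: the guided ODE \eqref{eq:guided-ode} started at $\tilde{x}_0 = x_0$ terminates at $\tilde{\rho}_1 \propto \Normal(\mu, \Sigma)\, e^{-\lambda\norm{x - a}_2^2}$. Completing the square, the tilt has precision $\Sigma^{-1} + 2\lambda I_d$, so its covariance is $(\Sigma^{-1} + 2\lambda I_d)^{-1} = \Sigma(I_d + 2\lambda\Sigma)^{-1}$. Its mean is $(I_d + 2\lambda\Sigma)^{-1}(\mu + 2\lambda\Sigma a)$, and the identity $(I_d + 2\lambda\Sigma)^{-1} = I_d - 2\lambda\Sigma(I_d + 2\lambda\Sigma)^{-1}$ rewrites this as $\mu - 2\lambda\Sigma(I_d + 2\lambda\Sigma)^{-1}(\mu - a)$.

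\emph{Main obstacle.} The delicate point is the behaviour at the endpoints. The drift coefficient $\tfrac12\sigma_t\sigma_t^\top = \tfrac{1-t}{t} I_d$ blows up as $t \to 0$. This is cancelled by the factor $t$ in $\grad\log h_t$, so the guidance term stays bounded and the ODE is well posed, and I would state this explicitly. The remaining steps are routine Gaussian algebra.
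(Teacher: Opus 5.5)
Your proof is correct, but it obtains the joint law of $(X_t, X_1)$ and the terminal law by a different route than the paper. The paper does not go through the interpolant. It derives the proposition as the $\Psi = 0$ case of the general-schedule result \Cref{thm:ivf-bias-gaussian}:

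\begin{itemize}
\item It checks that $\frac{d}{dv}\log(v^2\Sigma\Sigma_v^{-1}) = \eta_v^2\Sigma_v^{-1}$ for $\eta_v^2 = 2(1-v)/v$, which gives $\Psi_t = t^2\Sigma\Sigma_t^{-1}$.
\item It substitutes this into the SDE cross-covariance $\Cov(X_t, X_1) = \Sigma_t^{1/2}\Sigma^{1/2}\Psi_t^{1/2}$, which comes from a linear ODE for $\Cov(X_s, X_t)$, and gets $t\Sigma$.
\item It reads the terminal law off the general formula $\tilde\rho_1(x) \propto \rho_1(x)e^{\lambda r(x)}\,\E[1/h_0(X_0) \given X_1 = x]$. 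Once $\Psi = 0$, the last factor is constant.
\end{itemize}

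You instead use \Cref{thm:memoryless-schedule} to replace the SDE by the linear interpolant, where $\Cov(I_t, I_1) = t\Sigma$ is immediate. You then get the terminal law from \Cref{thm:doob-h-transform} (via $X_0 \indep X_1$) and complete the square directly. The Gaussian integral for $h_t$ and its gradient are then the same computation in both proofs.

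Your route is shorter and more transparent, and it never needs the cross-covariance ODE. The paper's route places the memoryless case inside the $\Psi_t$ framework it reuses for the initial value function bias and the non-memoryless plug-in flow. The memoryless formulas, and the independence $X_0 \indep X_1$, then appear simply as the $\Psi = 0$ case. The consistency check you list as optional is in fact the paper's main step.

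One small point to make explicit: going from $(X_t \given X_1) \overset{d}{=} (I_t \given I_1)$ to $(X_1 \given X_t = x) \overset{d}{=} (I_1 \given I_t = x)$ also needs $X_1 \sim \rho_1$. This holds because the forward SDE matches the interpolant's marginals, and it makes the two joint laws coincide.
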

\begin{proof}
    The last part of the corollary follows immediately from \Cref{thm:ivf-bias-gaussian} by noting that $\Psi = 0$ for the memoryless noise schedule. Note that for the memoryless noise schedule $\eta_t = \sqrt{2 (1 - t) / t}$,
    \begin{align*}
        \frac{d}{dv}\, \log(v^2\, \Sigma \Sigma_v^{-1})
         & = \frac{2}{v} I_d - \Sigma_v^{-1} \dot{\Sigma}_v                                                 \\
         & = \Sigma_v^{-1} \left( 2v \Sigma + \frac{2 (1 - v)^2}{v} I_d - 2v \Sigma + 2 (1 - v) I_d \right) \\
         & = \frac{2 (1 - v)}{v} \Sigma_v^{-1}                                                              \\
         & = \eta_v^2\, \Sigma_v^{-1}.
    \end{align*}
    Therefore, we can compute the integral defining $\Psi_t$ in closed form:
    \begin{align*}
        \Psi_t = \exp\!\left( -\int_t^1 \eta_v^2\, \Sigma_v^{-1}\, dv \right)
        = \exp(\log(t^2\, \Sigma \Sigma_t^{-1}))
        = t^2\, \Sigma \Sigma_t^{-1}.
    \end{align*}
    Plugging this back into the covariance formula, we obtain $\Cov(X_t, X_1) = t \Sigma$. The Gaussian conditioning formula states that $(X_1 \given X_t = x) \sim \Normal(\mu_{1 \vert t}(x), \Sigma_{1 \vert t})$ where $\mu_{1 \vert t}(x) = \mu + t \Sigma \Sigma_t^{-1} (x - t \mu)$ and $\Sigma_{1 \vert t} = (1 - t)^2 \Sigma \Sigma_t^{-1}$. Using the fact that $\Sigma$ and $\Sigma_t$ are simultaneously diagonalizable, the guidance term is given by
    \begin{align*}
        \grad \log h_t(x)
        = -2 \lambda t \Sigma \Sigma_t^{-1} (I_d + 2 \lambda \Sigma_{1 \vert t})^{-1} (\mu_{1 \vert t}(x) - a). \tag*{\qedhere}
    \end{align*}
\end{proof}

\subsection{Initial value function bias} \label{sec:ivf-bias}

The Doob $h$-transform of \Cref{thm:doob-h-transform} requires a memoryless noise schedule, but the memoryless schedule of \Cref{thm:memoryless-schedule} has $\sigma_t = \sqrt{2 (1 - t) / t} \to \infty$ as $t \to 0$, so it is numerically unstable near $t = 0$; in practice, practitioners use a non-memoryless alternative. Here we characterize the resulting bias---referred to as the \emph{initial value function bias} \citep[Section 4.2]{domingo2025adjoint}---in closed form for Gaussian and Gaussian mixture targets, using the notation introduced in \Cref{sec:reward-hacking}. As discussed in the main text, this bias acts in the opposite direction to the plug-in bias: it under-corrects the mean and inflates the variance relative to the analytic tilt, so it partially mitigates plug-in reward hacking.

However, non-memoryless schedules cannot recover the analytic tilt for any choice of schedule. In particular, any non-memoryless schedule with $\Psi \succ 0$ strictly (which holds for all schedules used in practice) has both the mean pull (strictly decreasing in $\psi$) and the terminal covariance (strictly increasing in $\psi$) strictly missing their analytic-tilt values $2 \lambda \sigma / (1 + 2 \lambda \sigma)$ and $\sigma / (1 + 2 \lambda \sigma)$ along every eigendirection; no such schedule can recover the analytic tilt as the terminal distribution.

\begin{theorem}[Initial value function bias for Gaussian target] \label{thm:ivf-bias-gaussian}
    The guidance term in the Doob $h$-transform (\Cref{thm:doob-h-transform}) is
    \begin{align*}
        \grad \log h_t(x) = -2 \lambda \Sigma^{1/2} \Sigma_t^{-1/2} \Psi_t^{1/2} (I_d + 2 \lambda \Sigma_{1 \vert t})^{-1} (\mu_{1 \vert t}(x) - a).
    \end{align*}
    Defining $\Psi \coloneq \Psi_0$, the terminal distribution of the guided ODE is $\tilde{x}_1 \sim \Normal(\tilde\mu, \tilde\Sigma)$, where
    \begin{align*}
        \tilde\mu \coloneq \mu - T_{\mathrm{pull}}(\mu - a), \qquad
        T_{\mathrm{pull}} \coloneq 2 \lambda \Sigma (I_d - \Psi)(I_d + 2 \lambda \Sigma (I_d - \Psi))^{-1},
    \end{align*}
    and
    \begin{align*}
        \tilde\Sigma \coloneq \Sigma(I_d + 2 \lambda \Sigma (I_d - \Psi)^2)(I_d + 2 \lambda \Sigma (I_d - \Psi))^{-2}.
    \end{align*}
\end{theorem}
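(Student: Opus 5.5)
The plan is to reduce everything to scalar computations along the eigendirections of $\Sigma$. Since $\sigma_t = \eta_t I_d$ and $\Sigma_t = (1-t)^2 I_d + t^2\Sigma$, every matrix in the statement ($\Sigma_t$, $\Psi_t$, $\Sigma_{1\vert t}$, the Jacobian of $\mu_{1\vert t}$) is a function of $\Sigma$. So they all commute and diagonalize in a common eigenbasis. I would fix an eigenvector $v$ with eigenvalue $\sigma$, write $s_t = (1-t)^2 + t^2\sigma$ and $\psi_t = \exp(-\int_t^1 \eta_v^2 / s_v\, dv)$, and only reassemble the matrix formulas at the end.

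\textbf{Step 1: the joint law of $(X_t, X_1)$.} For a Gaussian target the velocity is affine:
\begin{align*}
b_t(x) = \mu + \dot{\Sigma}_t \Sigma_t^{-1}(x - t\mu)/2 ,
\end{align*}
and the score is $-\Sigma_t^{-1}(x - t\mu)$. Hence the forward SDE \eqref{eq:unguided-diffusion} is a linear (Ornstein--Uhlenbeck type) SDE, and $(X_t, X_1)$ is jointly Gaussian. I would compute $\Cov(X_1, X_t)$ from the fundamental matrix $\Phi_{1,t}$ of the centered drift, via $\Cov(X_1, X_t) = \Phi_{1,t}\Sigma_t$. Along $v$, the log-derivative of $\Phi$ is $\dot{s}_t/(2 s_t) - \eta_t^2/(2 s_t)$. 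Integrating gives $\Phi_{1,t} = (\sigma/s_t)^{1/2}\psi_t^{1/2}$, so $\Cov(X_1, X_t) = \sigma^{1/2}s_t^{1/2}\psi_t^{1/2}$. Gaussian conditioning then gives
\begin{align*}
\mu_{1\vert t}(x) = \mu + \Sigma^{1/2}\Sigma_t^{-1/2}\Psi_t^{1/2}(x - \mu_t), \qquad \Sigma_{1\vert t} = \Sigma(I_d - \Psi_t),
\end{align*}
as claimed before the theorem.

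\textbf{Step 2: the guidance term.} The Gaussian integral $\E[e^{-\lambda\norm{Y-a}^2}]$ with $Y \sim \Normal(m, C)$ is
\begin{align*}
\det(I_d + 2\lambda C)^{-1/2}\exp\!\left(-\lambda(m-a)^\top(I_d + 2\lambda C)^{-1}(m-a)\right).
\end{align*}
Here $C = \Sigma_{1\vert t}$ does not depend on $x$. Differentiating through $m = \mu_{1\vert t}(x)$, whose Jacobian is $\Sigma^{1/2}\Sigma_t^{-1/2}\Psi_t^{1/2}$, gives the stated formula for $\grad\log h_t$.

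\textbf{Step 3: solve the guided ODE.} The guided ODE \eqref{eq:guided-ode} is a deterministic affine ODE started from $X_0 \sim \Normal(0, I_d)$. Therefore $\tilde{x}_1$ is an affine function of $X_0$, hence Gaussian, and it suffices to solve the scalar equation along $v$ for the centered coordinate $y_t = v^\top(\tilde{x}_t - \mu_t)$ plus the forcing from $\mu - a$. Writing $\kappa_t = 1 + 2\lambda\sigma(1-\psi_t)$, the guidance term along $v$ equals
\begin{align*}
-\frac{2\lambda}{\kappa_t}\left(\frac{\sigma\psi_t}{s_t}\right)^{1/2}\left[(\sigma\psi_t/s_t)^{1/2}\, y_t + (\mu - a)_v\right].
\end{align*}
Using the identity $\dot{\psi}_t = \eta_t^2\psi_t/s_t$, the factor $\tfrac12\eta_t^2 \cdot 2\lambda\sigma\psi_t/(s_t\kappa_t)$ becomes $\lambda\sigma\dot{\psi}_t/\kappa_t = -\tfrac12\dot{\kappa}_t/\kappa_t$. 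The homogeneous part then has integrating factor $(s_t^{1/2}/\kappa_t^{1/2})$ times the Step~1 transition factor. From this I expect a closed form $y_1 = \kappa_0^{-1}\bigl[(1 + 2\lambda\sigma(1-\psi))\,\cdot\,(\text{unguided } y_1)\bigr]$ plus an affine shift in $\mu - a$. The shift should integrate to $-2\lambda\sigma(1-\psi)/(1 + 2\lambda\sigma(1-\psi))$, which is the pull $T_{\mathrm{pull}}$.

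For the variance, the unguided map sends $X_0$ to $y_1$ with variance $\sigma$. I would track the coefficient of $X_0$ and of the $\psi$-dependent part separately, which should yield
\begin{align*}
\sigma\bigl(1 + 2\lambda\sigma(1-\psi)^2\bigr)\big/\kappa_0^2 .
\end{align*}
The $(1-\psi)^2$ term arises because the guidance corrects only the component of $y_t$ correlated with $X_1$.

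\textbf{Main obstacle.} The hard part is Step 3: finding the exact-derivative structure that makes the scalar linear ODE integrable. I would first try to verify directly that $\tfrac{d}{dt}\log\kappa_t$ and $\tfrac{d}{dt}\log(s_t\psi_t)$ appear as the coefficients. As a sanity check, setting $\psi = 0$ should reproduce \Cref{prop:analytic-guidance-gaussian}, and the memoryless computation $\Psi_t = t^2\Sigma\Sigma_t^{-1}$ done there should guide the substitution. If the direct integration is messy, a fallback is to posit the ansatz that $\tilde{x}_t$ is an affine function of $(X_0, \mu - a)$ with unknown scalar coefficients, derive first-order ODEs for those coefficients, and check that the claimed terminal values satisfy them.
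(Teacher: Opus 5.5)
Your Steps 1 and 2 are correct and agree with the paper. Your fundamental-matrix route to $\Cov(X_1,X_t)=\Sigma^{1/2}\Sigma_t^{1/2}\Psi_t^{1/2}$ is equivalent to the paper's Fubini/covariance-ODE argument, and differentiating the Gaussian integral gives the stated $\grad\log h_t$.

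The gap is Step 3: carried out honestly, it does not produce the stated law. The drift of \eqref{eq:guided-ode} is $b_t$, not the SDE drift, so no Step~1 transition factor enters. The coefficient of $y_t$ is exactly $\dot s_t/(2s_t)+\dot\kappa_t/(2\kappa_t)$, the homogeneous solution is $(s_t\kappa_t)^{1/2}$, and
\[
\tfrac{d}{dt}\bigl[(s_t\kappa_t)^{-1/2}y_t\bigr] = -2\lambda\sigma^{1/2}\kappa_t^{-3/2}\,\tfrac{d}{dt}\bigl(\psi_t^{1/2}\bigr)\,(\mu-a)_v .
\]
Set $p=\psi_t^{1/2}$, so that $\kappa_t = A - Bp^2$ with $A=1+2\lambda\sigma$ and $B=2\lambda\sigma$. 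Use $\int(A-Bp^2)^{-3/2}\,dp = p/(A\sqrt{A-Bp^2})$ together with $s_0=\kappa_1=1$ and $s_1=\sigma$. You get
\[
v^\top\tilde x_1 \sim \Normal\Bigl(v^\top\mu - \tfrac{2\lambda\sigma}{1+2\lambda\sigma}\bigl(1-\sqrt{\psi/\kappa_0}\bigr)(\mu-a)_v,\ \sigma/\kappa_0\Bigr).
\]
This matches the statement only when $\psi\in\{0,1\}$. For instance, $\lambda\sigma=1$ and $\psi=1/2$ give pull $1/3$ and variance $\sigma/2$, instead of $1/2$ and $3\sigma/8$.

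Because $\tilde x_1$ is a deterministic affine function of $X_0$ alone, there is no separate ``$\psi$-dependent part'' that could generate a $(1-\psi)^2$. Your predicted shift and variance are not what the computation gives; your guessed form for $y_1$ even reduces to the unguided $y_1$. The $\psi=0$ sanity check cannot detect the discrepancy, since both laws agree there, and the ansatz fallback fails for the same reason.

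The missing idea is that the stated $(\tilde\mu,\tilde\Sigma)$ is the terminal law of the Doob-transformed \emph{diffusion}. That is, it is the law of $X_1$ under $dQ/dP = e^{\lambda r(X_1)}/h_0(X_0)$ from the proof of \Cref{thm:doob-h-transform}. The paper computes it by reweighting rather than by integrating a flow:
\[
\tilde\rho_1(x)\propto\rho_1(x)\,e^{\lambda r(x)}\,\E[h_0(X_0)^{-1}\mid X_1=x].
\]
Your Step~1 at $t=0$ gives $\Cov(X_0,X_1)=\Sigma^{1/2}\Psi^{1/2}$, so $h_0$ is a Gaussian integral. $\E[h_0(X_0)^{-1}\mid X_1=x]$ is another one, in which $\Sigma^{1/2}\Psi^{1/2}X_0$ has conditional mean $\Psi(x-\mu)$ and covariance $\Sigma\Psi(I_d-\Psi)$. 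That conditional covariance is exactly what converts $I_d+2\lambda\Sigma(I_d-\Psi)$ into $I_d+2\lambda\Sigma(I_d-\Psi)^2$. Completing the square in $x$ then yields $\tilde\mu$ and $\tilde\Sigma$.

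The reduction of that diffusion to \eqref{eq:guided-ode} in \Cref{thm:doob-h-transform} uses $\tilde\rho_t\propto\rho_t h_t$. This requires $h_0$ to be constant, i.e.\ $\Psi=0$. When $\Psi\neq0$, the actual probability flow carries the extra drift $-\tfrac12\eta_t^2\grad\log\E[h_0(X_0)^{-1}\mid X_t=x]$. So the statement must be read as, and proved for, the $h$-transformed process via this change of measure. Your Step~3 in fact shows that the literal ODE \eqref{eq:guided-ode} ends at a different Gaussian when $\Psi\neq0$.
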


The proof is below; we express the SDE for $X_t$ as a linear system, derive a closed-form ODE for the cross-covariance $\Cov(X_s, X_t)$ via Fubini's theorem, and evaluate $h_t(x)$ as a Gaussian integral. To provide some intuition for this formula, note that for any non-memoryless schedule, $0 \prec \Psi \prec I_d$ and all operators are simultaneously diagonalizable (since $\Psi$ is a matrix function of $\Sigma$). Along a joint eigendirection $v$ with $\Psi$ having eigenvalue $\psi \in (0, 1)$ and $\Sigma$ having eigenvalue $\sigma > 0$, the eigenvalue of $T_{\mathrm{pull}}$ is
\begin{align*}
    \frac{2 \lambda \sigma (1 - \psi)}{1 + 2 \lambda \sigma (1 - \psi)},
\end{align*}
a decreasing function of $\psi$ on $[0, 1]$, equal to the memoryless tilt's pull $2 \lambda \sigma / (1 + 2 \lambda \sigma)$ at $\psi = 0$ and vanishing as $\psi \to 1$, while the eigenvalue of $\tilde\Sigma$ is
\begin{align*}
    \frac{\sigma (1 + 2 \lambda \sigma (1 - \psi)^2)}{(1 + 2 \lambda \sigma (1 - \psi))^2},
\end{align*}
an increasing function of $\psi$ on $[0, 1]$, equal to the memoryless tilt's variance $\sigma / (1 + 2 \lambda \sigma)$ at $\psi = 0$ and approaching the unguided variance $\sigma$ as $\psi \to 1$. Thus, non-memoryless schedules under-correct the mean and inflate the covariance relative to the memoryless tilt. In particular, since $0 \prec \Psi \prec I_d$ strictly for any non-memoryless schedule, both the mean pull (strictly decreasing in $\psi$) and the terminal covariance (strictly increasing in $\psi$) miss their analytic-tilt values $2 \lambda \sigma / (1 + 2 \lambda \sigma)$ and $\sigma / (1 + 2 \lambda \sigma)$ along every eigendirection; no choice of non-memoryless noise schedule can recover the analytic tilt as the terminal distribution. The corresponding analysis for a Gaussian mixture target appears as \Cref{thm:analytic-guidance-gmm} and is used in \Cref{sec:plugin-mode-selection} to characterize the long-range mode-selection behavior of the analytic $h$-transform.

\begin{proof}[Proof of \Cref{thm:ivf-bias-gaussian}] \label{app:ivf-bias-gaussian-proof}
    The unguided stochastic interpolant $I_t$ has distribution $\Normal(\mu_t, \Sigma_t)$ where $\mu_t = t \mu$ and $\Sigma_t = (1 - t)^2 I_d + t^2 \Sigma$. It is clear that $(I_t, \dot{I}_t)$ is a linear function of $(I_0, I_1)$ and hence jointly Gaussian. We then notice that
    \begin{align*}
        \Cov(\dot{I}_t, I_t) = \Cov(I_1 - I_0,\, (1 - t) I_0 + t I_1) = t \Sigma - (1 - t) I_d
    \end{align*}
    and hence the Gaussian conditional mean is given by
    \begin{align*}
        \E[\dot{I}_t \given I_t = x]
        = \mu + (t \Sigma - (1 - t) I_d) \Sigma_t^{-1} (x - \mu_t)
        = \mu + \frac{1}{2} \dot{\Sigma}_t \Sigma_t^{-1} (x - \mu_t).
    \end{align*}
    Because the score function is $\grad \log \rho_t(x) = -\Sigma_t^{-1} (x - \mu_t)$, the forward SDE associated with the stochastic interpolant is
    \begin{align*}
        dX_t = \left( \mu + \frac{1}{2} (\dot{\Sigma}_t - \eta_t^2\, I_d)\, \Sigma_t^{-1} (X_t - \mu_t) \right) dt + \eta_t\, dB_t.
    \end{align*}
    Here, we can use Fubini's theorem to compute that for $0 \leq s < t \leq 1$,
    \begin{align*}
        \Cov(X_s, X_t)
         & = \E[(X_s - \mu_s) (X_t - \mu_t)^\top]                                                                                                                                                  \\
         & = \E\left[ (X_s - \mu_s) \left( (X_s - \mu_s) + \int_s^t \frac{1}{2} (\dot{\Sigma}_v - \eta_v^2\, I_d)\, \Sigma_v^{-1} (X_v - \mu_v)\, dv + \int_s^t \eta_v\, dB_v \right)^\top \right] \\
         & = \Sigma_s + \int_s^t \frac{1}{2} \Cov(X_s, X_v) ((\dot{\Sigma}_v - \eta_v^2\, I_d)\, \Sigma_v^{-1})^\top\, dv.
    \end{align*}
    Differentiating both sides with respect to $t$ yields the linear ODE
    \begin{align*}
        \partial_t \Cov(X_s, X_t) = \frac{1}{2} \Cov(X_s, X_t) ((\dot{\Sigma}_t - \eta_t^2\, I_d)\, \Sigma_t^{-1})^\top.
    \end{align*}
    Since all relevant terms are simultaneously diagonalizable, we can change into the eigenbasis of $\Sigma$ to obtain a scalar ODE for each eigenvalue. As a result, we obtain the closed-form solution
    \begin{equation} \label{eq:cov-ode}
        \Cov(X_s, X_t)
        = \Sigma_s^{1/2} \Sigma_t^{1/2} \exp\!\left( -\frac{1}{2} \int_s^t \eta_v^2\, \Sigma_v^{-1}\, dv \right)
    \end{equation}
    using the boundary condition $\Cov(X_s, X_s) = \Sigma_s$. Instantiating this formula yields the joint distribution of the Gaussian vector $(X_0, X_1)$ as $\Cov(X_0, X_1) = \Sigma^{1/2} \Psi^{1/2}$ where
    \begin{align*}
        \Psi \coloneq \exp\!\left( -\int_0^1 \eta_v^2\, \Sigma_v^{-1}\, dv \right).
    \end{align*}
    Therefore, we have
    \begin{align*}
        (X_1 \given X_0) \sim \Normal\left( \mu + \Sigma^{1/2} \Psi^{1/2} X_0,\, \Sigma (I_d - \Psi) \right)
    \end{align*}
    and
    \begin{align*}
        (X_0 \given X_1) \sim \Normal\left( \Sigma^{-1/2} \Psi^{1/2} (X_1 - \mu),\, I_d - \Psi \right).
    \end{align*}
    Using these expressions, we compute $h_0$ as a Gaussian integral:
    \begin{align*}
        h_0(x)
         & =\E[e^{-\lambda \norm{X_1 - a}_2^2} \given X_0 = x]                                                                                                                                          \\
         & \propto \exp\!\left( -\lambda \left( \mu + \Sigma^{1/2} \Psi^{1/2} x - a \right)^\top (I_d + 2 \lambda \Sigma (I_d - \Psi))^{-1} \left( \mu + \Sigma^{1/2} \Psi^{1/2} x - a \right) \right).
    \end{align*}
    Similarly, we can compute the initial value function bias by a Gaussian integral:
    \begin{align*}
         & \E\left[ \frac{1}{h_0(X_0)} \givenlarge X_1 = x \right]                                                                                                                                                                                   \\
         & \quad \propto \E\left[ \exp\!\left( \lambda \left( \mu + \Sigma^{1/2} \Psi^{1/2} X_0 - a \right)^\top (I_d + 2 \lambda \Sigma (I_d - \Psi))^{-1} \left( \mu + \Sigma^{1/2} \Psi^{1/2} X_0 - a \right) \right) \givenlarge X_1 = x \right] \\
         & \quad \propto \exp\!\left( \lambda (\mu - a + \Psi (x - \mu))^\top (I_d + 2 \lambda \Sigma (I_d - \Psi)^2)^{-1} (\mu - a + \Psi (x - \mu)) \right).
    \end{align*}
    At last, the distribution of $\tilde{x}_1$ under the guided ODE is
    \begin{align*}
        \tilde{\rho}_1(x)
        \propto \exp\!\left( -\frac{1}{2} (x - \mu)^\top \Sigma^{-1} (x - \mu) - \lambda \norm{x - a}_2^2 \right) \E\left[ \frac{1}{h_0(X_0)} \givenlarge X_1 = x \right],
    \end{align*}
    which is a Gaussian measure
    \begin{align*}
        \Normal\biggl( & \mu - 2 \lambda \Sigma (I_d - \Psi) (I_d + 2 \lambda \Sigma (I_d - \Psi))^{-1} (\mu - a),         \\
                       & \Sigma(I_d + 2 \lambda \Sigma (I_d - \Psi)^2) (I_d + 2 \lambda \Sigma (I_d - \Psi))^{-2} \biggr).
    \end{align*}
    Now, we know that $(X_t, X_1)$ is Gaussian (since it is a linear transformation of the Gaussian vector $(X_0, X_1)$), so we can compute $h_t$ in closed form. In particular, we have from~\eqref{eq:cov-ode} that $\Cov(X_t, X_1) = \Sigma_t^{1/2} \Sigma^{1/2} \Psi_t^{1/2}$ and thus the Gaussian conditioning formula yields
    \begin{align*}
        (X_1 \given X_t)
        \sim \Normal(\mu_{1 \vert t}(x), \Sigma_{1 \vert t})
        \coloneq \Normal\!\left( \mu + \Sigma^{1/2} \Sigma_t^{-1/2} \Psi_t^{1/2} (X_t - \mu_t),\, \Sigma (I_d - \Psi_t) \right).
    \end{align*}
    Now, we can compute $h_t$ in closed form as a Gaussian integral:
    \begin{align*}
        h_t(x)
        = \E[e^{-\lambda \norm{X_1 - a}_2^2} \given X_t = x]
        \propto \exp\!\left( -\lambda (\mu_{1 \vert t}(x) - a)^\top (I_d + 2 \lambda \Sigma_{1 \vert t})^{-1} (\mu_{1 \vert t}(x) - a) \right).
    \end{align*}
    This means that the guidance term is given by
    \begin{align*}
        \grad \log h_t(x) = -2 \lambda \Sigma^{1/2} \Sigma_t^{-1/2} \Psi_t^{1/2} (I_d + 2 \lambda \Sigma_{1 \vert t})^{-1} (\mu_{1 \vert t}(x) - a). \tag*{\qedhere}
    \end{align*}
\end{proof}

\section{Omitted proofs} \label{app:proofs}

\subsection{Proof of \texorpdfstring{\Cref{thm:doob-h-transform}}{Theorem 1}} \label{app:doob-proof}

\begin{restatable}[Doob $h$-transform]{theorem}{thmDoob} \label{thm:doob-h-transform}
    Suppose $X_t$ satisfies \eqref{eq:unguided-diffusion} and define the Doob $h$-function $h_t(x) \coloneq \E[e^{\lambda r(X_1)} \given X_t = x]$. Then, consider the guided ODE \eqref{eq:guided-ode}. If $X_0 \indep X_1$, the time-marginals of $\tilde{x}_t$ will be $\tilde{\rho}_t(x) \propto \rho_t(x)\, h_t(x)$; in particular, $\tilde{\rho}_1(x) \propto \rho_1(x)\, e^{\lambda r(x)}$ will be the reward-tilted measure.
\end{restatable}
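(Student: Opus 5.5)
We prove the theorem in two stages. First we show that the guided SDE built from the Doob $h$-transform has marginals $\rho_t h_t$. Then we show that the guided ODE \eqref{eq:guided-ode} shares those marginals. We take the guided dynamics to be the ODE with drift $b_t + \tfrac12 \sigma_t\sigma_t^\top \grad \log h_t$, as written in \eqref{eq:guided-ode}.

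\textbf{Step 1: $\tilde\rho_t$ is a probability density.} Set $\tilde\rho_t(x) \coloneq \rho_t(x) h_t(x)/Z$ with $Z = \E[e^{\lambda r(X_1)}]$. By the tower property, $\int \rho_t h_t\,dx = \E[h_t(X_t)] = Z$ for every $t$, so the normalizing constant does not depend on $t$. At $t=1$ we have $h_1 = e^{\lambda r}$, so $\tilde\rho_1 \propto \rho_1 e^{\lambda r}$. At $t=0$ we use the hypothesis $X_0 \indep X_1$: it gives $h_0(x) = \E[e^{\lambda r(X_1)}] = Z$, which is constant, so $\tilde\rho_0 = \rho_0$. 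This is exactly where independence enters. It guarantees that starting the guided ODE from unguided noise $\tilde x_0 = x_0 \sim \rho_0$ is the correct initialization.

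\textbf{Step 2: a PDE for $\tilde\rho_t$.} We verify that $\tilde\rho_t$ solves the continuity equation of \eqref{eq:guided-ode}. Write $D_t = \tfrac12\sigma_t\sigma_t^\top$ and $\mu_t = b_t + D_t\grad\log\rho_t$ for the forward drift, with generator $\mc{L}_t f = \mu_t\cdot\grad f + \tr(D_t\grad^2 f)$. By the Markov property, $h_t$ satisfies the Kolmogorov backward equation $\partial_t h_t + \mc{L}_t h_t = 0$. The Fokker--Planck equation gives $\partial_t\rho_t = -\grad\cdot(\rho_t b_t)$, as shown in \Cref{app:sde-matches-flow}. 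Then
\[
\partial_t(\rho_t h_t) = -h_t\grad\cdot(\rho_t b_t) - \rho_t\big(\mu_t\cdot\grad h_t + \tr(D_t\grad^2 h_t)\big).
\]

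\textbf{Step 3: matching the continuity equation.} We want the right-hand side to equal $-\grad\cdot\big(\rho_t h_t(b_t + D_t\grad\log h_t)\big) = -\grad\cdot(\rho_t h_t b_t) - \grad\cdot(\rho_t D_t\grad h_t)$. Expand both sides, using that $D_t$ depends only on $t$. The terms $-h_t\grad\cdot(\rho_t b_t)$ and $-\rho_t b_t\cdot\grad h_t$ cancel against $-\grad\cdot(\rho_t h_t b_t)$. What remains on the left is
\[
-\rho_t D_t\grad\log\rho_t\cdot\grad h_t - \rho_t\tr(D_t\grad^2 h_t) = -\grad\rho_t\cdot D_t\grad h_t - \rho_t\tr(D_t\grad^2 h_t),
\]
and this is precisely $-\grad\cdot(\rho_t D_t\grad h_t)$. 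So the identity holds.

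\textbf{Step 4: concluding.} $\tilde\rho_t$ solves the continuity equation of \eqref{eq:guided-ode} with initial condition $\tilde\rho_0 = \rho_0$. Uniqueness of solutions to this linear transport equation then gives $\operatorname{Law}(\tilde x_t) = \tilde\rho_t$. Uniqueness holds for a locally Lipschitz velocity field with enough integrability, for example by the superposition principle or Ambrosio--DiPerna--Lions theory.

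\textbf{Main obstacle.} The principal difficulty is regularity near the endpoints. The memoryless $\sigma_t$ blows up as $t\to0$, and $\grad\log h_t$ can be singular as $t\to1$, so existence of the flow and uniqueness of the continuity equation are not automatic. We would handle this by assuming enough smoothness and integrability, namely that $h_t$ is $C^{1,2}$ and bounded away from zero, and that the drift is locally Lipschitz on $(0,1)$. We then run the argument on $[\epsilon, 1-\epsilon]$ and pass to the limit using continuity of the marginals. At $t=0$ the guidance term $\tfrac12\sigma_t\sigma_t^\top\grad\log h_t$ is harmless because $\grad\log h_0 = 0$.
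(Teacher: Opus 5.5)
Your proof is correct, and it takes a genuinely different route from the paper.

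\textbf{The paper's route.} It argues on path space:
\begin{enumerate}
  \item It uses It\^o's formula and the backward equation to show that $g_t = h_t(X_t)/h_0(X_0)$ is a Dol\'eans--Dade exponential.
  \item It invokes Novikov's condition and Girsanov's theorem to identify the reweighted path measure with a guided SDE carrying the extra drift $\sigma_t\sigma_t^\top \grad \log h_t$.
  \item It reads off the marginals as $\E_P[f(X_t)\, h_t(X_t)/h_0(X_0)]$, where independence makes $h_0$ constant.
  \item Only then does it reach \eqref{eq:guided-ode}, by subtracting $\tfrac12 \sigma_t\sigma_t^\top \grad \log \tilde\rho_t$ to form the probability flow.
\end{enumerate}

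\textbf{Your route.} You check directly that $\rho_t h_t$ solves the continuity equation of \eqref{eq:guided-ode}, and your cancellation in Step 3 is right. This is more elementary, since it needs neither Girsanov nor Novikov. It works with the ODE that is actually stated. It also isolates exactly where independence enters: $\rho_t h_t$ solves the guided continuity equation regardless, and $X_0 \indep X_1$ only ensures that its initial datum is $\rho_0$. The uniqueness you invoke in Step 4 is not an extra cost, because the paper's SDE-to-ODE conversion tacitly relies on the same fact.

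\textbf{What the paper's route buys.} It gives a path-space description of the guided SDE, which also justifies SDE samplers. It yields the marginal formula $\rho_t h_t\, \E[1/h_0(X_0) \given X_t]$ even without independence, which is the starting point of the paper's initial value function bias analysis. Your computation shows that in that non-independent case the ODE \eqref{eq:guided-ode} started from $\rho_0$ is mis-initialized relative to $\rho_t h_t$, so its marginals need not match those of the SDE.

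\textbf{One correction to your remark about $t = 0$.} The vanishing of $\grad \log h_0$ is not by itself what makes the guidance harmless, because $\tfrac12 \sigma_t\sigma_t^\top = \frac{1-t}{t} I_d$ diverges as $t \to 0$. Under the memoryless schedule, $(X_t \given X_1) \sim \Normal(t X_1, (1-t)^2 I_d)$ gives $\grad \log h_t(x) = \frac{t}{(1-t)^2}(\tilde m_t(x) - m_t(x))$. Here $m_t$ and $\tilde m_t$ are the posterior means of $X_1$ given $X_t = x$ under the priors $\rho_1$ and $\tilde\rho_1$. The guidance drift is therefore $(\tilde m_t(x) - m_t(x))/(1-t)$. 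This stays bounded as $t \to 0$ but tends to $\E_{\tilde\rho_1}[X_1] - \E_{\rho_1}[X_1]$, not to zero. The $O(t)$ decay of $\grad \log h_t$ is the estimate you actually need to control the flow on $[0, \epsilon]$.
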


\begin{proof}
    We first define the Doob $h$-function
    \begin{align*}
        h_t(x) = \E[e^{\lambda r(X_1)} \given X_t = x]
    \end{align*}
    and note that $h_t(X_t)$ is a Doob martingale. Letting $\mc{L}_t$ denote the generator of the diffusion~\eqref{eq:unguided-diffusion}:
    \begin{equation} \label{eq:unguided-generator}
        \mc{L}_t f = \left( b_t + \frac{1}{2} (\sigma_t \sigma_t^\top) \grad \log \rho_t \right)^\top \grad f + \frac{1}{2} \tr\!\left( (\sigma_t \sigma_t^\top) \grad^2 f \right),
    \end{equation}
    we know that $h_t(X_t)$ must satisfy the Kolmogorov backward equation
    \begin{equation} \label{eq:doob-kbe}
        \partial_t h_t(X_t) + \mc{L}_t h_t(X_t) = 0.
    \end{equation}
    Next, let $(\mc{F}_t)_{t \geq 0}$ denote the natural filtration generated by the Brownian motion $(B_t)_{t \geq 0}$. Letting $P$ denote the law of the path $(X_t)_{t \in [0, 1]}$ and letting $P_t$ denote the restriction of $P$ to $\mc{F}_t$, we define the \emph{Doob $h$-transform} of $P$ by its Radon--Nikodym derivative:
    \begin{align*}
        \frac{dQ_t}{dP_t} = g_t \coloneq \frac{h_t(X_t)}{h_0(X_0)}.
    \end{align*}
    By It\^o's formula and~\eqref{eq:doob-kbe}, we have
    \begin{align*}
        dg_t
         & = \frac{1}{h_0(X_0)} \left( (\partial_t h_t(X_t) + \mc{L}_t h_t(X_t))\, dt + \grad h_t(X_t)^\top \sigma_t\, dB_t \right) \\
         & = \frac{1}{h_0(X_0)} \grad h_t(X_t)^\top \sigma_t\, dB_t                                                                 \\
         & = g_t \grad \log h_t(X_t)^\top \sigma_t\, dB_t.
    \end{align*}
    At this point, applying It\^o's formula shows that
    \begin{align*}
        d(\log g_t) = \grad \log h_t(X_t)^\top \sigma_t\, dB_t - \frac{1}{2} \norm{\grad \log h_t(X_t)^\top \sigma_t}_2^2\, dt
    \end{align*}
    and integrating both sides shows that $g_t$ is the Dol\'eans--Dade exponential (local) martingale
    \begin{align*}
        g_t = \exp\!\left( \int_0^t \grad \log h_s(X_s)^\top \sigma_s\, dB_s - \frac{1}{2} \int_0^t \norm{\grad \log h_s(X_s)^\top \sigma_s}_2^2\, ds \right).
    \end{align*}
    Under the standard Novikov's condition (for instance):
    \begin{align*}
        \E\left[ \exp\!\left( \frac{1}{2} \int_0^1 \norm{\grad \log h_s(X_s)^\top \sigma_s}_2^2\, ds \right) \right] < \infty,
    \end{align*}
    $g_t$ is a true martingale. At last, Girsanov's theorem implies that under $Q \coloneq Q_1$, the process
    \begin{align*}
        \tilde{B}_t = B_t - \int_0^t \sigma_s^\top \grad \log h_s(X_s)\, ds
    \end{align*}
    is a Brownian motion, and substituting back into the SDE for $X_t$ gives
    \begin{align*}
        dX_t = \left( b_t(X_t) + \frac{1}{2} (\sigma_t \sigma_t^\top) \grad \log \rho_t(X_t) + (\sigma_t \sigma_t^\top) \grad \log h_t(X_t) \right) dt + \sigma_t\, d\tilde{B}_t.
    \end{align*}
    We can then find the density $\tilde{\rho}_t$ of $X_t$ under $Q$ by integrating a test function $f \in C_c^\infty(\R^d)$:
    \begin{align*}
        \E_Q[f(X_t)]
        = \E_P[g_t\, f(X_t)]
        = \E_P\left[ f(X_t)\, \frac{h_t(X_t)}{h_0(X_0)} \right].
    \end{align*}
    In the specific case that $X_0 \indep X_1$ (which happens using the memoryless noise schedule from \Cref{thm:memoryless-schedule}), it is clear that $h_0(X_0) = \E[e^{\lambda r(X_1)}]$ is a normalizing constant, and thus $\tilde{\rho}_t(x) \propto \rho_t(x)\, h_t(x)$. At the endpoint, this means that $\tilde{\rho}_1(x) \propto e^{\lambda r(x)} \rho_1(x)$ as desired, and the Doob $h$-transform of the original diffusion gives a principled way to sample from the reward-tilted measure. We can then convert the guided SDE back to a guided ODE by considering the associated probability flow:
    \begin{align*}
        dX_t
         & = \left( b_t(X_t) + \frac{1}{2} (\sigma_t \sigma_t^\top) \grad \log \rho_t(X_t) + (\sigma_t \sigma_t^\top) \grad \log h_t(X_t) - \frac{1}{2} (\sigma_t \sigma_t^\top) \grad \log \tilde{\rho}_t(X_t) \right) dt \\
         & = \left( b_t(X_t) + \frac{1}{2} (\sigma_t \sigma_t^\top) \grad \log h_t(X_t) \right) dt.
    \end{align*}
    Hence, the guided probability flow ODE simply includes an additional score term $\frac{1}{2} (\sigma_t \sigma_t^\top) \grad \log h_t(X_t)$ that steers the dynamics toward high-reward regions of the output space.
\end{proof}

\subsection{Proof of \texorpdfstring{\Cref{thm:memoryless-schedule}}{Theorem 2}} \label{app:memoryless-proof}

\begin{restatable}[Memoryless noise schedule]{theorem}{thmMemoryless} \label{thm:memoryless-schedule}
    Consider the forward SDE \eqref{eq:unguided-diffusion} for the linear interpolant \eqref{eq:linear-interpolant}. Then, choosing the memoryless noise schedule $\sigma_t = \sqrt{2 (1 - t) / t}\, I_d$ ensures that $(X_t \given X_1) \overset{d}{=} (I_t \given I_1)$.
\end{restatable}

\begin{proof}
    Note that $(I_t \given I_1) \sim \Normal(t I_1,\, (1 - t)^2 I_d)$, so by Tweedie's formula, we have
    \begin{align*}
        \E[I_1 \given I_t = x] = \frac{x}{t} + \frac{(1 - t)^2}{t} \grad \log \rho_t(x)
    \end{align*}
    and
    \begin{align*}
        \E[X_0 \given I_t = x]
        = \E\left[ \frac{I_t - t I_1}{1 - t} \givenlarge I_t = x \right]
        = -(1 - t) \grad \log \rho_t(x).
    \end{align*}
    As a result, we deduce that
    \begin{align*}
        b_t(x) = \E[\dot{I}_t \given I_t = x]
        = \E[I_1 - I_0 \given I_t = x]
        = \frac{x}{t} + \frac{1 - t}{t} \grad \log \rho_t(x).
    \end{align*}
    Parameterizing $\sigma_t = \eta_t I_d$, the forward SDE for the stochastic interpolant with the memoryless noise schedule is
    \begin{align*}
        dX_t = \left( \frac{X_t}{t} + \frac{1 - t}{t} \grad \log \rho_t(X_t) + \frac{\eta_t^2}{2} \grad \log \rho_t(X_t) \right) dt + \eta_t\, dB_t.
    \end{align*}
    The Anderson time-reversal formula then yields the backward SDE
    \begin{align*}
        dX_t^\leftarrow = \left( -\frac{X_t^\leftarrow}{t} - \left( \frac{1 - t}{t} - \frac{\eta_t^2}{2} \right) \grad \log \rho_t(X_t^\leftarrow) \right) dt + \eta_t\, dB_t^\leftarrow,
    \end{align*}
    and setting $\eta_t = \sqrt{2 (1 - t) / t}$ cancels the score term in the drift of the backward SDE so that
    \begin{align*}
        dX_t^\leftarrow = -\frac{X_t^\leftarrow}{t}\, dt + \sqrt{\frac{2 (1 - t)}{t}}\, dB_t^\leftarrow.
    \end{align*}
    This shows that $(X_t \given X_1)$ is Gaussian (because the diffusion coefficient is deterministic) with mean $t X_1$ and covariance $(1 - t)^2 I_d$ (for example, by the It\^o isometry), matching the distribution of $(I_t \given I_1)$.
\end{proof}

\subsection{Convergence to plug-in flow} \label{app:plugin-flow-convergence}

We state and prove the formal version of the convergence result discussed in \Cref{sec:plugin-bias}.

\begin{proposition}[Convergence to plug-in flow] \label{prop:plugin-flow}
    Let $\tilde{x}_t$ be the solution to the exact continuous-time ODE $d\tilde{x}_t = b_t^{(k)}(\tilde{x}_t)\, dt$ for $t \in [0, 1]$ with initial condition $\tilde{x}_0 = x_0$. Let $(\bar{x}_t)_{t \in [0, 1]}$ denote the continuous-time Euler approximation with step size $h = 1/N$, defined by $\bar{x}_0 = x_0$ and
    \begin{equation*}
        d\bar{x}_t = \left( b_{\eta(t)}(\bar{x}_{\eta(t)}) + \frac{1}{2} (\sigma_{\eta(t)} \sigma_{\eta(t)}^\top) \grad \log \hat{h}_{\eta(t)}^{(k)}(\bar{x}_{\eta(t)}) \right) dt,
    \end{equation*}
    where $\eta(t) \coloneq \floor{t/h} h$ is the time of the most recent grid point. We make the following assumptions:
    \begin{enumerate}
        \item[(i)] $\norm{b_s^{(k)}(x) - b_t^{(k)}(y)}_2 \leq L (\norm{x - y}_2 + \abs{s - t})$ for some $L > 0$ and for all $s, t \in [0, 1]$ and $x, y \in \R^d$.
        \item[(ii)] Defining the guidance residual $\xi_t(x) \coloneq \grad \log \hat{h}^{(k)}_t(x) - u_t^{(k)}(x)$, we have $\E\left[ \norm*{\frac{1}{2} (\sigma_t \sigma_t^\top)\, \xi_t(x)}_2^2 \right] \leq V_k$ for some $V_k > 0$ and for all $x \in \R^d$.
        \item[(iii)] $\norm{b_t^{(k)}(\bar{x}_t)}_2 \leq B$ is uniformly bounded by $B > 0$.
    \end{enumerate}
    Then, the Euler trajectory converges uniformly over $[0, 1]$ in probability to the ODE dynamics as $N \to \infty$:
    \begin{equation*}
        \sup_{t \in [0, 1]}\; \norm{\bar{x}_t - \tilde{x}_t}_2 \overset{p}{\to} 0.
    \end{equation*}
\end{proposition}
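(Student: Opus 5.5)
We compare $\bar{x}_t$ and $\tilde{x}_t$ through their error $e_t \coloneq \bar{x}_t - \tilde{x}_t$. Here $b_t^{(k)}(x) \coloneq b_t(x) + \frac{1}{2}(\sigma_t\sigma_t^\top)u_t^{(k)}(x)$ is the plug-in drift, and we write $\zeta_t \coloneq \frac{1}{2}(\sigma_t\sigma_t^\top)\xi_t$ for the scaled residual. Then the Euler drift equals $b^{(k)}_{\eta(t)}(\bar{x}_{\eta(t)}) + \zeta_{\eta(t)}(\bar{x}_{\eta(t)})$, and integrating both dynamics gives
\begin{align*}
    e_t = \int_0^t \bigl( b^{(k)}_{\eta(s)}(\bar{x}_{\eta(s)}) - b^{(k)}_s(\tilde{x}_s) \bigr)\, ds + M_t,
    \qquad
    M_t \coloneq \int_0^t \zeta_{\eta(s)}(\bar{x}_{\eta(s)})\, ds .
\end{align*}
The first integral is a deterministic-drift discrepancy that Gr\"onwall's inequality will control. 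The second, $M_t$, is accumulated Monte Carlo noise, which should average out because fresh particles are drawn at every step.

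\textbf{Step 1 (drift term).} By assumption (i), the integrand is bounded by
\begin{align*}
    L\bigl(\norm{\bar{x}_{\eta(s)} - \bar{x}_s}_2 + \norm{\bar{x}_s - \tilde{x}_s}_2 + h\bigr).
\end{align*}
The within-step displacement satisfies
\begin{align*}
    \norm{\bar{x}_s - \bar{x}_{\eta(s)}}_2 \le h\bigl(B + \norm{\zeta_{\eta(s)}(\bar{x}_{\eta(s)})}_2\bigr),
\end{align*}
using assumption (iii) at the grid point; the paper's (iii) is stated along $\bar{x}_t$, which is what we need. Hence
\begin{align*}
    \norm{e_t}_2 \le L\int_0^t \norm{e_s}_2\, ds + Lh(1+B) + Lh\int_0^1 \norm{\zeta_{\eta(s)}}_2\, ds + \sup_{r\le 1}\norm{M_r}_2 ,
\end{align*}
and Gr\"onwall's inequality yields
\begin{align*}
    \sup_t \norm{e_t}_2 \le e^L\Bigl(Lh(1+B) + Lh\int_0^1\norm{\zeta_{\eta(s)}}_2\, ds + \sup_t \norm{M_t}_2\Bigr).
\end{align*}
By Cauchy--Schwarz and assumption (ii), $\E\int_0^1 \norm{\zeta_{\eta(s)}}_2\, ds \le \sqrt{V_k}$. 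So the middle term is $O_p(h)$ by Markov's inequality, and the first term is $O(h)$.

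\textbf{Step 2 (noise term, the main obstacle).} Let $\mc{G}_j$ be the sigma-algebra generated by $x_0$ and the particle draws at grid times $t_0,\dots,t_{j-1}$. The point $\bar{x}_{t_j}$ is $\mc{G}_j$-measurable, and the step-$j$ particles are drawn fresh from $p_{1\vert t_j}(\cdot\given \bar{x}_{t_j})$. Since $u^{(k)}$ is by definition the conditional mean of $\grad\log\hat{h}^{(k)}$, we get $\E[\zeta_{t_j}(\bar{x}_{t_j}) \given \mc{G}_j] = 0$. Therefore
\begin{align*}
    S_n \coloneq \sum_{j<n} h\,\zeta_{t_j}(\bar{x}_{t_j})
\end{align*}
is a vector-valued discrete martingale. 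Its increments are orthogonal, so assumption (ii), applied conditionally on $\mc{G}_j$ since it holds for every fixed $x$, gives $\E\norm{S_N}_2^2 \le N h^2 V_k = hV_k$. Doob's $L^2$ maximal inequality applied to the submartingale $\norm{S_n}_2$ gives
\begin{align*}
    \E\max_n \norm{S_n}_2^2 \le 4hV_k .
\end{align*}
Between grid points, $M_t$ interpolates linearly between $S_n$ and $S_{n+1}$, so $\sup_t\norm{M_t}_2 \le \max_n \norm{S_n}_2$. The delicate point is the filtration: the martingale-difference property must hold conditionally on the current (random) Euler iterate, which relies on drawing independent particles at each step and on (ii) being uniform in $x$.

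\textbf{Step 3 (conclusion).} Combining the steps,
\begin{align*}
    \sup_t\norm{e_t}_2 \le e^L\bigl(O(h) + O_p(h) + O_p(\sqrt{h V_k})\bigr),
\end{align*}
and by Chebyshev and Markov each term tends to $0$ in probability as $N\to\infty$, i.e., as $h\to 0$. This proves the claim, with rate $O_p(\sqrt{h})$.
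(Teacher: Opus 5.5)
Your proposal is correct and follows essentially the same route as the paper. Both proofs split $\bar{x}_t - \tilde{x}_t$ into two parts. The first is a drift discrepancy handled by Gr\"onwall, with the within-step displacement bounded using (iii) and the resulting $O_p(h)$ residual controlled by Jensen/Cauchy--Schwarz, (ii), and Markov. The second is the Monte Carlo term $M_t$, controlled by orthogonality of the martingale increments and Doob's $L^2$ maximal inequality at the grid points, which gives $\E \max_n \norm{M_{t_n}}_2^2 \le 4hV_k$. Your deviations are cosmetic: you apply (i) directly between $(\eta(s), \bar{x}_{\eta(s)})$ and $(s, \tilde{x}_s)$ instead of inserting $b^{(k)}_s(\bar{x}_s)$, and you state the filtration (including $x_0$) and the conditional use of (ii) a bit more carefully than the paper does.
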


The proof combines Gr\"onwall's inequality with Doob's $L^2$ maximal inequality for the martingale residual.

\begin{proof}
    First, substituting $b_t^{(k)}(x) = b_t(x) + \frac{1}{2}(\sigma_t \sigma_t^\top) u_t^{(k)}(x)$ and $\grad \log \hat{h}_t^{(k)}(x) = u_t^{(k)}(x) + \xi_t(x)$ shows that
    \begin{align*}
        \bar{x}_t
         & = x_0 + \int_0^t \left( b_{\eta(s)}(\bar{x}_{\eta(s)}) + \frac{1}{2} (\sigma_{\eta(s)} \sigma_{\eta(s)}^\top) \left( u_{\eta(s)}^{(k)}(\bar{x}_{\eta(s)}) + \xi_{\eta(s)}(\bar{x}_{\eta(s)}) \right) \right) ds \\
         & = x_0 + \int_0^t \left( b_{\eta(s)}^{(k)}(\bar{x}_{\eta(s)}) + \frac{1}{2} (\sigma_{\eta(s)} \sigma_{\eta(s)}^\top) \xi_{\eta(s)}(\bar{x}_{\eta(s)}) \right) ds.
    \end{align*}
    Thus, we have
    \begin{align*}
        \norm{\bar{x}_t - \tilde{x}_t}_2
         & = \norm*{\int_0^t \left( b_{\eta(s)}^{(k)}(\bar{x}_{\eta(s)}) - b_s^{(k)}(\tilde{X}_s) \right) ds + M_t}_2                                                                                                \\
         & \leq \norm*{\int_0^t \left( b_s^{(k)}(\bar{x}_s) - b_s^{(k)}(\tilde{X}_s) \right) ds}_2 + \norm*{\int_0^t \left( b_{\eta(s)}^{(k)}(\bar{x}_{\eta(s)}) - b_s^{(k)}(\bar{x}_s) \right) ds}_2 + \norm{M_t}_2 \\
         & \leq L \int_0^t \norm{\bar{x}_s - \tilde{X}_s}_2\, ds + \norm*{\int_0^t \left( b_{\eta(s)}^{(k)}(\bar{x}_{\eta(s)}) - b_s^{(k)}(\bar{x}_s) \right) ds}_2 + \norm{M_t}_2,
    \end{align*}
    where $M_t \coloneq \int_0^t \frac{1}{2} (\sigma_{\eta(s)} \sigma_{\eta(s)}^\top)\, \xi_{\eta(s)}(\bar{x}_{\eta(s)})\, ds$. By Gr\"onwall's inequality, this implies the bound
    \begin{align*}
        \sup_{t \in [0, 1]}\; \norm{\bar{x}_t - \tilde{x}_t}_2 \leq \left( \sup_{t \in [0, 1]}\, \norm*{\int_0^t \left( b_{\eta(s)}^{(k)}(\bar{x}_{\eta(s)}) - b_s^{(k)}(\bar{x}_s) \right) ds}_2 + \sup_{t \in [0, 1]} \norm{M_t}_2 \right) e^L.
    \end{align*}
    At this point, we have
    \begin{align*}
         & \sup_{t \in [0, 1]}\, \norm*{\int_0^t \left( b_{\eta(s)}^{(k)}(\bar{x}_{\eta(s)}) - b_s^{(k)}(\bar{x}_s) \right) ds}_2                                                                                  \\
         & \quad \leq L \left( \int_0^1 \norm{\bar{x}_{\eta(s)} - \bar{x}_s}_2\, ds + \int_0^1 \abs{\eta(s) - s}\, ds \right)                                                                                      \\
         & \quad \leq L \int_0^1 (s - \eta(s)) \norm*{ b_{\eta(s)}^{(k)}(\bar{x}_{\eta(s)}) + \frac{1}{2} (\sigma_{\eta(s)} \sigma_{\eta(s)}^\top)\, \xi_{\eta(s)}(\bar{x}_{\eta(s)}) }_2\, ds + L \int_0^1 h\, ds \\
         & \quad \leq Lh \int_0^1 \left( B + \norm*{ \frac{1}{2} (\sigma_{\eta(s)} \sigma_{\eta(s)}^\top)\, \xi_{\eta(s)}(\bar{x}_{\eta(s)}) }_2 \right) ds + Lh.
    \end{align*}
    Taking expectations on both sides and using Jensen's inequality, we obtain
    \begin{align*}
         & \E\left[ \sup_{t \in [0, 1]}\, \norm*{\int_0^t \left( b_{\eta(s)}^{(k)}(\bar{x}_{\eta(s)}) - b_s^{(k)}(\bar{x}_s) \right) ds}_2 \right]                                                                       \\
         & \quad \leq Lh \int_0^1 \left( B + \sqrt{\E\left[ \norm*{ \frac{1}{2} (\sigma_{\eta(s)} \sigma_{\eta(s)}^\top)\, \xi_{\eta(s)}(\bar{x}_{\eta(s)}) }_2^2 \givenlarge \bar{x}_{\eta(s)} \right]} \right) ds + Lh \\
         & \quad \leq Lh (B + \sqrt{V_k} + 1).
    \end{align*}
    Markov's inequality now implies that as $h \downarrow 0$,
    \begin{align*}
        \sup_{t \in [0, 1]}\, \norm*{\int_0^t \left( b_{\eta(s)}^{(k)}(\bar{x}_{\eta(s)}) - b_s^{(k)}(\bar{x}_s) \right) ds}_2 \overset{p}{\to} 0.
    \end{align*}
    Lastly, it remains to show that $\sup_{t \in [0, 1]}\, \norm{M_t}_2 \overset{p}{\to} 0$ as $h \downarrow 0$. Defining $t_n = nh$ for $0 \leq n \leq N$, we know that the supremum of $M_t$ is achieved at some $t_n$ since $M_t$ is piecewise linear. Furthermore, $(M_{t_n})_{n=0}^N$ is a discrete-time martingale with respect to the filtration $(\mc{F}_{t_n})_{n=0}^N$ generated by the independent samples drawn in the plug-in estimator, so by Doob's $L^2$ maximal inequality for discrete-time martingales:
    \begin{align*}
        \E\left[ \max_{0 \leq n \leq N}\, \norm{M_{t_n}}_2^2 \right] \leq 4\, \E[\norm{M_1}_2^2].
    \end{align*}
    Since the martingale increments are orthogonal in $L^2(\P)$, we have
    \begin{align*}
        \E[\norm{M_1}_2^2]
        = \sum_{n=0}^{N-1} h^2\, \E\left[ \norm*{\frac{1}{2} (\sigma_{t_n} \sigma_{t_n}^\top)\, \xi_{t_n}(\bar{x}_{t_n})}_2^2 \right]
        \leq V_k \sum_{n=0}^{N-1} h^2
        = h V_k.
    \end{align*}
    Thus, we have $\max_{0 \leq n \leq N}\, \norm{M_{t_n}}_2 \to 0$ in $L^2(\P)$ and therefore in probability by Chebyshev's inequality, completing the proof.
\end{proof}

\subsection{Plug-in flow for Gaussian target (non-memoryless)} \label{app:plugin-flow-gaussian-nonmemoryless}

The general non-memoryless analogue of \Cref{thm:plugin-flow-gaussian} is the following.

\begin{theorem}[Plug-in flow for Gaussian target (non-memoryless)] \label{thm:plugin-flow-gaussian-nonmemoryless}
    Consider the plug-in flow \eqref{eq:plugin-flow} with $k = 1$, and let $\Psi \coloneq \Psi_0$ as in \Cref{thm:ivf-bias-gaussian}. Then, the guidance term is given by
    \begin{align*}
        u_t^{(1)}(x) = -2 \lambda \Sigma^{1/2} \Sigma_t^{-1/2} \Psi_t^{1/2} (\mu_{1 \vert t}(x) - a)
    \end{align*}
    and the terminal distribution of the plug-in flow is $\tilde{x}_1 \sim \Normal(\mu^{(1)},\, \Sigma^{(1)})$, where
    \begin{align*}
        \mu^{(1)} = \mu - \sqrt{\pi}\, (\lambda \Sigma)^{1/2} \exp(-\lambda \Sigma) \left( \erfi\!\left((\lambda \Sigma)^{1/2}\right) - \erfi\!\left((\lambda \Sigma)^{1/2} \Psi^{1/2}\right) \right) (\mu - a)
    \end{align*}
    and
    \begin{align*}
        \Sigma^{(1)} = \Sigma \exp(-2 \lambda \Sigma (I_d - \Psi)).
    \end{align*}
\end{theorem}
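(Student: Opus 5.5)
The plan is to use the reparameterization trick to get the guidance term in closed form. I will then observe that the resulting plug-in flow is an affine ODE, so Gaussianity is preserved, and solve the mean and covariance ODEs eigendirection by eigendirection. The memoryless statement is the special case $\Psi = 0$ of this argument, whose covariance formula is spelled out as $\Psi_t = t^2\Sigma\Sigma_t^{-1}$ in the proof of \Cref{prop:analytic-guidance-gaussian}.

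\textbf{Guidance term.} By the Gaussian conditioning derived in the proof of \Cref{thm:ivf-bias-gaussian}, I write $X_1 = \mu_{1\vert t}(x) + \Sigma_{1\vert t}^{1/2} Z$ with $Z \sim \Normal(0, I_d)$ independent of $x$. For $k=1$ we have $\grad \log \hat h^{(1)}_t(x) = \lambda \grad_x r(X_1) = -2\lambda (\grad_x \mu_{1\vert t}(x))^\top (X_1 - a)$. Here $\grad_x \mu_{1\vert t} = \Sigma^{1/2}\Sigma_t^{-1/2}\Psi_t^{1/2}$ is a symmetric matrix function of $\Sigma$. Taking the expectation over $Z$ kills the noise term, which gives $u^{(1)}_t(x) = -2\lambda \Sigma^{1/2}\Sigma_t^{-1/2}\Psi_t^{1/2}(\mu_{1\vert t}(x)-a)$.

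\textbf{Reduction to scalar ODEs.} From the proof of \Cref{thm:ivf-bias-gaussian}, $b_t(x) = \mu + \tfrac12 \dot\Sigma_t \Sigma_t^{-1}(x-\mu_t)$. The plug-in drift is therefore affine in $x$ with coefficients that are matrix functions of $\Sigma$. Since $\tilde x_0 \sim \Normal(0,I_d)$, $\tilde x_t$ stays Gaussian, and in the eigenbasis of $\Sigma$ the mean and variance decouple into scalar ODEs. Fix an eigenvalue $\sigma$ and write $s_t = (1-t)^2 + t^2\sigma$, $\psi_t$ for the eigenvalue of $\Psi_t$, and $c_t = \sqrt{\sigma\psi_t/s_t}$. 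The key identity is $\dot\psi_t = \eta_t^2 \psi_t / s_t$, which is immediate from the definition of $\Psi_t$. It gives $\eta_t^2 c_t^2 = \sigma \dot\psi_t$ and $\eta_t^2 c_t/\sqrt{s_t} = 2\sqrt{\sigma}\,\tfrac{d}{dt}\sqrt{\psi_t}$.

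For the deviation $y_t = \tilde x_t - \mu_t$ along this eigendirection, I use $\mu_{1\vert t}(x) - a = (\mu - a) + c_t y$. This gives
\[
\dot y_t = \tfrac{\dot s_t}{2 s_t} y_t - \lambda \eta_t^2 c_t\bigl((\mu-a) + c_t y_t\bigr).
\]

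\textbf{Covariance.} The variance $C_t$ satisfies $\tfrac{d}{dt}\log C_t = \dot s_t/s_t - 2\lambda\sigma\dot\psi_t$. Integrating from $0$ to $1$ with $s_0 = 1$, $s_1 = \sigma$, $\psi_1 = 1$ and $\psi_0 = \psi$ yields $C_1 = \sigma e^{-2\lambda\sigma(1-\psi)}$, i.e. $\Sigma^{(1)} = \Sigma\exp(-2\lambda\Sigma(I_d-\Psi))$.

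\textbf{Mean.} I substitute $z_t = y_t/\sqrt{s_t}$, which removes the $\dot s_t$ term and gives
\[
\dot z_t = -\lambda\sigma\dot\psi_t z_t - 2\lambda\sqrt\sigma(\mu-a)\tfrac{d}{dt}\sqrt{\psi_t}.
\]
Taking means, $\E z_0 = 0$, and with the integrating factor $e^{\lambda\sigma\psi_t}$,
\[
\E z_1 = -2\lambda\sqrt\sigma(\mu-a)\,e^{-\lambda\sigma}\int_{\sqrt\psi}^{1} e^{\lambda\sigma u^2}\,du .
\]
The integral equals $\tfrac{\sqrt\pi}{2\sqrt{\lambda\sigma}}\bigl(\erfi(\sqrt{\lambda\sigma}) - \erfi(\sqrt{\lambda\sigma\psi})\bigr)$. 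Multiplying by $\sqrt{s_1} = \sqrt\sigma$ gives the stated $\mu^{(1)}$.

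The main obstacle is spotting these two substitutions, namely the identity $\eta_t^2 c_t^2 = \sigma\dot\psi_t$ and the rescaling $z = y/\sqrt{s}$, which turn a messy time-dependent linear ODE into an exactly integrable one. I also need to take care at the endpoints. In the memoryless case the schedule $\eta_t$ blows up at $t = 0$, but only the integrated quantities $\psi_t$ enter, so I will check that these are continuous there. Everything else is routine algebra.
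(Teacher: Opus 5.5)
Your proposal is correct and follows essentially the same route as the paper: the reparameterization trick gives the guidance term, the affine drift preserves Gaussianity, and the mean and covariance ODEs are then solved in closed form. Your scalar identity $\eta_t^2 c_t^2 = \sigma\dot\psi_t$ is the eigendirection form of the paper's $\eta_t^2\Sigma\Sigma_t^{-1}\Psi_t = \Sigma\dot\Psi_t$, and your rescaling $z = y/\sqrt{s_t}$ followed by the factor $e^{\lambda\sigma\psi_t}$ is exactly the paper's single integrating factor $\Sigma_t^{-1/2}\exp(\lambda\Sigma\Psi_t)$ written one eigendirection at a time.
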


\begin{proof}
    We begin by computing the guidance term $u_t^{(1)}(x)$ using the reparameterization trick
    \begin{align*}
        X_1^{(1)}(x) = \mu_{1 \vert t}(x) + \Sigma_{1 \vert t}^{1/2} \epsilon
    \end{align*}
    where $\epsilon \sim \Normal(0, I_d)$ is independent of everything else. In particular, since $\mu_{1 \vert t}(x)$ is affine in $x$, we have $\nabla X_1^{(1)}(x) = \Sigma^{1/2} \Sigma_t^{-1/2} \Psi_t^{1/2}$. Thus, we apply the chain rule:
    \begin{align*}
        \grad \log \hat{h}^{(1)}_t(x)
        = \grad \left( -\lambda \norm{X_1^{(1)}(x) - a}_2^2 \right)
        = -2 \lambda \Sigma^{1/2} \Sigma_t^{-1/2} \Psi_t^{1/2} (X_1^{(1)}(x) - a),
    \end{align*}
    and taking conditional expectations on both sides yields the guidance term
    \begin{align*}
        u_t^{(1)}(x)
        = \E[\grad \log \hat{h}^{(1)}_t(X_t) \given X_t = x]
        = -2 \lambda \Sigma^{1/2} \Sigma_t^{-1/2} \Psi_t^{1/2} (\mu_{1 \vert t}(x) - a).
    \end{align*}
    Substituting this back into the plug-in flow ODE, the dynamics are given by
    \begin{align*}
        d\tilde{x}_t = \left( \mu + \frac{1}{2} \dot{\Sigma}_t \Sigma_t^{-1} (\tilde{x}_t - \mu_t) - \lambda \eta_t^2\, \Sigma^{1/2} \Sigma_t^{-1/2} \Psi_t^{1/2} (\mu_{1 \vert t}(\tilde{x}_t) - a) \right) dt.
    \end{align*}
    Because the initial condition $\tilde{x}_0 \sim \Normal(0, I_d)$ is Gaussian and the drift is an affine function of $\tilde{x}_t$, the process $\tilde{x}_t$ remains exactly Gaussian for all $t \in [0, 1]$, and we track its mean $\mu^{(1)}_t = \E[\tilde{x}_t]$ and covariance $\Sigma^{(1)}_t = \Cov(\tilde{x}_t)$. Taking the expectation of the drift yields an ODE for the mean:
    \begin{align*}
        d\mu^{(1)}_t
        = \mu + \left( \frac{1}{2} \dot{\Sigma}_t \Sigma_t^{-1} - \lambda \eta_t^2\, \Sigma \Sigma_t^{-1} \Psi_t \right) (\mu^{(1)}_t - \mu_t) - \lambda \eta_t^2\, \Sigma^{1/2} \Sigma_t^{-1/2} \Psi_t^{1/2} (\mu - a).
    \end{align*}
    We then multiply by the integrating factor $\Sigma_t^{-1/2} \exp(\lambda \Sigma \Psi_t)$ to get
    \begin{align*}
        \frac{d}{dt} \left( \Sigma_t^{-1/2} \exp(\lambda \Sigma \Psi_t) (\mu^{(1)}_t - \mu_t) \right)
        = -2 \lambda \Sigma^{1/2} \exp(\lambda \Sigma \Psi_t)\, \frac{d}{dt} (\Psi_t^{1/2}) (\mu - a).
    \end{align*}
    Integrating both sides from 0 to 1 then gives us
    \begin{align*}
        \Sigma^{-1/2} \exp(\lambda \Sigma) (\mu^{(1)} - \mu)
        = -2 \lambda \Sigma^{1/2} \int_0^1 \exp(\lambda \Sigma \Psi_t)\, \frac{d}{dt} (\Psi_t^{1/2})\, dt\, (\mu - a).
    \end{align*}
    Making the substitution $M_t = (\lambda \Sigma)^{1/2} \Psi_t^{1/2}$ and using the fundamental theorem of calculus gives us a formula for the terminal mean:
    \begin{align*}
        \mu^{(1)} = \mu - \sqrt{\pi}\, (\lambda \Sigma)^{1/2} \exp(-\lambda \Sigma) \left( \erfi\!\left((\lambda \Sigma)^{1/2}\right) - \erfi\!\left((\lambda \Sigma)^{1/2} \Psi^{1/2}\right) \right) (\mu - a).
    \end{align*}
    Next, we have the linear ODE
    \begin{align*}
        d(\tilde{x}_t - \mu^{(1)}_t)
        = \left( \frac{1}{2} \dot{\Sigma}_t \Sigma_t^{-1} - \lambda \eta_t^2 \Sigma \Sigma_t^{-1} \Psi_t \right) (\tilde{x}_t - \mu^{(1)}_t)\, dt,
    \end{align*}
    so by the chain rule, the covariance evolves according to the equation
    \begin{align*}
        d\Sigma^{(1)}_t
        = (\dot{\Sigma}_t \Sigma_t^{-1} - 2 \lambda \eta_t^2 \Sigma \Sigma_t^{-1} \Psi_t)\, \Sigma^{(1)}_t\, dt
        = (\dot{\Sigma}_t \Sigma_t^{-1} - 2 \lambda \Sigma \dot{\Psi}_t)\, \Sigma^{(1)}_t\, dt.
    \end{align*}
    Because all relevant matrices commute, we can solve this ODE in closed form at $t = 1$:
    \begin{align*}
        \Sigma^{(1)}
        = \exp\!\left( \int_0^1 (\dot{\Sigma}_v \Sigma_v^{-1} - 2 \lambda \Sigma \dot{\Psi}_v)\, dv \right)\, \Sigma^{(1)}_0
        = \Sigma \exp(-2 \lambda \Sigma (I_d - \Psi)). \tag*{\qedhere}
    \end{align*}
\end{proof}

\subsection{Proof of \texorpdfstring{\Cref{thm:plugin-flow-gaussian}}{Theorem 4 (memoryless plug-in flow)}} \label{app:plugin-flow-gaussian-memoryless-proof}

\thmPluginGaussian*

\begin{proof}
    Under the memoryless schedule, $\Psi = 0$ and $\Sigma^{1/2} \Sigma_t^{-1/2} \Psi_t^{1/2} = t \Sigma \Sigma_t^{-1}$, so substituting into the formulas of \Cref{thm:plugin-flow-gaussian-nonmemoryless} and using $\erfi(0) = 0$ yields the stated guidance term and terminal mean and covariance.
\end{proof}

\subsection{Plug-in guidance is aggressive in the tails} \label{app:plugin-flow-gaussian-multi}

The following uniform bound on the $k$-particle plug-in drift underlies the $\infty$-Wasserstein bound of \Cref{thm:wasserstein-bound}.

\begin{lemma}[Plug-in guidance is aggressive in the tails] \label{lem:plugin-flow-gaussian-multi}
    In the setting of \Cref{thm:plugin-flow-gaussian}, consider the plug-in flow \eqref{eq:plugin-flow} with $k$ particles, where $u_t^{(k)}(x) = \E[\grad \log \hat{h}^{(k)}_t(X_t) \given X_t = x]$, $\hat{h}^{(k)}_t(x) = \frac{1}{k} \sum_{i=1}^k e^{\lambda r(X_1^{(i)}(x))}$, and $X_1^{(1)}(x), \ldots, X_1^{(k)}(x) \sim \mathrm{law}(X_1 \given X_t = x)$ are drawn independently and independent from everything else. Then, we have the uniform bounds
    \begin{align*}
        \sup_{x \in \R^d}\; \norm{u_t^{(k)}(x) - u_t^{(1)}(x)}_2
        \leq M_t(k)
        \coloneq 2 \lambda\, \norm*{\Sigma^{1/2} \Sigma_t^{-1/2} \Psi_t^{1/2} \Sigma_{1 \vert t}^{1/2}}_2 \sqrt{2 \log k}
    \end{align*}
    and
    \begin{align*}
         & 4 \lambda^2\, \lambda_{\min}\!\left( \Sigma^{1/2} \Sigma_t^{-1/2} \Psi_t^{1/2} \Sigma_{1 \vert t} (I_d + 2 \lambda \Sigma_{1 \vert t})^{-1} \right) \norm{\mu_{1 \vert t}(x) - a}_2 - M_t(k)            \\
         & \quad \leq \norm{u_t^{(k)}(x) - \grad \log h_t(x)}_2                                                                                                                                                    \\
         & \quad \leq 4 \lambda^2\, \lambda_{\max}\!\left( \Sigma^{1/2} \Sigma_t^{-1/2} \Psi_t^{1/2} \Sigma_{1 \vert t} (I_d + 2 \lambda \Sigma_{1 \vert t})^{-1} \right) \norm{\mu_{1 \vert t}(x) - a}_2 + M_t(k)
    \end{align*}
    for all $x \in \R^d$.
\end{lemma}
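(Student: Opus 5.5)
The plan is to write the $k$-particle drift through the reparameterization trick as a self-normalized (softmax) average of the $k=1$ drifts. The comparison with $u_t^{(1)}$ then reduces to bounding a softmax-weighted average of standard Gaussians. The comparison with the exact guidance $\grad \log h_t$ follows from the closed forms in \Cref{thm:plugin-flow-gaussian-nonmemoryless} and \Cref{thm:ivf-bias-gaussian} together with the triangle inequality.

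\textbf{Step 1 (softmax representation).} Write $A_t \coloneq \Sigma^{1/2} \Sigma_t^{-1/2} \Psi_t^{1/2}$ and $S_t \coloneq \Sigma_{1 \vert t}^{1/2}$, and reparameterize $X_1^{(i)}(x) = \mu_{1 \vert t}(x) + S_t \epsilon_i$ with $\epsilon_1, \ldots, \epsilon_k \sim \Normal(0, I_d)$ i.i.d. Since $\mu_{1 \vert t}$ is affine with Jacobian $A_t$, the chain rule gives
\begin{align*}
    \grad \log \hat{h}^{(k)}_t(x) = -2 \lambda A_t \sum_{i=1}^k w_i(x)\, (X_1^{(i)}(x) - a), \qquad w_i(x) \propto e^{-\lambda \norm{X_1^{(i)}(x) - a}_2^2}, \quad \sum_{i=1}^k w_i(x) = 1.
\end{align*}
Subtracting the $k=1$ drift $u_t^{(1)}(x) = -2 \lambda A_t (\mu_{1 \vert t}(x) - a)$ from \Cref{thm:plugin-flow-gaussian-nonmemoryless} and taking expectations yields
\begin{align*}
    u_t^{(k)}(x) - u_t^{(1)}(x) = -2 \lambda A_t S_t\, \E\!\left[ \sum_{i=1}^k w_i(x)\, \epsilon_i \right].
\end{align*}
Exchanging the gradient and the expectation is harmless here, because everything is smooth and Gaussian-integrable.

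\textbf{Step 2 (dimension-free maximal bound).} Bounding $\norm{\sum_i w_i \epsilon_i}_2 \le \max_i \norm{\epsilon_i}_2$ directly is too crude, since it costs a factor of order $\sqrt{d}$. Instead, I would bound the norm of the \emph{expectation} by duality. For any unit vector $v$, convexity of the weights gives
\begin{align*}
    v^\top \E\!\left[ \sum_{i=1}^k w_i\, \epsilon_i \right] \le \E\!\left[ \max_{i \le k} v^\top \epsilon_i \right] \le \sqrt{2 \log k},
\end{align*}
where the last inequality is the standard maximal inequality for $k$ standard normals, and $v^\top \epsilon_i \sim \Normal(0,1)$. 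Taking the supremum over $v$ gives $\norm{\E[\sum_i w_i \epsilon_i]}_2 \le \sqrt{2 \log k}$. Combined with the operator norm $\norm{A_t S_t}_2$, this gives $\norm{u_t^{(k)}(x) - u_t^{(1)}(x)}_2 \le M_t(k)$ uniformly in $x$. This bound is independent of $x$ precisely because the reparameterized noise does not depend on $x$.

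\textbf{Step 3 (comparison with exact guidance).} \Cref{thm:ivf-bias-gaussian} gives $\grad \log h_t(x) = -2 \lambda A_t (I_d + 2 \lambda \Sigma_{1 \vert t})^{-1} (\mu_{1 \vert t}(x) - a)$. Using $I_d - (I_d + 2\lambda \Sigma_{1 \vert t})^{-1} = 2 \lambda \Sigma_{1 \vert t} (I_d + 2 \lambda \Sigma_{1 \vert t})^{-1}$, we get
\begin{align*}
    u_t^{(1)}(x) - \grad \log h_t(x) = -4 \lambda^2 A_t \Sigma_{1 \vert t} (I_d + 2 \lambda \Sigma_{1 \vert t})^{-1} (\mu_{1 \vert t}(x) - a).
\end{align*}
All the factors here are commuting positive semidefinite matrix functions of $\Sigma$, so their product is symmetric PSD. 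Its action on $\mu_{1 \vert t}(x) - a$ therefore has norm between $\lambda_{\min}$ and $\lambda_{\max}$ of that product times $\norm{\mu_{1 \vert t}(x) - a}_2$.

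\textbf{Conclusion and main obstacle.} Both the upper and the lower bound of the lemma follow from Step 3 combined with Step 1 and the (reverse) triangle inequality. The only delicate point is Step 2, namely obtaining $\sqrt{2 \log k}$ with no dimension dependence. The duality trick handles this by taking the expectation before the norm, so no additional obstacle is expected beyond checking that $A_t$ and $S_t$ commute, which holds since both are simultaneously diagonalizable with $\Sigma$.
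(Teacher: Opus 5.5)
Your proposal is correct and follows essentially the same route as the paper's proof. The paper likewise reparameterizes to write $u_t^{(k)} - u_t^{(1)} = -2\lambda A_t \Sigma_{1 \vert t}^{1/2}\,\E[\sum_i w_i \epsilon_i]$. It then bounds that expectation without dimension dependence, by duality against unit vectors $v$ together with the Gaussian maximal inequality $\E[\max_{i \le k} v^\top \epsilon_i] \le \sqrt{2\log k}$. Finally, it combines the closed form $u_t^{(1)} - \grad \log h_t = -4\lambda^2 A_t \Sigma_{1 \vert t}(I_d + 2\lambda\Sigma_{1 \vert t})^{-1}(\mu_{1 \vert t}(x) - a)$ with the (reverse) triangle inequality.

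Two minor points. No exchange of gradient and expectation is actually needed, since $u_t^{(k)}$ is defined directly as the expectation of the reparameterized gradient. Also, commutativity of $A_t$ and $\Sigma_{1 \vert t}^{1/2}$ plays no role in the operator-norm bound; it only matters for the $\lambda_{\min}/\lambda_{\max}$ sandwich in your Step 3.
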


The first inequality shows that the $k$-particle plug-in flow is well-approximated by the $k = 1$ dynamics in the tails. The second inequality shows that the error of the plug-in drift grows linearly in $\norm{x}_2$, confirming that the plug-in estimator is overly aggressive far from the reward center.

\begin{proof}
    We use the reparameterization trick to write
    \begin{align*}
        X_1^{(i)}(x) = \mu_{1 \vert t}(x) + \Sigma_{1 \vert t}^{1/2} \epsilon^{(i)}
    \end{align*}
    for i.i.d.\ $\epsilon^{(1)}, \ldots, \epsilon^{(k)} \sim \Normal(0, I_d)$ independent of everything else. Recall from the proof of \Cref{thm:plugin-flow-gaussian-nonmemoryless} that $\nabla X_1^{(i)}(x) = \Sigma^{1/2} \Sigma_t^{-1/2} \Psi_t^{1/2}$, so by the chain rule, we have
    \begin{align*}
        \grad \log \hat{h}^{(k)}_t(x)
         & = \grad \log \left( \frac{1}{k} \sum_{i=1}^k e^{-\lambda \norm{X_1^{(i)}(x) - a}_2^2} \right)                                                                                                                                  \\
         & = -2 \lambda\, \Sigma^{1/2} \Sigma_t^{-1/2} \Psi_t^{1/2} \frac{\sum_{i=1}^k e^{-\lambda \norm{X_1^{(i)}(x) - a}_2^2} (X_1^{(i)}(x) - a)}{\sum_{i=1}^k e^{-\lambda \norm{X_1^{(i)}(x) - a}_2^2}}                                \\
         & = -2 \lambda\, \Sigma^{1/2} \Sigma_t^{-1/2} \Psi_t^{1/2} (\mu_{1 \vert t}(x) - a)                                                                                                                                              \\
         & \quad - 2 \lambda\, \Sigma^{1/2} \Sigma_t^{-1/2} \Psi_t^{1/2} \Sigma_{1 \vert t}^{1/2}\, \frac{\sum_{i=1}^k e^{-\lambda \norm{X_1^{(i)}(x) - a}_2^2}\, \epsilon^{(i)}}{\sum_{i=1}^k e^{-\lambda \norm{X_1^{(i)}(x) - a}_2^2}}.
    \end{align*}
    Taking a conditional expectation and subtracting $u_t^{(1)}(x)$ from both sides, we have
    \begin{equation} \label{eq:plugin-multi-bias}
        u_t^{(k)}(x) - u_t^{(1)}(x)
        = -2 \lambda\, \Sigma^{1/2} \Sigma_t^{-1/2} \Psi_t^{1/2} \Sigma_{1 \vert t}^{1/2}\, \E\!\left[\frac{\sum_{i=1}^k e^{-\lambda \norm{X_1^{(i)}(x) - a}_2^2}\, \epsilon^{(i)}}{\sum_{i=1}^k e^{-\lambda \norm{X_1^{(i)}(x) - a}_2^2}}\right].
    \end{equation}
    For any unit vector $v \in \R^d$, the projections $\langle v, \epsilon^{(i)}\rangle$ are i.i.d.\ $\Normal(0, 1)$, and since the softmax weights are nonnegative and sum to one,
    \begin{align*}
        \norm*{\E\!\left[\frac{\sum_{i=1}^k e^{-\lambda \norm{X_1^{(i)}(x) - a}_2^2}\, \epsilon^{(i)}}{\sum_{i=1}^k e^{-\lambda \norm{X_1^{(i)}(x) - a}_2^2}}\right]}_2
         & = \sup_{\norm{v}_2 = 1}\; \E\!\left[\left\langle v,\, \frac{\sum_{i=1}^k e^{-\lambda \norm{X_1^{(i)}(x) - a}_2^2}\, \epsilon^{(i)}}{\sum_{i=1}^k e^{-\lambda \norm{X_1^{(i)}(x) - a}_2^2}}\right\rangle\right] \\
         & \leq \sup_{\norm{v}_2 = 1}\; \E\!\left[\max_{1 \leq i \leq k}\, \langle v, \epsilon^{(i)}\rangle\right]                                                                                                        \\
         & \leq \sqrt{2 \log k}
    \end{align*}
    by the standard sub-Gaussian maximal inequality (e.g., \cite[Theorem 2.5]{boucheron2013concentration}). Applying a spectral-norm bound to the matrix factor gives the first bound. For the second inequality, note that
    \begin{equation} \label{eq:plugin-multi-triangle}
        \begin{split}
             & u_t^{(k)}(x) - \grad \log h_t(x)                                                                                                                                                                       \\
             & \quad = (u_t^{(1)}(x) - \grad \log h_t(x)) + (u_t^{(k)}(x) - u_t^{(1)}(x))                                                                                                                             \\
             & \quad = \left( -4 \lambda^2\, \Sigma^{1/2} \Sigma_t^{-1/2} \Psi_t^{1/2} \Sigma_{1 \vert t} (I_d + 2 \lambda \Sigma_{1 \vert t})^{-1} (\mu_{1 \vert t}(x) - a) \right) + (u_t^{(k)}(x) - u_t^{(1)}(x)),
        \end{split}
    \end{equation}
    so by the reverse triangle inequality and the first part of the theorem, we have
    \begin{align*}
         & \norm{u_t^{(k)}(x) - \grad \log h_t(x)}_2                                                                                                                                                                \\
         & \quad \geq \norm*{4 \lambda^2\, \Sigma^{1/2} \Sigma_t^{-1/2} \Psi_t^{1/2} \Sigma_{1 \vert t} (I_d + 2 \lambda \Sigma_{1 \vert t})^{-1} (\mu_{1 \vert t}(x) - a)}_2                                       \\
         & \quad \quad - \norm{u_t^{(k)}(x) - u_t^{(1)}(x)}_2                                                                                                                                                       \\
         & \quad \geq 4 \lambda^2\, \lambda_{\min}\!\left( \Sigma^{1/2} \Sigma_t^{-1/2} \Psi_t^{1/2} \Sigma_{1 \vert t} (I_d + 2 \lambda \Sigma_{1 \vert t})^{-1} \right) \norm{\mu_{1 \vert t}(x) - a}_2 - M_t(k).
    \end{align*}
    Similarly, the upper bound follows by applying the triangle inequality to~\eqref{eq:plugin-multi-triangle} and using the first part of the theorem again.
\end{proof}

\subsection{Proof of \texorpdfstring{\Cref{thm:wasserstein-bound}}{the infinity-Wasserstein bound}} \label{app:wasserstein-proof}

\thmWasserstein*

\begin{proof}
    Let $X_t^{(k)}$ denote the solution to the plug-in flow ODE with $k$ particles and note that
    \begin{align*}
        X_t^{(k)} - X_t^{(1)}
        = \int_0^t \left( b_s^{(k)}(X_s^{(k)}) - b_s^{(1)}(X_s^{(1)}) \right) ds,
    \end{align*}
    which implies that
    \begin{align*}
        \norm{X_t^{(k)} - X_t^{(1)}}_2
         & \leq \int_0^t \norm*{b_s^{(k)}(X_s^{(k)}) - b_s^{(1)}(X_s^{(1)})}_2\, ds                                                                       \\
         & \leq \int_0^t \left( \norm*{b_s^{(k)}(X_s^{(k)}) - b_s^{(1)}(X_s^{(k)})}_2 + \norm*{b_s^{(1)}(X_s^{(k)}) - b_s^{(1)}(X_s^{(1)})}_2 \right) ds.
    \end{align*}
    The first term in the integrand is bounded above by $\frac{1}{2} \eta_s^2\, M_s(k)$ by \Cref{lem:plugin-flow-gaussian-multi}. For the second term, note that $b_s^{(1)}$ is affine, so it is Lipschitz with constant
    \begin{align*}
        \grad b_s^{(1)}(x) = \frac{1}{2} \dot{\Sigma}_s \Sigma_s^{-1} - \lambda \eta_s^2\, \Sigma \Sigma_s^{-1} \Psi_s.
    \end{align*}
    Combining these observations gives us
    \begin{align*}
        \norm{X_t^{(k)} - X_t^{(1)}}_2
        \leq \int_0^1 \left( \frac{1}{2} \eta_s^2\, M_s(k) + \norm*{\frac{1}{2} \dot{\Sigma}_s \Sigma_s^{-1} - \lambda \eta_s^2\, \Sigma \Sigma_s^{-1} \Psi_s}_2\, \norm{X_s^{(k)} - X_s^{(1)}}_2 \right) ds.
    \end{align*}
    Applying Gr\"onwall's inequality then gives us
    \begin{align*}
         & \norm{X_1^{(k)} - X_1^{(1)}}_2                                                                                                                                                                                                                                              \\
         & \quad \leq \int_0^1 \frac{1}{2} \eta_s^2\, M_s(k) \exp\!\left( \int_s^1 \norm*{\frac{1}{2} \dot{\Sigma}_v \Sigma_v^{-1} - \lambda \eta_v^2\, \Sigma \Sigma_v^{-1} \Psi_v}_2\, dv \right) ds                                                                                 \\
         & \quad = \lambda \sqrt{2 \log k} \int_0^1 \eta_s^2\, \norm*{\Sigma^{1/2} \Sigma_s^{-1/2} \Psi_s^{1/2} \Sigma_{1 \vert s}^{1/2}}_2 \exp\!\left( \int_s^1 \norm*{\frac{1}{2} \dot{\Sigma}_v \Sigma_v^{-1} - \lambda \eta_v^2\, \Sigma \Sigma_v^{-1} \Psi_v}_2\, dv \right) ds.
    \end{align*}
    Lastly, we can choose the coupling $\pi = \mathrm{law}(X_1^{(k)}, X_1^{(1)})$ to conclude the desired bound.
\end{proof}

\subsection{Proof of \texorpdfstring{\Cref{thm:analytic-guidance-gmm}}{the analytic GMM guidance theorem}} \label{app:ivf-bias-gmm-proof}

\begin{restatable}[Analytic guidance for Gaussian mixture target]{theorem}{thmAnalyticGMM} \label{thm:analytic-guidance-gmm}
    The overall guidance term is given by
    \begin{align*}
        \grad \log h_t(x) = \sum_{i=1}^m w_{it}(x)\, (\grad \log \rho_{it}(x) + \grad \log h_{it}(x)) - \grad \log \rho_t(x)
    \end{align*}
    where
    \begin{align*}
        w_{it}(x) \coloneq \frac{\pi_i\, \rho_{it}(x)\, h_{it}(x)}{\sum_{j=1}^m \pi_j\, \rho_{jt}(x)\, h_{jt}(x)}.
    \end{align*}
\end{restatable}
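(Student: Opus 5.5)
The approach is to write the mixture's Doob $h$-function as a posterior-weighted average of component $h$-functions and then differentiate. The mixture is a latent-variable model: let $Z \in \{1, \ldots, m\}$ with $\P(Z = i) = \pi_i$, and draw $I_1 \given Z = i \sim \Normal(\mu_i, \Sigma_i)$. Under the memoryless schedule, \Cref{thm:memoryless-schedule} gives $(X_t \given X_1) \overset{d}{=} (I_t \given I_1)$. The forward process conditioned on $Z = i$ is therefore exactly the forward process for the single Gaussian target $\Normal(\mu_i, \Sigma_i)$. It has marginal $\rho_{it}$, and its Doob $h$-function is $h_{it}(x) = \E[e^{\lambda r(X_1)} \given X_t = x, Z = i]$.

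This identification is the step I expect to need the most care. The point is that $Z$ and $X_t$ are conditionally independent given $X_1$. Since $X_0 \indep X_1$ and the backward dynamics from $X_1$ do not depend on $Z$, the joint law of $(Z, X_t, X_1)$ factorizes as $\pi_i\, \Normal(x_1 \given \mu_i, \Sigma_i)\, p(x_t \given x_1)$. Conditioning this law on $Z = i$ reproduces the single-Gaussian forward process, so \Cref{prop:analytic-guidance-gaussian} applies component-wise with $(\mu_i, \Sigma_i)$ in place of $(\mu, \Sigma)$.

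Next, I would apply the tower property over $Z$:
\begin{align*}
    h_t(x) = \sum_{i=1}^m \P(Z = i \given X_t = x)\, h_{it}(x) = \sum_{i=1}^m \frac{\pi_i\, \rho_{it}(x)}{\rho_t(x)}\, h_{it}(x),
\end{align*}
using Bayes' rule and $\rho_t = \sum_j \pi_j \rho_{jt}$. Taking logarithms gives
\begin{align*}
    \log h_t(x) = \log \sum_{i=1}^m \pi_i\, \rho_{it}(x)\, h_{it}(x) - \log \rho_t(x).
\end{align*}
Differentiating the log-sum-exp term produces $\sum_i w_{it}(x)\, \grad \log(\rho_{it} h_{it})(x)$, with the stated softmax weights $w_{it}$. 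This yields the claimed formula.

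The differentiation itself is routine. All densities and $h_{it}$ are smooth and positive: $h_{it}$ is the explicit Gaussian expression from \Cref{thm:ivf-bias-gaussian}. Hence the quotient and chain rules apply without issue.
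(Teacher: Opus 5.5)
Your proposal is correct and follows the paper's proof essentially verbatim: introduce the latent index $Z$, use the law of total expectation and Bayes' rule to get $h_t(x) = \rho_t(x)^{-1} \sum_{i} \pi_i\, \rho_{it}(x)\, h_{it}(x)$, and then take $\grad \log$. Your additional argument that conditioning on $Z = i$ recovers the component $h$-function, because $(X_t \given X_1)$ does not depend on the target under the memoryless schedule, makes explicit a step that the paper leaves implicit.
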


\begin{proof}
    Introducing a latent mixture index $Z \in [m]$ and applying the law of total expectation:
    \begin{align*}
        h_t(x)
        = \sum_{i=1}^m \P(Z = i \given X_t = x)\, \E[e^{\lambda r(X_1)} \given X_t = x, Z = i]
        = \frac{1}{\rho_t(x)} \sum_{i=1}^m \pi_i\, \rho_{it}(x)\, h_{it}(x).
    \end{align*}
    Taking $\grad \log$ and applying the product rule yields the result.
\end{proof}

\subsection{Proof of \texorpdfstring{\Cref{thm:plugin-flow-gmm}}{Theorem (plug-in flow for Gaussian mixture)}} \label{app:plugin-flow-gmm-proof}

\thmPluginGMM*

\begin{proof}
    The guidance formula~\eqref{eq:plugin-gmm-guidance} follows directly from the reparameterization trick and the chain rule. To compute the Jacobian $\grad \bar{X}_1(x)$, we differentiate the GLASS ODE with respect to $x$. Using
    \begin{align*}
        \grad X^*_s(x) = \frac{1}{\eta^*_s (1 + \eta^*_s)} \left( \frac{t}{(1 - t)^2} I_d + \frac{s}{(1 - s)^2} \grad \bar{X}_s(x) \right),
    \end{align*}
    differentiating under the integral sign yields the variational equation
    \begin{align*}
        d(\grad \bar{X}_s(x))
         & = \frac{1}{1 - s} \left( \nabla D_{\tau_s^*}(X^*_s(x)) \left( \frac{1}{\eta^*_s (1 + \eta^*_s)} \left( \frac{t}{(1 - t)^2} I_d + \frac{s}{(1 - s)^2} \grad \bar{X}_s(x) \right) \right) - \grad \bar{X}_s(x) \right) ds
    \end{align*}
    with initial condition $\grad \bar{X}_0(x) = 0$. By Tweedie's formula, the optimal denoiser is given by
    \begin{align*}
        D_t(x) = \frac{x}{t} + \frac{(1 - t)^2}{t} \grad \log \rho_t(x),
    \end{align*}
    so taking the gradient of both sides gives us
    \begin{align*}
        \grad D_t(x) = \frac{1}{t} I_d + \frac{(1 - t)^2}{t} \grad^2 \log \rho_t(x).
    \end{align*}
    Since $\rho_t = \sum_{i=1}^m \pi_i\, \rho_{it}$, applying the chain rule to compute the Hessian of $\log \rho_t$ yields
    \begin{align*}
         & \grad D_t(x)                                                                                                                                                                                                            \\
         & \quad = \frac{1}{t} I_d + \frac{(1 - t)^2}{t} \sum_{i=1}^m w_{it}(x) \left( (\grad \log \rho_{it}(x) - \grad \log \rho_t(x)) (\grad \log \rho_{it}(x) - \grad \log \rho_t(x))^\top + \grad^2 \log \rho_{it}(x) \right),
    \end{align*}
    where $w_{it}(x) \coloneq \pi_i\, \rho_{it}(x) / \sum_{j=1}^m \pi_j\, \rho_{jt}(x)$.
\end{proof}

\subsection{Proof of \texorpdfstring{\Cref{thm:mode-selection}}{Theorem (mode selection)}} \label{app:mode-selection-proof}

\thmModeSelection*

\begin{proof}
    For (i), we compute that
    \begin{align*}
        Z_-
        \coloneq \int_{-\infty}^0 e^{\lambda r(x)} \rho_1(x)\, dx
        = e^{-\lambda R} \int_{-\infty}^0 \rho_1(x)\, dx
        = \frac{e^{-\lambda R}}{2}.
    \end{align*}
    On the other hand, we have
    \begin{align*}
        Z_+
        \coloneq \int_0^\infty e^{\lambda r(x)} \rho_1(x)\, dx
        = \int_0^\infty \rho_1(x)\, dx
        = \frac{1}{2}.
    \end{align*}
    Therefore, the correct mode probability under the tilted measure is
    \begin{align*}
        \tilde{p}
        = \P(X \geq 0)
        = \frac{Z_+}{Z_+ + Z_-}
        = (1 + e^{-\lambda R})^{-1}.
    \end{align*}
    For (ii), note that the gradient of the reward is zero Lebesgue-almost everywhere, so by the chain rule the $k$-particle plug-in gradient
    \begin{align*}
        \grad \log \hat{h}_t^{(k)}(x)
        = \sum_{i = 1}^k w_i\, \lambda\, \grad X_1^{(i)}(x)^\top \grad r(X_1^{(i)}(x)),
        \qquad w_i \coloneq \frac{e^{\lambda r(X_1^{(i)}(x))}}{\sum_{j = 1}^k e^{\lambda r(X_1^{(j)}(x))}},
    \end{align*}
    is zero almost everywhere for any finite $k \geq 1$, since the law of $(X_1 \given X_t = x)$ is absolutely continuous with respect to Lebesgue measure. Therefore, defining $\sigma_t^2 \coloneq (1 - t)^2 + t^2 \sigma^2$, the $k$-particle plug-in flow coincides with the unguided flow:
    \begin{align*}
        b_t(x)
        = \left( \frac{t \sigma^2 - (1 - t)}{\sigma_t^2} \right) x + \frac{(1 - t) \mu}{\sigma_t^2} \tanh\!\left( \frac{t \mu x}{\sigma_t^2} \right).
    \end{align*}
    Since the drift is an odd function, the distribution of the final sample is symmetric about zero, so the correct mode probability is $\tilde{p}_1^{(k)} = \frac{1}{2}$. For (iii), note that the $n$ independent finite-$k$ plug-in trajectories are i.i.d., so the probability that all $n$ trajectories end up negative is $(1/2)^n$.
\end{proof}

\section{Experimental details and ablations} \label{app:experiments}

\subsection{Damping for Gaussian target} \label{sec:exp-damping-gaussian}

We use a single Gaussian target $\rho_1 = \Normal(0, 0.5 I_2)$ with quadratic reward $r(x) = -\norm{x - a}_2^2$ centered at $a = (0, 2.5)^\top$ and $\lambda = 3.0$. By \Cref{prop:analytic-guidance-gaussian} and \Cref{thm:plugin-flow-gaussian}, the analytic guidance and $k = 1$ plug-in guidance both produce Gaussian terminal distributions with analytically known means and covariances. We use a Heun integrator for the GLASS flow and draw 200 samples.

\subsection{Gaussian-mixture ablations: within-mode reward hacking} \label{app:gmm-additional}

We provide additional experiments verifying that the plug-in estimator hacks the reward and that reward scale damping mitigates this in a wider variety of settings than the symmetric isotropic 2-component mixture of \Cref{fig:gmm-quadratic-main}. The protocol matches that of the main text: 200 samples drawn with the GLASS-flow Heun integrator at $\lambda = 3.0$, background colormap shows the analytic tilt $\tilde{\rho}_1$ together with green level sets, and the red $\times$ marks the reward target $a$.

\paragraph{Single Gaussian.}
\Cref{fig:gmm-gaussian} verifies the within-mode reward hacking and damping correction predicted by \Cref{thm:plugin-flow-gaussian} on the simplest setting, a single isotropic Gaussian target $\rho_1 = \Normal((0,0)^\top,\, 0.5 I_2)$ with quadratic reward $r(x) = -\norm{x - a}_2^2$ at $a = (0, 2.5)^\top$. The $k = 1$ plug-in over-concentrates exactly as predicted; increasing $k$ does not help; reward damping recovers the analytic tilt.

\image{0.95}{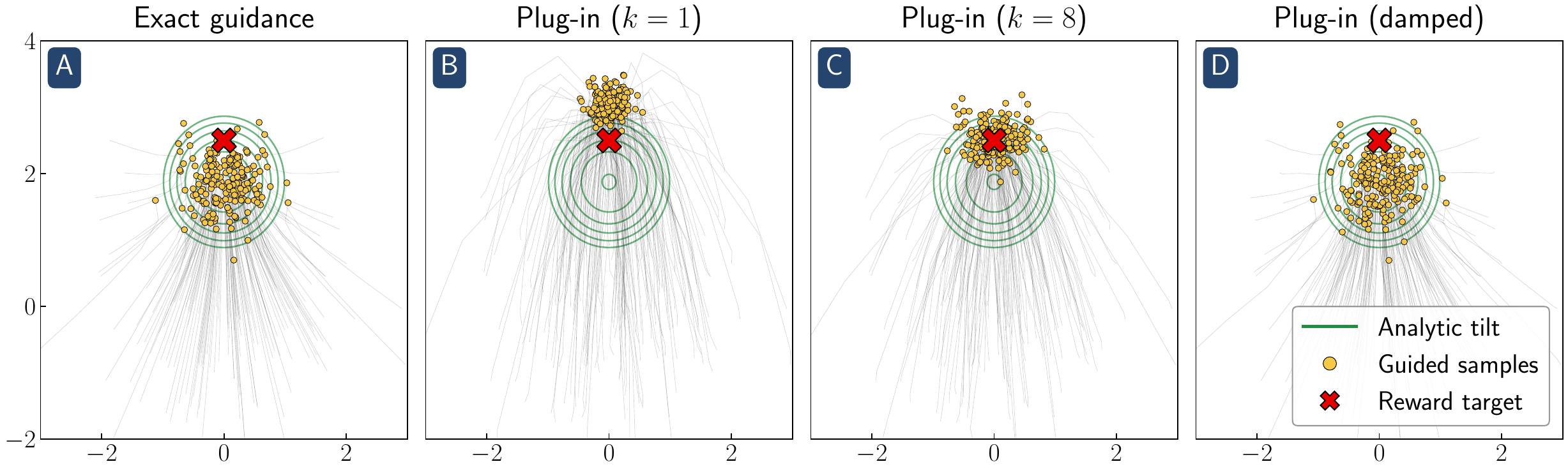}{\textbf{Single Gaussian target.} The $k=1$ plug-in over-concentrates exactly as predicted by \Cref{thm:plugin-flow-gaussian}; increasing $k$ does not help; reward damping recovers the true tilted distribution.}{fig:gmm-gaussian}

\paragraph{Non-quadratic reward.}
To verify that the plug-in bias is not specific to the quadratic reward setting of our theory, \Cref{fig:gmm-doublewell} repeats the experiment with a double-well reward $r(x) = -0.125(x_1^4 - 12.5 x_1^2) - (x_2 - 0.4 x_1)^2$, which has local maxima at $(-2.5, -1.0)$ and $(2.5, 1.0)$. Although the double-well reward has no closed-form Doob $h$-transform, we approximate the exact reward-tilt using a $k = 1000$ plug-in that draws particles directly from this closed-form posterior. The qualitative pattern persists: samples spread across both reward peaks, the $k = 1$ plug-in estimator over-concentrates near the maxima, and reward damping partially corrects this.

\image{0.95}{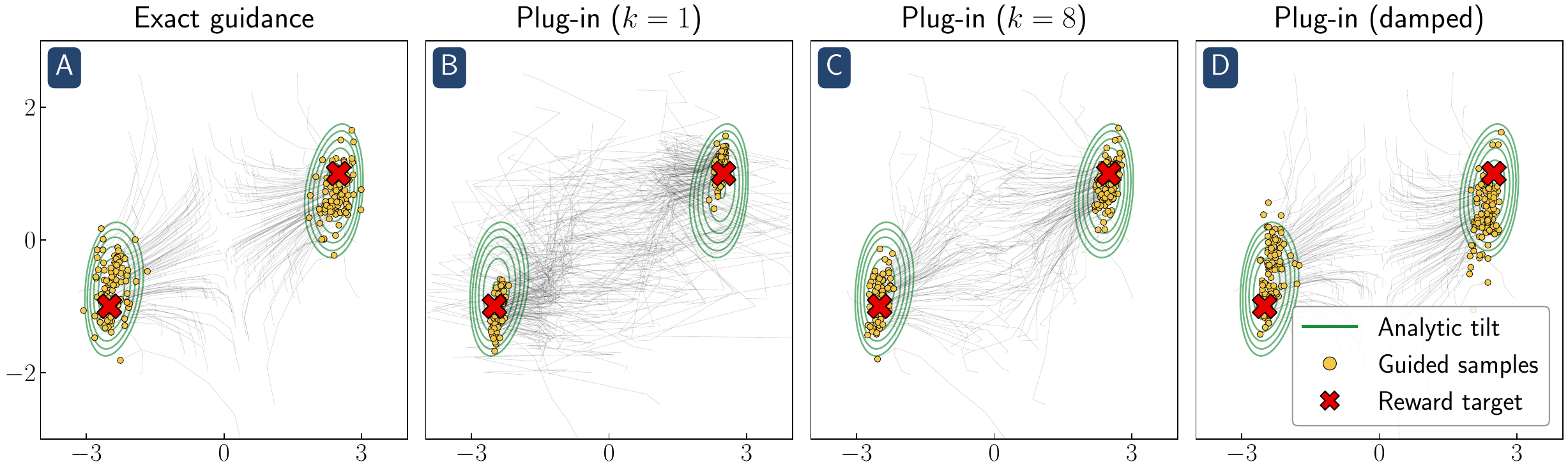}{\textbf{Double-well reward.} The plug-in over-concentrates at the reward maxima (marked by red $\times$); reward damping partially corrects this.}{fig:gmm-doublewell}

\paragraph{GMM ablations.}
\Cref{fig:gmm-noniso,fig:gmm-unequal,fig:gmm-uncentered} verify that the plug-in bias and the effectiveness of reward damping persist across three further variations: (i) \emph{non-isotropic covariances}, where the component covariances are $\bigl[\begin{smallmatrix}0.5 & \pm 0.25 \\ \pm 0.25 & 0.5\end{smallmatrix}\bigr]$ (tilted toward $a$); (ii) \emph{unequal weights} $(\pi_1, \pi_2) = (0.2, 0.8)$; and (iii) \emph{uncentered components} at $(-4.0, 0)$ and $(1.0, 0)$. In each case the plug-in estimator collapses samples near the reward target while reward damping restores spread consistent with the tilted density.

\image{0.95}{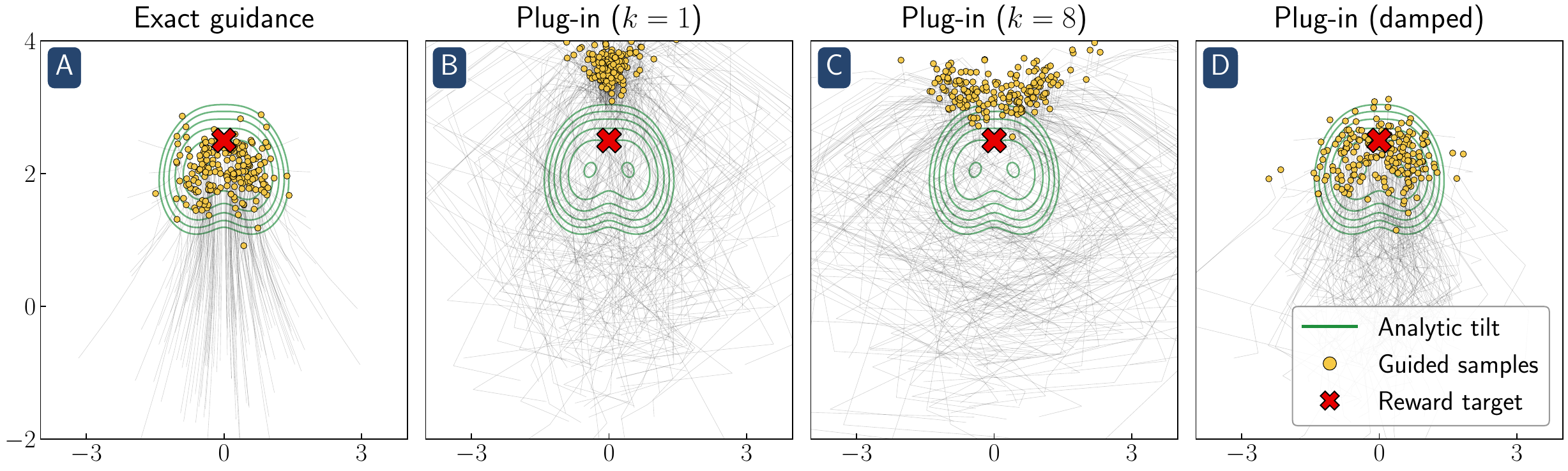}{\textbf{Non-isotropic covariances.} Both components are tilted toward the target $a = (0, 2.5)^\top$ via off-diagonal entries $\pm 0.25$.}{fig:gmm-noniso}

\image{0.95}{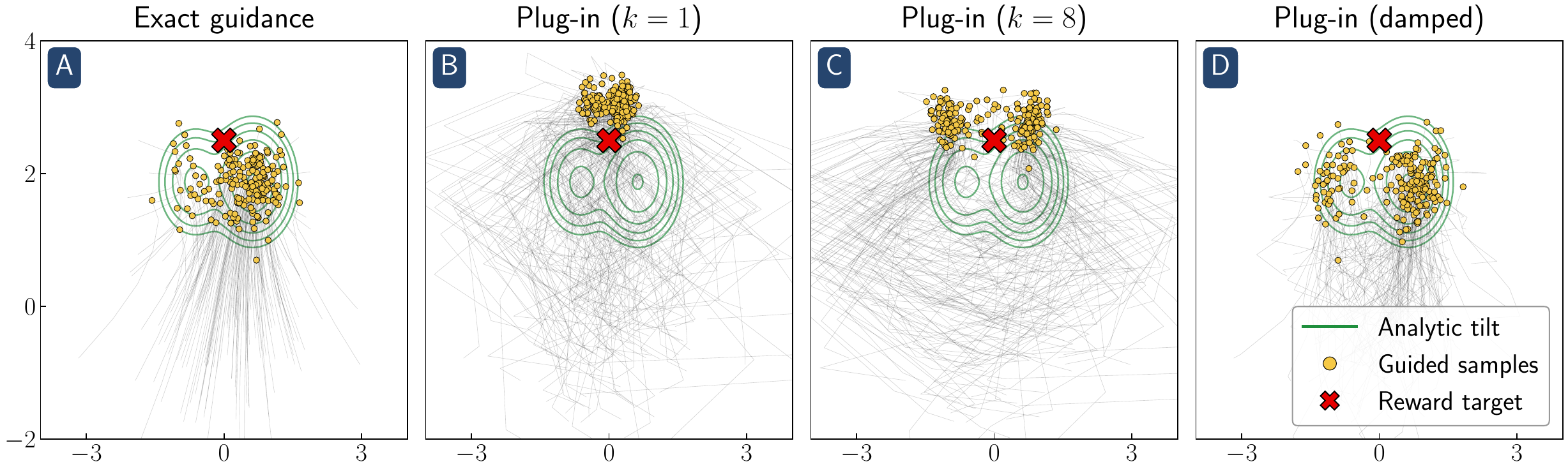}{\textbf{Unequal component weights} $(\pi_1, \pi_2) = (0.2, 0.8)$.}{fig:gmm-unequal}

\image{0.95}{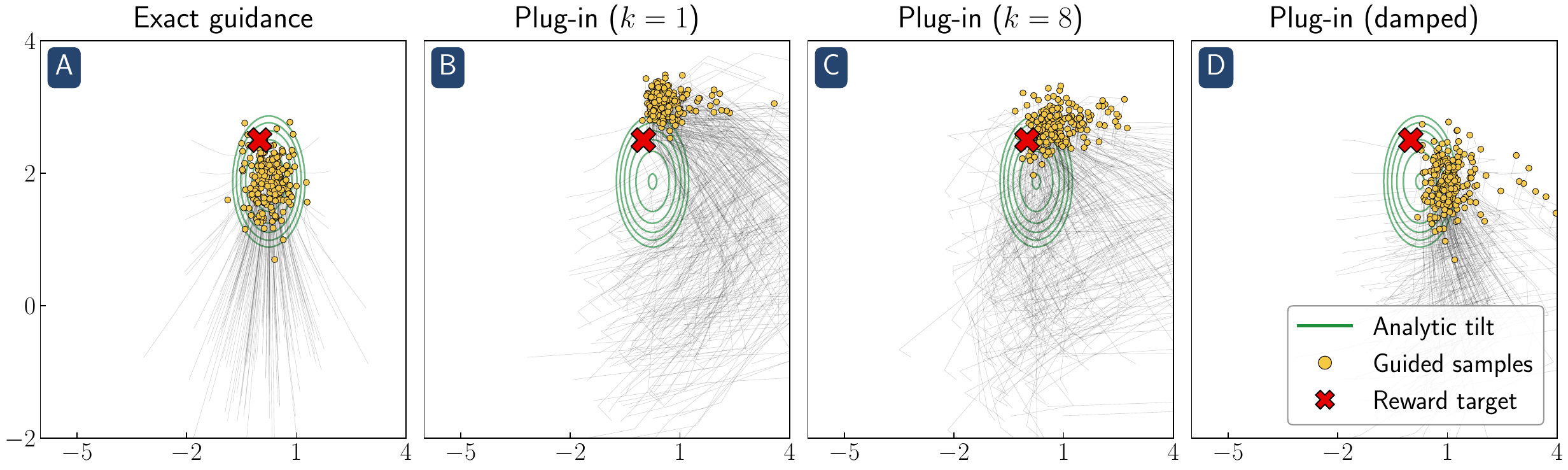}{\textbf{Uncentered components} at $(-4.0, 0)$ and $(1.0, 0)$.}{fig:gmm-uncentered}

\subsection{Checkerboard mode selection: experimental details} \label{sec:exp-checkerboard-details}

In this section, we provide further details on the checkerboard mode-selection experiment described in \Cref{sec:experiments-mode-selection-checkerboard}.

\paragraph{Base model.} The base velocity field $b_t$ is a 4-layer MLP with 256 hidden units and GELU activations, taking the concatenated input $(x / s, t) \in \R^3$ where $s$ is the empirical standard deviation of the checkerboard data. We train it via flow matching against the linear interpolant $I_t = (1 - t)\, \epsilon + t\, X_1$ with $\epsilon \sim \Normal(0, I_2)$ and $X_1$ drawn uniformly from the 18 filled checkerboard squares. The model is optimized with Adam (learning rate $10^{-3}$, cosine annealing to $0$) for $5 \times 10^5$ steps with batch size $4096$, using an exponential moving average of the weights with decay $0.9999$ for inference. Unguided samples are obtained by integrating the learned velocity ODE with a Heun integrator at $200$ steps, starting from $\epsilon \sim \Normal(0, s^2 I_2)$.

\paragraph{Guided sampling.} We use $\lambda = 10$, $k = 1$ particle by default, and the memoryless schedule $\sigma_t = \sqrt{2 (1 - t) / t}$. Guided trajectories are integrated with a Heun integrator at $200$ outer steps, and the GLASS lookahead $(X_1 \given X_t = x)$ is computed with $50$ inner Euler steps. The reward is the Gaussian bump $r(x) = \exp(-\norm{x - c}_2^2 / (2 \sigma_r^2))$ with $c = (0.5, 0.5)^\top$ and $\sigma_r = 1.5$. We draw $5000$ samples per condition.

\paragraph{Analytic samples.} We obtain exact samples from $\tilde{\rho}_1 \propto e^{\lambda r} \rho_1$ via rejection sampling over the 18 filled squares: sample a square uniformly (since $\rho_1$ is uniform on each square), sample a point uniformly within that square, and accept with probability $e^{\lambda(r(x) - 1)}$. This is valid because $r(x) \leq 1$ everywhere (the Gaussian bump peaks at $r = 1$ at its center), so $e^{\lambda}$ is a global upper bound on $e^{\lambda r}$. The worst-case acceptance rate is $e^{-\lambda}$, attained where $r(x) \to 0$ far from $c$; at $c$ itself, where $r(c) = 1$, the acceptance probability is 1.

\subsection{FLUX.1-dev experimental details} \label{sec:exp-flux-details}

\paragraph{Licenses.} FLUX.1-dev is released under the FLUX.1 [dev] Non-Commercial License; Qwen2.5-VL-3B-Instruct is released under the Qwen Research License Agreement; ImageReward is released under the Apache-2.0 License. All three checkpoints are used within the terms of their respective licenses for non-commercial academic research.

\paragraph{Cover photo.} \Cref{fig:cover} compares unguided FLUX.1 samples to guided and damped runs with ImageReward across six prompts. We use the following parameters:
\begin{itemize}
    \item Guided baselines: $\lambda = 50$ for all prompts.
    \item Damped runs: $\lambda = 100$ with $\sigma_\mathrm{damp} = 0.1$ for all prompts except the skyscraper prompt, which uses $\sigma_\mathrm{damp} = 0.15$.
\end{itemize}

\paragraph{Main experiments.} The within-mode and mode-selection experiments shown in \Cref{sec:experiments} use the following parameters.
\begin{itemize}
    \item \Cref{fig:flux-brightness} (welder, masked-brightness): naive $\lambda = 100$, lower-$\lambda$ baseline $\lambda = 50$, $k = 8$ baseline at $\lambda = 50$, damped $\lambda = 100$ with $\sigma_\mathrm{damp} = 0.1$.
    \item \Cref{fig:flux-color} (rococo, blueness): naive $\lambda = 50$, lower-$\lambda$ baseline $\lambda = 30$, $k = 8$ baseline at $\lambda = 50$, damped $\lambda = 100$ with $\sigma_\mathrm{damp} = 0.1$.
    \item \Cref{fig:flux-fox} (fox, blueness): naive $\lambda = 100$, lower-$\lambda$ baseline $\lambda = 50$, $k = 8$ baseline at $\lambda = 50$, damped $\lambda = 100$ with $\sigma_\mathrm{damp} = 0.1$.
    \item \Cref{fig:flux-archaeo} (archaeologist, ImageReward): naive $\lambda = 100$, lower-$\lambda$ baseline $\lambda = 50$, $k = 8$ baseline at $\lambda = 50$, damped $\lambda = 100$ with $\sigma_\mathrm{damp} = 0.15$.
    \item \Cref{fig:flux-miner} (miner, ImageReward): naive $\lambda = 100$, lower-$\lambda$ baseline $\lambda = 50$, $k = 8$ baseline at $\lambda = 50$, damped $\lambda = 100$ with $\sigma_\mathrm{damp} = 0.1$.
    \item \Cref{fig:flux-market} (market, ImageReward): naive $\lambda = 50$, lower-$\lambda$ baseline $\lambda = 30$, $k = 8$ baseline at $\lambda = 50$, damped $\lambda = 100$ with $\sigma_\mathrm{damp} = 0.05$.
    \item \Cref{fig:ms-flux-diner} (ECLIPSE DINER, VLM): naive $\lambda = 100$, $k = 8$ baseline at $\lambda = 50$, damped $\lambda = 100$ with $\sigma_\mathrm{damp} = 0.1$. Best-of-$n$ selections taken from 20 i.i.d.\ generations per row.
    \item \Cref{fig:ms-flux-subway} (NEXT TRAIN MARS, VLM): same parameters as \Cref{fig:ms-flux-diner}.
\end{itemize}

\paragraph{Guidance.} The plug-in guidance gradient $\grad \log \hat{h}_t^{(k)}(x)$ is rescaled to unit norm before applying guidance; this is a standard approximation made for numerical stability \citep{holderrieth2026diamond, huang2026guide}. We see empirically that this approximation corresponds to a slight damping effect, decreasing the effective guidance scale $\lambda$ throughout the trajectory. Following \citet{holderrieth2026diamond}, we apply guidance for $5$ guidance steps starting from outer step $1$, skipping steps with $\sigma_t > 0.9$ to ensure numerical stability. Reward damping (\Cref{sec:reward-damping}) replaces $\lambda$ with the time-dependent schedule \eqref{eq:reward-damping}, parameterized by $\sigma_\mathrm{damp}$. Gradient checkpointing on the FLUX transformer is used to fit backpropagation in memory. We generate $4$ images per condition for the within-mode reward hacking grids and $20$ images per condition for the mode-selection best-of-$n$ experiments.

\subsection{Compute} \label{sec:exp-compute}

All FLUX text-to-image experiments run on a single NVIDIA RTX A6000 (48 GB) or L40S (48 GB) GPU, with 8 CPUs and 64 GB RAM. At 512 x 512 resolution with 28 inference steps, an unguided sample takes approximately 7-15 seconds; a $k = 1$ guided sample with 5 Diamond-map guidance steps takes 11-23 seconds; and a $k = 8$ guided sample takes 37-76 seconds. A typical 20-image condition therefore completes in 4-8 minutes for $k = 1$ and roughly 25 minutes for $k = 8$. Including hyperparameter sweeps over $\lambda$, $\sigma_\mathrm{damp}$, gradient-norm scale, prompt variants, and earlier lookahead schemes, the total FLUX compute is approximately 50-55 GPU-hours, of which roughly 10 GPU-hours back the figures and tables in the paper. Checkerboard base-model training ($5 \times 10^5$ steps of a 4-layer MLP) takes about 60 minutes on a single GPU, and the reported guided-sampling grid adds another 1-2 GPU-hours. The Gaussian-mixture and 1D mode-selection experiments are closed-form analytics with small numerical integrators and complete in CPU-minutes.

\subsection{FLUX.1 mode-selection: additional experiments} \label{app:flux-mode-selection-additional}

\Cref{tab:ms-flux} reports the mean VLM probability of a legible ``ECLIPSE'' sign for the diner experiment of \Cref{sec:experiments-mode-selection-flux} across best-of-$n$ values, complementing the qualitative results of \Cref{fig:ms-flux-diner,fig:ms-flux-subway}. We report VLM scores as probabilities $\mathrm{sigmoid}(\log p(\text{Yes}) - \log p(\text{No}))$.

\begin{table}
    \centering
    \caption{\textbf{Mode selection on FLUX.1 (ECLIPSE DINER).} Mean VLM probability of a clear ``ECLIPSE'' sign with best-of-$n$ sampling (higher is better). Uncertainties are $\pm 2$ bootstrap standard errors over the 20 generations per condition.}
    \label{tab:ms-flux}
    \small
    \setlength{\tabcolsep}{4pt}
    \begin{tabular}{lccccc}
        \toprule
        Method          & $n=1$                 & $n=2$                 & $n=4$                 & $n=8$                 & $n=16$                \\
        \midrule
        Unguided        & $0.781_{\,\pm 0.074}$ & $0.857_{\,\pm 0.032}$ & $0.885_{\,\pm 0.021}$ & $0.899_{\,\pm 0.014}$ & $0.904_{\,\pm 0.008}$ \\
        Guided          & $0.626_{\,\pm 0.104}$ & $0.761_{\,\pm 0.081}$ & $0.840_{\,\pm 0.054}$ & $0.879_{\,\pm 0.041}$ & $0.901_{\,\pm 0.037}$ \\
        Guided ($k=8$)  & $0.776_{\,\pm 0.070}$ & $0.851_{\,\pm 0.033}$ & $0.881_{\,\pm 0.022}$ & $0.897_{\,\pm 0.020}$ & $0.910_{\,\pm 0.024}$ \\
        Guided (damped) & $0.826_{\,\pm 0.029}$ & $0.863_{\,\pm 0.024}$ & $0.887_{\,\pm 0.022}$ & $0.904_{\,\pm 0.020}$ & $0.914_{\,\pm 0.018}$ \\
        \bottomrule
    \end{tabular}
\end{table}

\subsection{Gaussian-mixture mode-selection experiments} \label{app:mode-selection-gmm}

We provide additional empirical results for the 1D symmetric Gaussian mixture $\rho_1 = \tfrac{1}{2}\Normal(\mu, \sigma^2) + \tfrac{1}{2}\Normal(-\mu, \sigma^2)$ with $\mu = 5$, $\sigma = 1$, and $\lambda = 5.0$, complementing the empirical illustration in \Cref{fig:ms-step-overview}.

\paragraph{Step function reward.} \label{app:mode-selection-step}
With the step reward $r(x) = \indicator_{x \geq 0}$, the gradient is zero almost everywhere, so the plug-in estimator (run with $k = 1$) receives no signal and achieves correct-mode probability $\tilde{p}_1^{(1)} = 0.502$, essentially random. Best-of-$n$ sampling exactly recovers the prediction $\tilde{p}_n^{(1)} = 1 - (1/2)^n$ from \Cref{thm:mode-selection} (\Cref{tab:ms-step}). \Cref{fig:ms-step-traj-combined} shows the underlying particle trajectories: unguided and plug-in trajectories split roughly evenly between the two modes, while best-of-$n$ selection concentrates on positive-half trajectories.

\begin{table}
    \centering
    \caption{\textbf{Step-reward best-of-$n$.} Empirical $\tilde{p}_n^{(1)}$ for best-of-$n$ guided sampling with $k = 1$. Theory: $1 - (1/2)^n$. Uncertainties are $\pm 2$ binomial standard errors $2 \sqrt{\hat{p}(1-\hat{p})/M}$ over $M = 1{,}000$ independent groups of $32$ trajectories.}
    \label{tab:ms-step}
    \small
    \setlength{\tabcolsep}{3pt}
    \begin{tabular}{lcccccc}
        \toprule
                                      & $n=1$                 & $n=2$                 & $n=4$                 & $n=8$                 & $n=16$                & $n=32$                \\
        \midrule
        Empirical $\tilde{p}_n^{(1)}$ & $0.506_{\,\pm 0.032}$ & $0.740_{\,\pm 0.028}$ & $0.931_{\,\pm 0.016}$ & $0.996_{\,\pm 0.004}$ & $1.000_{\,\pm 0.000}$ & $1.000_{\,\pm 0.000}$ \\
        Theory $1 - (1/2)^n$          & $0.500$               & $0.750$               & $0.938$               & $0.996$               & $1.000$               & $1.000$               \\
        \bottomrule
    \end{tabular}
\end{table}

\image{0.95}{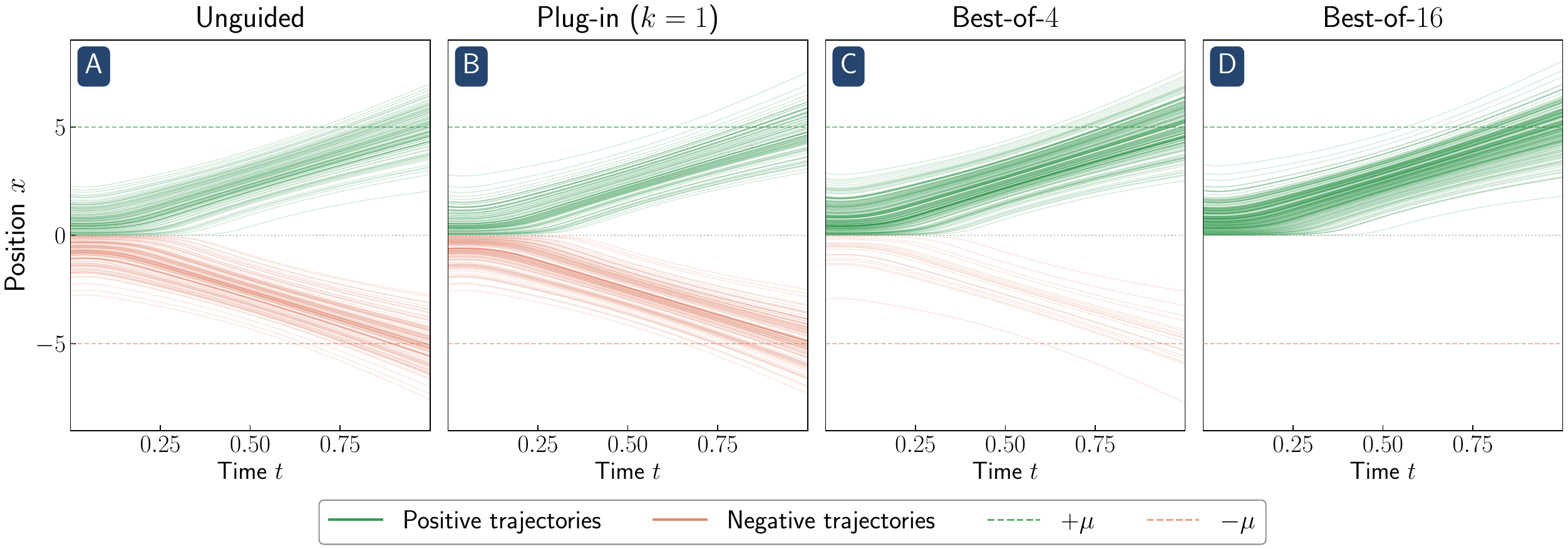}{\textbf{Step reward trajectories.} Colored by terminal sign. Unguided and plug-in trajectories split evenly between positive (green) and negative (coral) modes; best-of-$n$ selection produces a strong majority of positive-mode trajectories, increasingly so as $n$ grows.}{fig:ms-step-traj-combined}

\paragraph{Gaussian reward.} \label{app:mode-selection-gaussian}
We repeat the same 1D mixture experiment with a Gaussian reward $r(x) = \exp(-(x - 3)^2 / 2)$ centered at $x = 3$ in the positive half. \Cref{thm:mode-selection} does not apply here---the reward is smooth and the plug-in estimator receives a nonzero gradient signal---so this case goes beyond what our theory covers. Nevertheless, the qualitative behavior persists: plug-in guidance skews trajectories toward the positive mode (\Cref{fig:ms-gaussian-traj}, left), and best-of-$n$ further concentrates them (\Cref{fig:ms-gaussian-traj}, center and right).

\image{0.85}{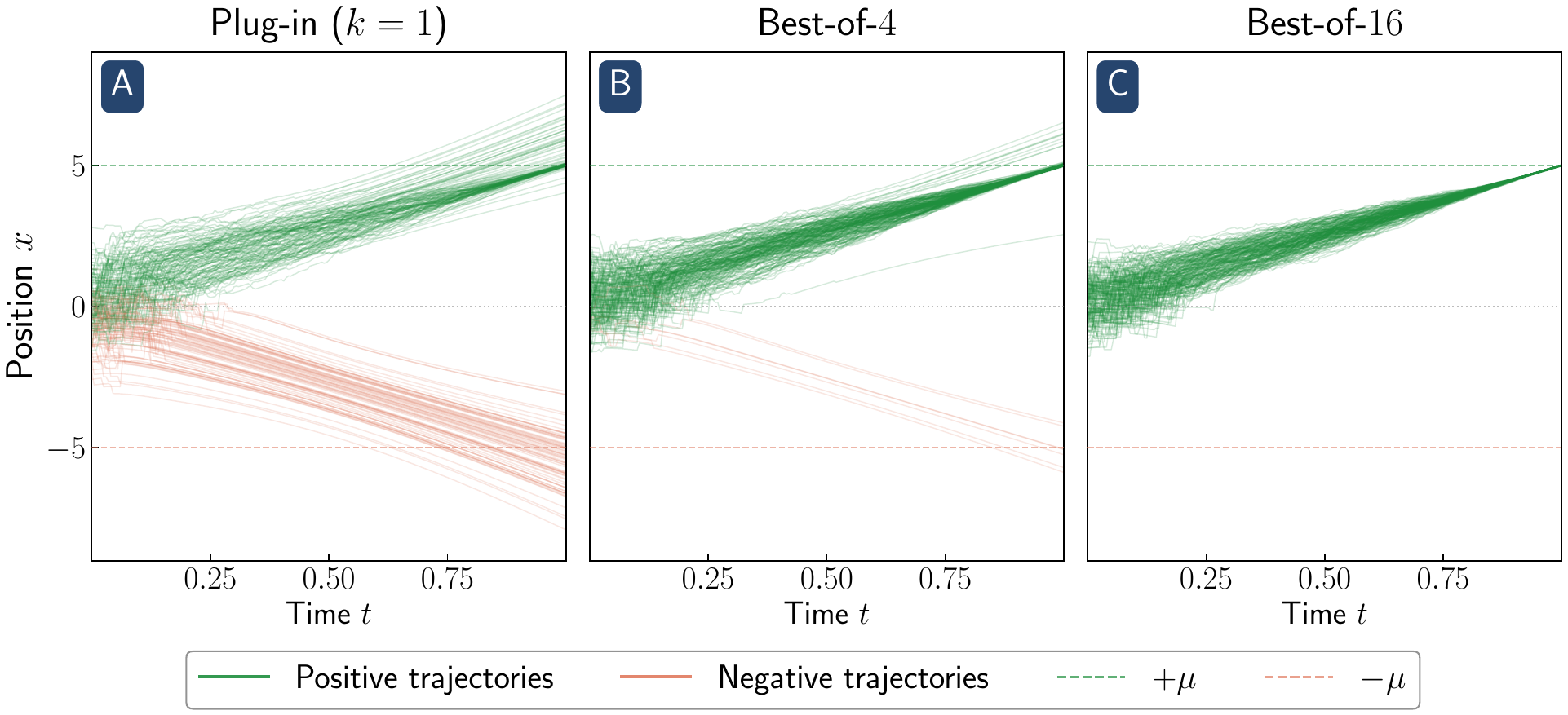}{\textbf{Gaussian reward trajectories.} Plug-in already skews toward the positive mode; best-of-$n$ further concentrates it.}{fig:ms-gaussian-traj}

\section{Comparison with flow map reward guidance (FMRG)} \label{app:fmrg-comparison}

\citet{huang2026guide} propose a deterministic guidance scheme derived directly from the probability flow ODE, without passing through a diffusion. Given the unguided probability flow $dX_s = b_s(X_s)\, ds$ and flow map $X_{t, s}(x)$ denoting its solution at time $s$ given $X_t = x$, the \emph{flow map reward guidance} for the reward $r$ is the feedback control
\begin{equation*}
    u_t^{\mathrm{FMRG}}(x) \coloneq \lambda\, \grad X_{t, 1}(x)^\top \grad r(X_{t, 1}(x)),
\end{equation*}
and the guided trajectory solves
\begin{equation} \label{eq:fmrg-ode}
    d\tilde{x}_t = (b_t(\tilde{x}_t) + u_t^{\mathrm{FMRG}}(\tilde{x}_t))\, dt
\end{equation}
with $\tilde{x}_0 \sim \Normal(0, I_d)$. For the quadratic reward $r(x) = -\norm{x - a}_2^2$, this simplifies to
\begin{equation} \label{eq:fmrg-quadratic}
    u_t^{\mathrm{FMRG}}(x) = -2 \lambda\, \grad X_{t, 1}(x)^\top (X_{t, 1}(x) - a).
\end{equation}
This is structurally similar to the $k = 1$ plug-in guidance of \Cref{thm:plugin-flow-gaussian}: both schemes are of the form $-2 \lambda\, (\text{Jacobian})^\top (\text{endpoint} - a)$, and they differ only in which endpoint is used. The plug-in scheme evaluates at a random conditional sample $X_1^{(1)}(x) \sim (X_1 \given X_t = x)$ and averages over the initial noise in the GLASS flow, while the flow map reward scheme evaluates at the deterministic probability flow endpoint $X_{t, 1}(x)$. Neither scheme targets the exact Doob $h$-function $h_t(x) = \E[e^{\lambda r(X_1)} \given X_t = x]$, so both are single-endpoint surrogates that do not capture the long-range mode attraction of the exact $h$-transform. In this appendix, we carry out the analysis of \citet{huang2026guide} in $\R^d$ and extend it to Gaussian mixture targets, allowing us to compare the flow map reward guidance directly with the plug-in formulae of \Cref{thm:plugin-flow-gaussian,thm:plugin-flow-gmm}.

\subsection{Gaussian target}

Throughout this subsection, $\rho_1 = \Normal(\mu, \Sigma)$ with $\Sigma \succ 0$, and $\Sigma_t \coloneq (1 - t)^2 I_d + t^2 \Sigma$ denotes the marginal covariance of $I_t$ as in \Cref{thm:ivf-bias-gaussian}. We begin by computing the flow map in closed form.

\begin{lemma}[Flow map for Gaussian target] \label{lem:gauss-flow map}
    Suppose $\rho_1 = \Normal(\mu, \Sigma)$. Then the probability flow ODE $dX_s = b_s(X_s)\, ds$ with $b_s(x) = \mu + \frac{1}{2} \dot{\Sigma}_s \Sigma_s^{-1} (x - s \mu)$ has flow map
    \begin{align*}
        X_{t, 1}(x) = \mu + \Sigma^{1/2} \Sigma_t^{-1/2} (x - t \mu)
    \end{align*}
    and state-independent Jacobian $\grad X_{t, 1}(x) = \Sigma^{1/2} \Sigma_t^{-1/2}$.
\end{lemma}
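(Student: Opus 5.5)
The plan is to verify the claimed formula directly: show it satisfies the flow ODE and the correct terminal condition at $s = t$, then invoke uniqueness. The drift $b_s(x) = \mu + \tfrac12 \dot\Sigma_s \Sigma_s^{-1}(x - s\mu)$ is affine in $x$ with a continuous coefficient on $[t,1]$, since $\Sigma_s \succeq \min\{(1-s)^2, s^2\lambda_{\min}(\Sigma)\} I_d \succ 0$ when $\Sigma \succ 0$. Hence solutions exist and are unique, and it suffices to exhibit one. The natural candidate is
\begin{align*}
    X_{t,s}(x) = s\mu + \Sigma_s^{1/2}\Sigma_t^{-1/2}(x - t\mu).
\end{align*}
This is the affine map pushing $\Normal(t\mu, \Sigma_t)$ forward to $\Normal(s\mu, \Sigma_s)$. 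Evaluating at $s = 1$ gives the stated formula, because $\Sigma_1 = \Sigma$.

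The key step is to differentiate in $s$. Since $\Sigma_s = (1-s)^2 I_d + s^2\Sigma$ is a matrix function of $\Sigma$, all of $\Sigma_s$, $\Sigma_s^{1/2}$, $\dot\Sigma_s$ and $\Sigma_t$ commute. We can therefore work in the eigenbasis of $\Sigma$, where $\tfrac{d}{ds}\Sigma_s^{1/2} = \tfrac12 \dot\Sigma_s \Sigma_s^{-1/2}$ holds entrywise on the diagonal. This gives
\begin{align*}
    \partial_s X_{t,s}(x) = \mu + \tfrac12 \dot\Sigma_s \Sigma_s^{-1}\,\Sigma_s^{1/2}\Sigma_t^{-1/2}(x - t\mu) = \mu + \tfrac12\dot\Sigma_s\Sigma_s^{-1}\bigl(X_{t,s}(x) - s\mu\bigr) = b_s(X_{t,s}(x)).
\end{align*}
At $s = t$ the candidate reduces to $X_{t,t}(x) = t\mu + (x - t\mu) = x$, so the initial condition holds. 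Uniqueness then identifies the candidate with the flow map. The Jacobian follows immediately, since $x \mapsto X_{t,1}(x)$ is affine with linear part $\Sigma^{1/2}\Sigma_t^{-1/2}$, which does not depend on $x$.

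I should also justify the drift formula itself. It was derived in the proof of \Cref{thm:ivf-bias-gaussian}, where we showed $\E[\dot I_t \given I_t = x] = \mu + \tfrac12\dot\Sigma_t\Sigma_t^{-1}(x - \mu_t)$ with $\mu_t = t\mu$, so I will cite that computation. The only delicate point is the commutation needed to differentiate the matrix square root. Since everything is a function of the single matrix $\Sigma$, this is handled by diagonalization, so I expect no real obstacle.
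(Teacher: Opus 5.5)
Your proof is correct, but it runs the paper's argument in the opposite direction. The paper \emph{derives} the formula. It centers the trajectory via $Y_s = X_s - s\mu$ and notes that $Y$ solves the linear ODE $\dot Y_s = \tfrac12\dot\Sigma_s\Sigma_s^{-1}Y_s$ with a commuting family of coefficients. It then integrates directly:
$Y_1 = \exp\bigl(\tfrac12\int_t^1\dot\Sigma_s\Sigma_s^{-1}\,ds\bigr)Y_t = \exp\bigl(\tfrac12\log(\Sigma\Sigma_t^{-1})\bigr)Y_t$.
This needs no ansatz, but it does rely on the matrix log/exp calculus for commuting families. You instead posit the Gaussian transport map $X_{t,s}(x) = s\mu + \Sigma_s^{1/2}\Sigma_t^{-1/2}(x - t\mu)$ and verify the ODE and the initial condition. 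What your route buys: it only requires differentiating $\Sigma_s^{1/2}$ in the fixed eigenbasis of $\Sigma$, it yields the whole flow $X_{t,s}$ for every $s \in [t,1]$ rather than only $s = 1$, and it makes the existence/uniqueness step explicit, which the paper leaves implicit. Your derivative computation and the check $X_{t,t}(x) = x$ are both right.

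One small fix is needed. Your lower bound $\Sigma_s \succeq \min\{(1-s)^2,\, s^2\lambda_{\min}(\Sigma)\}\, I_d$ degenerates to $0$ at $s = 1$ (and at $s = 0$), so it does not give $\Sigma_s \succ 0$ on the closed interval. Use the sum instead: $(1-s)^2 + s^2\lambda_{\min}(\Sigma) \geq \lambda_{\min}(\Sigma)/(1 + \lambda_{\min}(\Sigma)) > 0$ for all $s \in [0,1]$. This gives the uniform positivity you need for continuity of the drift coefficient and hence for uniqueness. Citing the derivation of the drift is harmless but unnecessary, since the lemma takes $b_s$ as given.
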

\begin{proof}
    Along a trajectory, define $Y_s \coloneq X_s - s \mu$ so that $dY_s = \frac{1}{2} \dot{\Sigma}_s \Sigma_s^{-1} Y_s\, ds$. Because $\Sigma_s$ is a matrix polynomial in $\Sigma$, the family $\{\dot{\Sigma}_s \Sigma_s^{-1}\}_s$ commutes with itself and with $\Sigma$, so integrating from $s = t$ to $s = 1$ yields
    \begin{align*}
        Y_1
        = \exp\!\left( \frac{1}{2} \int_t^1 \dot{\Sigma}_s \Sigma_s^{-1}\, ds \right) Y_t
        = \exp\!\left( \frac{1}{2} \log(\Sigma \Sigma_t^{-1}) \right) Y_t
        = \Sigma^{1/2} \Sigma_t^{-1/2} Y_t.
    \end{align*}
    Adding $\mu$ back gives the flow map, and the Jacobian follows by differentiating in $x$.
\end{proof}

\begin{theorem}[Flow map reward guidance for Gaussian target] \label{thm:fmrg-gaussian}
    Suppose $\rho_1 = \Normal(\mu, \Sigma)$ and consider the flow map reward guided ODE~\eqref{eq:fmrg-ode} with $u_t^{\mathrm{FMRG}}$ given by~\eqref{eq:fmrg-quadratic}. Then, the guidance term is
    \begin{align*}
        u_t^{\mathrm{FMRG}}(x)
        = -2 \lambda\, \Sigma^{1/2} \Sigma_t^{-1/2} (X_{t, 1}(x) - a)
        = -2 \lambda\, \Sigma \Sigma_t^{-1} (x - t \mu) - 2 \lambda\, \Sigma^{1/2} \Sigma_t^{-1/2} (\mu - a),
    \end{align*}
    and the terminal distribution of the guided ODE is $\tilde{x}_1 \sim \Normal(\tilde{\mu}^{\mathrm{FMRG}},\, \tilde{\Sigma}^{\mathrm{FMRG}})$, where
    \begin{align*}
        \tilde{\mu}^{\mathrm{FMRG}} \coloneq \mu - T_{\mathrm{pull}}^{\mathrm{FMRG}} (\mu - a), \qquad
        T_{\mathrm{pull}}^{\mathrm{FMRG}} \coloneq I_d - \exp(-\pi \lambda \Sigma^{1/2}),
    \end{align*}
    and
    \begin{align*}
        \tilde{\Sigma}^{\mathrm{FMRG}} \coloneq \Sigma \exp(-2 \pi \lambda \Sigma^{1/2}).
    \end{align*}
\end{theorem}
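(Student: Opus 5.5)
The plan is to reduce everything to the affine structure provided by \Cref{lem:gauss-flow map}, and then solve scalar linear ODEs along the eigendirections of $\Sigma$. This follows the proof of \Cref{thm:plugin-flow-gaussian-nonmemoryless}; the only new ingredient is one explicit integral.

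\textbf{Step 1: the guidance term.} By \Cref{lem:gauss-flow map}, the flow map is $X_{t,1}(x)=\mu+\Sigma^{1/2}\Sigma_t^{-1/2}(x-t\mu)$, with constant Jacobian $\Sigma^{1/2}\Sigma_t^{-1/2}$. This Jacobian is symmetric, since it is a matrix function of $\Sigma$. Substituting into \eqref{eq:fmrg-quadratic} gives the first expression for $u_t^{\mathrm{FMRG}}$ immediately. Expanding $X_{t,1}(x)-a=\Sigma^{1/2}\Sigma_t^{-1/2}(x-t\mu)+(\mu-a)$ and using commutativity gives the second expression.

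\textbf{Step 2: evolution of mean and covariance.} The guided drift $b_t+u_t^{\mathrm{FMRG}}$ is affine in $x$, and $\tilde x_0\sim\Normal(0,I_d)$. Hence $\tilde x_t$ stays Gaussian, and it suffices to track its mean and covariance. Set $y_t\coloneq\E[\tilde x_t]-t\mu$ and $C_t\coloneq\Cov(\tilde x_t)$, and write $A_t\coloneq\tfrac12\dot\Sigma_t\Sigma_t^{-1}-2\lambda\Sigma\Sigma_t^{-1}$. Then
\begin{align*}
\dot y_t = A_t y_t - 2\lambda\Sigma^{1/2}\Sigma_t^{-1/2}(\mu-a),\qquad \dot C_t = 2A_tC_t,
\end{align*}
with $y_0=0$ and $C_0=I_d$. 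All of these matrices are functions of $\Sigma$, so I pass to an eigendirection with eigenvalue $\sigma$ and write $s_t=(1-t)^2+t^2\sigma$.

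\textbf{Step 3: the key integral.} The main computation is
\begin{align*}
F(t)\coloneq\int_0^t\frac{dv}{(1-v)^2+v^2\sigma},\qquad F(1)=\frac{\pi}{2\sqrt\sigma}.
\end{align*}
I would prove the value of $F(1)$ by the substitution $u=v/(1-v)$. This turns the integrand into $du/(1+\sigma u^2)$ on $[0,\infty)$, which integrates to $\pi/(2\sqrt{\sigma})$ by an arctangent. This is where the factor $\pi$ in the statement comes from, and it is the only non-routine step.

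\textbf{Step 4: covariance.} Using $\int_0^1\dot s_v/s_v\,dv=\log\sigma$, we get
\begin{align*}
C_1=\exp\!\Big(\log\sigma-4\lambda\sigma F(1)\Big)=\sigma e^{-2\pi\lambda\sqrt\sigma}.
\end{align*}
This is the stated $\tilde\Sigma^{\mathrm{FMRG}}$.

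\textbf{Step 5: mean.} Use the integrating factor $z_t=s_t^{-1/2}e^{2\lambda\sigma F(t)}y_t$. Differentiating, and using $F'(t)=s_t^{-1}$, gives
\begin{align*}
\dot z_t=-2\lambda\sqrt\sigma\, s_t^{-1}e^{2\lambda\sigma F(t)}(\mu-a)=-\frac{1}{\sqrt\sigma}\frac{d}{dt}e^{2\lambda\sigma F(t)}\,(\mu-a).
\end{align*}
Integrating from $0$ to $1$ with $z_0=0$ gives $z_1=-\sigma^{-1/2}(e^{\pi\lambda\sqrt\sigma}-1)(\mu-a)$. Undoing the factor with $s_1=\sigma$ gives
\begin{align*}
y_1=\sigma^{1/2}e^{-\pi\lambda\sqrt\sigma}z_1=-(1-e^{-\pi\lambda\sqrt\sigma})(\mu-a).
\end{align*}
Reassembling over eigendirections yields $T_{\mathrm{pull}}^{\mathrm{FMRG}}=I_d-\exp(-\pi\lambda\Sigma^{1/2})$.

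Beyond Step 3, the main care point is bookkeeping. Unlike \eqref{eq:plugin-flow}, FMRG adds $u_t$ to the drift directly, without the $\tfrac12\sigma_t\sigma_t^\top$ prefactor, so the coefficients above carry no $\eta_t^2$.
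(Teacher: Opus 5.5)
Your proposal is correct and follows essentially the same route as the paper. Both substitute the closed-form affine flow map into the guidance and solve the linear mean ODE with the integrating factor $\Sigma_t^{-1/2}\exp(2\lambda\Phi_t)$, where $\Phi_t=\int_0^t\Sigma\Sigma_s^{-1}\,ds$ equals your $\sigma F(t)$ eigenwise. Both then solve the covariance ODE by exponentiation and evaluate $\Phi_1=\tfrac{\pi}{2}\Sigma^{1/2}$ via the substitution $u=s/(1-s)$; your eigenwise values of $C_1$ and $y_1$ check out.
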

\begin{proof}
    We begin by computing the guidance term $u_t^{\mathrm{FMRG}}(x)$. From \Cref{lem:gauss-flow map}, the Jacobian $\grad X_{t, 1}(x) = \Sigma^{1/2} \Sigma_t^{-1/2}$ is a matrix function of $\Sigma$ and hence symmetric. Substituting into~\eqref{eq:fmrg-quadratic} and expanding $X_{t, 1}(x)$ using the flow map formula gives
    \begin{align*}
        u_t^{\mathrm{FMRG}}(x)
        = -2 \lambda\, \Sigma^{1/2} \Sigma_t^{-1/2} (X_{t, 1}(x) - a)
        = -2 \lambda\, \Sigma \Sigma_t^{-1} (x - t \mu) - 2 \lambda\, \Sigma^{1/2} \Sigma_t^{-1/2} (\mu - a).
    \end{align*}
    Substituting back into the FMRG ODE~\eqref{eq:fmrg-ode}, the dynamics are given by
    \begin{align*}
        d\tilde{x}_t = \left( \mu + \frac{1}{2} \dot{\Sigma}_t \Sigma_t^{-1} (\tilde{x}_t - \mu_t) - 2 \lambda\, \Sigma \Sigma_t^{-1} (\tilde{x}_t - \mu_t) - 2 \lambda\, \Sigma^{1/2} \Sigma_t^{-1/2} (\mu - a) \right) dt.
    \end{align*}
    Because the initial condition $\tilde{x}_0 \sim \Normal(0, I_d)$ is Gaussian and the drift is an affine function of $\tilde{x}_t$, the process $\tilde{x}_t$ remains exactly Gaussian for all $t \in [0, 1]$, and we track its mean $\mu_t^{\mathrm{FMRG}} = \E[\tilde{x}_t]$ and covariance $\Sigma_t^{\mathrm{FMRG}} = \Cov(\tilde{x}_t)$. Taking the expectation of the drift yields an ODE for the mean:
    \begin{align*}
        d\mu_t^{\mathrm{FMRG}}
        = \mu + \left( \frac{1}{2} \dot{\Sigma}_t \Sigma_t^{-1} - 2 \lambda\, \Sigma \Sigma_t^{-1} \right) (\mu_t^{\mathrm{FMRG}} - \mu_t) - 2 \lambda\, \Sigma^{1/2} \Sigma_t^{-1/2} (\mu - a).
    \end{align*}
    Defining $\Phi_t \coloneq \int_0^t \Sigma \Sigma_s^{-1}\, ds$, we multiply by the integrating factor $\Sigma_t^{-1/2} \exp(2 \lambda \Phi_t)$ to get
    \begin{align*}
        \frac{d}{dt} \left( \Sigma_t^{-1/2} \exp(2 \lambda \Phi_t) (\mu_t^{\mathrm{FMRG}} - \mu_t) \right)
        = -\Sigma^{-1/2}\, \frac{d}{dt} \exp(2 \lambda \Phi_t)\, (\mu - a),
    \end{align*}
    where we used that $\dot{\Phi}_t = \Sigma \Sigma_t^{-1}$. Integrating both sides from 0 to 1 then gives us
    \begin{align*}
        \Sigma^{-1/2} \exp(2 \lambda \Phi_1) (\tilde{\mu}^{\mathrm{FMRG}} - \mu)
        = -\Sigma^{-1/2} (\exp(2 \lambda \Phi_1) - I_d) (\mu - a),
    \end{align*}
    so $\tilde{\mu}^{\mathrm{FMRG}} = \mu - (I_d - \exp(-2 \lambda \Phi_1)) (\mu - a)$. To evaluate $\Phi_1$, we note that all relevant matrices are matrix functions of $\Sigma$ and hence simultaneously diagonalize. Along an eigendirection of $\Sigma$ with eigenvalue $\sigma > 0$ and substituting $u = s / (1 - s)$,
    \begin{align*}
        \int_0^1 \frac{\sigma}{(1 - s)^2 + s^2 \sigma}\, ds
        = \int_0^\infty \frac{\sigma}{1 + u^2 \sigma}\, du
        = \frac{\pi \sqrt{\sigma}}{2},
    \end{align*}
    so $\Phi_1 = \frac{\pi}{2}\, \Sigma^{1/2}$ and $\tilde{\mu}^{\mathrm{FMRG}} = \mu - (I_d - \exp(-\pi \lambda \Sigma^{1/2})) (\mu - a)$, which is the claimed form. Next, we have the linear ODE
    \begin{align*}
        d(\tilde{x}_t - \mu_t^{\mathrm{FMRG}})
        = \left( \frac{1}{2} \dot{\Sigma}_t \Sigma_t^{-1} - 2 \lambda\, \Sigma \Sigma_t^{-1} \right) (\tilde{x}_t - \mu_t^{\mathrm{FMRG}})\, dt,
    \end{align*}
    so by the chain rule, the covariance evolves according to the equation
    \begin{align*}
        d\Sigma_t^{\mathrm{FMRG}}
        = \left( \dot{\Sigma}_t \Sigma_t^{-1} - 4 \lambda\, \Sigma \Sigma_t^{-1} \right) \Sigma_t^{\mathrm{FMRG}}\, dt.
    \end{align*}
    Because all relevant matrices commute, we can solve this ODE in closed form at $t = 1$:
    \begin{align*}
        \tilde{\Sigma}^{\mathrm{FMRG}}
        = \exp\!\left( \int_0^1 \left( \dot{\Sigma}_s \Sigma_s^{-1} - 4 \lambda\, \Sigma \Sigma_s^{-1} \right) ds \right) \Sigma_0^{\mathrm{FMRG}}
        = \Sigma \exp(-2 \pi \lambda \Sigma^{1/2}),
    \end{align*}
    using $\int_0^1 \dot{\Sigma}_s \Sigma_s^{-1}\, ds = \log \Sigma$ and $\int_0^1 \Sigma \Sigma_s^{-1}\, ds = \frac{\pi}{2}\, \Sigma^{1/2}$.
\end{proof}

The proof follows by solving linear ODEs for the mean and covariance of the FMRG flow in closed form. Since $T_{\mathrm{pull}}^{\mathrm{FMRG}}$ and $\tilde{\Sigma}^{\mathrm{FMRG}}$ are matrix functions of $\Sigma$, all operators simultaneously diagonalize. Along an eigendirection $v$ with $\Sigma$ having eigenvalue $\sigma > 0$, the eigenvalues of $T_\mathrm{pull}^{\mathrm{FMRG}}$ and $T_\mathrm{pull}^{(1)}$ from \Cref{thm:plugin-flow-gaussian} along $v$ are
\begin{align*}
    \lambda_v(T_\mathrm{pull}^{\mathrm{FMRG}}) = 1 - e^{-\pi \lambda \sqrt{\sigma}}, \qquad
    \lambda_v(T_\mathrm{pull}^{(1)}) = \sqrt{\pi \lambda \sigma}\, e^{-\lambda \sigma}\, \erfi\!\left(\sqrt{\lambda \sigma}\right).
\end{align*}
The FMRG eigenvalue is monotonically increasing in $\lambda$ and bounded above by 1, so the FMRG mean contracts toward $a$ without ever overshooting; in contrast, the plug-in eigenvalue crosses 1 around $\lambda \sigma \approx 0.854$ and exhibits mean overshoot. Similarly, the eigenvalues of $\tilde{\Sigma}^{\mathrm{FMRG}}$ and $\Sigma^{(1)}$ along $v$ are
\begin{align*}
    \lambda_v(\tilde{\Sigma}^{\mathrm{FMRG}}) = \sigma\, e^{-2 \pi \lambda \sqrt{\sigma}}, \qquad
    \lambda_v(\Sigma^{(1)}) = \sigma\, e^{-2 \lambda \sigma}.
\end{align*}
Both covariances shrink exponentially in $\lambda$, but with different scalings: the plug-in exponent grows like $\lambda \sigma$ while the FMRG exponent grows like $\lambda \sqrt{\sigma}$, with a crossover at $\sigma = \pi^2$. Although flow map reward guidance avoids mean overshoot, it still exponentially contracts the covariance, so flow map reward guidance also exhibits reward hacking on a Gaussian target.

\subsection{Gaussian mixture target}

We now extend the analysis to $\rho_1 = \sum_{i=1}^m \pi_i\, \Normal(\cdot \given \mu_i, \Sigma_i)$. Let $\Sigma_{it} \coloneq (1 - t)^2 I_d + t^2 \Sigma_i$, let $\rho_{it}$ denote the unguided marginal density for the $i$th component, and let $w_{it}(x) \coloneq \pi_i \rho_{it}(x) / \rho_t(x)$ with $\rho_t = \sum_{i=1}^m \pi_i \rho_{it}$. The probability flow drift is the weighted combination
\begin{equation} \label{eq:mix-drift}
    b_t(x) = \sum_{i = 1}^m w_{it}(x)\, b_{it}(x),
    \qquad b_{it}(x) \coloneq \mu_i + \frac{1}{2} \dot{\Sigma}_{it} \Sigma_{it}^{-1} (x - t \mu_i),
\end{equation}
which is nonlinear in $x$ through the mixture weights $w_{it}(x)$. Unlike the single-Gaussian case, the flow map $X_{t, 1}$ has no closed form and must be obtained by integrating the ODE numerically.

\begin{theorem}[Flow map reward guidance for Gaussian mixture target] \label{thm:fmrg-gmm}
    In the above setting, fix $t \in [0, 1)$ and let $X_{t, s}(x)$ denote the solution of the probability flow ODE $dX_s = b_s(X_s)\, ds$ with drift~\eqref{eq:mix-drift} and initial condition $X_t = x$. Then, the flow map reward guidance is
    \begin{align*}
        u_t^{\mathrm{FMRG}}(x) = -2 \lambda\, \grad X_{t, 1}(x)^\top (X_{t, 1}(x) - a),
    \end{align*}
    where the Jacobian $\grad X_{t, s}(x)$ can be computed by integrating the variational equation
    \begin{equation} \label{eq:fmrg-mix-variational}
        \frac{d}{ds} \grad X_{t, s}(x) = \grad b_s(X_{t, s}(x))\, \grad X_{t, s}(x),
        \qquad \grad X_{t, t}(x) = I_d.
    \end{equation}
    In this formula, the spatial Jacobian of the mixture drift is
    \begin{align*}
        \grad b_s(y)
        = \sum_{i = 1}^m w_{is}(y) \left[
            \frac{1}{2} \dot{\Sigma}_{is} \Sigma_{is}^{-1}
            + (b_{is}(y) - b_s(y)) (\grad \log \rho_{is}(y) - \grad \log \rho_s(y))^\top
            \right].
    \end{align*}
\end{theorem}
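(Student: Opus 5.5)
The plan is to treat the three claims of \Cref{thm:fmrg-gmm} separately: the guidance formula, the variational equation for the Jacobian, and the formula for $\grad b_s$.

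\emph{Guidance formula.} This follows by substituting the quadratic reward into the definition of $u_t^{\mathrm{FMRG}}$. Since $\grad r(y) = -2(y - a)$, we get $u_t^{\mathrm{FMRG}}(x) = \lambda\, \grad X_{t,1}(x)^\top \grad r(X_{t,1}(x)) = -2\lambda\, \grad X_{t,1}(x)^\top (X_{t,1}(x) - a)$, which is exactly \eqref{eq:fmrg-quadratic}. Nothing about the mixture structure is needed here.

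\emph{Variational equation.} I will write the flow in integral form, $X_{t,s}(x) = x + \int_t^s b_v(X_{t,v}(x))\, dv$, and differentiate in $x$ under the integral sign. This gives $\grad X_{t,s}(x) = I_d + \int_t^s \grad b_v(X_{t,v}(x))\, \grad X_{t,v}(x)\, dv$, and differentiating in $s$ yields \eqref{eq:fmrg-mix-variational} with initial condition $\grad X_{t,t} = I_d$.

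\emph{Regularity.} The only real obstacle is justifying the interchange of derivative and integral for $s \in [t, 1]$. I plan to invoke the standard theorem on differentiable dependence on initial conditions. Its hypothesis is that $b_v$ is $C^1$ in space, with $\grad b_v$ continuous in $(v, y)$ on $[t,1] \times \R^d$ for fixed $t < 1$. This holds for each $b_{iv}$, since it is affine with coefficients smooth on $(0,1]$. It also holds for the weights $w_{iv}$, which are smooth ratios of Gaussian densities with strictly positive denominator. I should double-check the endpoint $v = 1$. There $\Sigma_{i1} = \Sigma_i \succ 0$ is invertible, and $\dot{\Sigma}_{iv}\Sigma_{iv}^{-1}$ stays bounded, so the bound holds up to $v = 1$. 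Global existence of the flow follows from the linear growth of $b_v$: the weights lie in $[0,1]$ and the $b_{iv}$ are affine.

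\emph{Jacobian of the drift.} I will use the product rule on $b_s = \sum_i w_{is} b_{is}$, which gives $\grad b_s = \sum_i \big( w_{is}\grad b_{is} + b_{is} (\grad w_{is})^\top \big)$. The first term is $w_{is}\cdot\tfrac12\dot{\Sigma}_{is}\Sigma_{is}^{-1}$. For the second term, I compute $\grad w_{is} = w_{is}(\grad\log\rho_{is} - \grad\log\rho_s)$ by differentiating $\log w_{is} = \log\pi_i + \log\rho_{is} - \log\rho_s$. This gives $\sum_i w_{is} b_{is}(\grad\log\rho_{is} - \grad\log\rho_s)^\top$. To reach the stated form, I need to replace $b_{is}$ by $b_{is} - b_s$. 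This is allowed because $\sum_i w_{is}(\grad\log\rho_{is} - \grad\log\rho_s) = \grad\log\rho_s \cdot \tfrac{\sum_i \pi_i\rho_{is}}{\rho_s} - \grad\log\rho_s = 0$ (using $\sum_i w_{is} = 1$ and $\sum_i w_{is}\grad\log\rho_{is} = \grad\rho_s/\rho_s$). Hence subtracting $b_s(\cdot)^\top$ times this vanishing sum adds zero.
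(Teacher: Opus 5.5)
Your proposal is correct and follows the paper's proof essentially step for step. You substitute $\grad r(y) = -2(y-a)$, differentiate the flow with respect to its initial condition, apply the product rule with the softmax gradient $\grad w_{is} = w_{is}(\grad \log \rho_{is} - \grad \log \rho_s)$, and recenter using $\sum_i \grad w_{is} = 0$. Your regularity paragraph is a useful addition that the paper omits. The phrase ``smooth on $(0,1]$'' leaves out $v = 0$, which is needed when $t = 0$, but the coefficients $\tfrac12\dot\Sigma_{iv}\Sigma_{iv}^{-1}$ are in fact smooth on all of $[0,1]$ since $\Sigma_{i0} = I_d$, so this covers that case.
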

\begin{proof}
    The guidance formula follows from~\eqref{eq:fmrg-quadratic}, and the variational equation~\eqref{eq:fmrg-mix-variational} follows from differentiating $dX_s = b_s(X_s)\, ds$ with respect to the initial condition and applying the chain rule. For the expression for $\grad b_s$, the product rule applied to~\eqref{eq:mix-drift} gives
    \begin{align*}
        \grad b_s(y)
        = \sum_{i = 1}^m \left( w_{is}(y)\, \grad b_{is}(y) + b_{is}(y)\, \grad w_{is}(y)^\top \right).
    \end{align*}
    Since $b_{is}$ is affine in $y$, we have $\grad b_{is}(y) = \frac{1}{2} \dot{\Sigma}_{is} \Sigma_{is}^{-1}$, and the standard softmax gradient gives $\grad w_{is}(y) = w_{is}(y) (\grad \log \rho_{is}(y) - \grad \log \rho_s(y))$. Because the weights sum to 1, we have $\sum_i \grad w_{is}(y) = 0$, and we can use this to recenter $b_{is}(y) - b_s(y)$ inside the second sum, yielding the claimed formula.
\end{proof}

Comparing \Cref{thm:fmrg-gmm} with \Cref{thm:plugin-flow-gmm}, the FMRG guidance has the same algebraic form as the $k = 1$ plug-in guidance: the random GLASS endpoint $\bar{X}_1(x)$ is replaced by the deterministic probability flow endpoint $X_{t, 1}(x)$, and the GLASS Jacobian by the unguided flow map Jacobian $\grad X_{t, 1}(x)$. In both cases, the reward enters only through the displacement ($\mathrm{endpoint} - a$), and the Jacobian is determined entirely by the unguided dynamics.

Since the probability flow starting from $x$ typically transports mass to its own mode, a far-away mode contributes negligibly to $\grad X_{t, 1}(x)^\top (X_{t, 1}(x) - a)$ regardless of how large its reward is. The mode selection failure of \Cref{thm:mode-selection} therefore applies equally to flow map reward guidance, and best-of-$n$ sampling compensates for it verbatim with $u_t^{(1)}$ replaced by $u_t^{\mathrm{FMRG}}$. Reward damping (\Cref{sec:reward-damping}) does not lift to FMRG, however: from \Cref{thm:fmrg-gaussian}, $u_t^{\mathrm{FMRG}}(x)$ has mismatched coefficients on $(x - t \mu)$ and $(\mu - a)$ that no scalar schedule $\lambda_t$ can simultaneously match, so the FMRG bias is not simply multiplicative, even for an isotropic Gaussian target.

\subsection{Empirical comparison} \label{app:fmrg-empirical}

We empirically verify two claims from the analysis above. First, we show that flow map reward guidance and the $k = 1$ plug-in flow exhibit qualitatively similar reward hacking on FLUX.1-dev with a blueness reward; the two schemes drive the model in the same direction and produce visually comparable reward-hacked outputs. Second, on the single-Gaussian target of \Cref{thm:fmrg-gaussian}, we verify the $\sigma$-regime crossover predicted by the analysis: the eigenvalues of $\tilde{\Sigma}^{\mathrm{FMRG}}$ are smaller than those of $\Sigma^{(1)}$ when $\sigma < \pi^2$ (FMRG hacks more aggressively than the plug-in flow), and larger when $\sigma > \pi^2$ (FMRG is less aggressive).

\paragraph{FLUX.1-dev.} For the FLUX run in \Cref{fig:fmrg-flux-blueness}, we use the implementation of flow map reward guidance defined by \eqref{eq:fmrg-ode} and \eqref{eq:fmrg-quadratic}. The flow map reward guidance and the $k = 1$ plug-in are evaluated at matched gradient-norm scale on a blue-minus-red-green reward, and produce visually similar reward-hacked outputs. This is consistent with the structural analysis of \Cref{thm:fmrg-gaussian,thm:fmrg-gmm}: both schemes are single-endpoint surrogates and exhibit the same characteristic reward hacking failure mode.

\image{0.85}{flux/fmrg_dragon_brg.png}{\textbf{FMRG vs.\ plug-in on FLUX.1.} Top: unguided samples. Middle: the $k = 1$ plug-in flow. Bottom: flow map reward guidance. Both guided schemes drive the entire image onto a blue tint, sharing the same reward hacking failure mode.}{fig:fmrg-flux-blueness}

\paragraph{Crossover at $\sigma = \pi^2$.} Recall from \Cref{thm:fmrg-gaussian,thm:plugin-flow-gaussian} that for an isotropic Gaussian target $\rho_1 = \Normal(\mu, \sigma I_d)$ under the memoryless schedule, the eigenvalues of the terminal covariances are $\tilde{\Sigma}^{\mathrm{FMRG}} = \sigma e^{-2\pi \lambda \sqrt{\sigma}}$ and $\Sigma^{(1)} = \sigma e^{-2 \lambda \sigma}$. The plug-in covariance shrinks faster than the FMRG covariance precisely when $2 \lambda \sigma > 2 \pi \lambda \sqrt{\sigma} \iff \sigma > \pi^2$. \Cref{fig:fmrg-gaussian-crossover} verifies this regime crossover empirically: with the narrow target ($\sigma = 0.5 < \pi^2$) at $\lambda = 1$, FMRG produces a much sharper terminal distribution than the $k = 1$ plug-in flow, while with the wide target ($\sigma = 16 > \pi^2$) at $\lambda = 0.1$, the same comparison reverses and FMRG samples are visibly more spread out than the plug-in samples. Both regimes exhibit aggressive reward hacking, but the relative aggressiveness flips with $\sigma$ exactly as predicted.

\image{0.95}{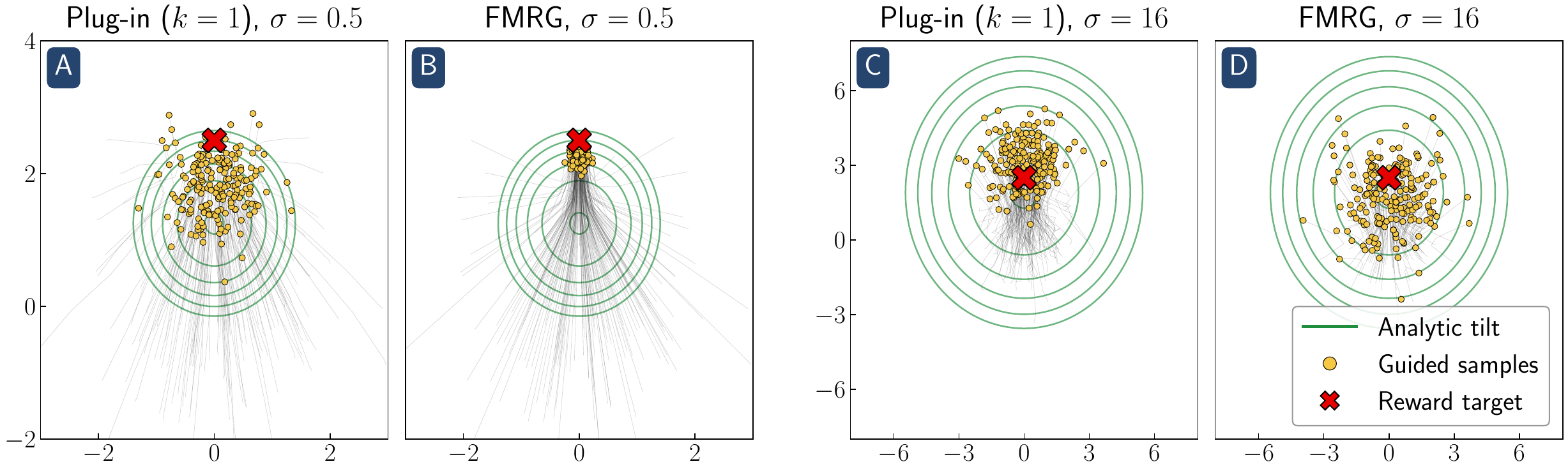}{\textbf{$\sigma$-regime crossover.} Empirical comparison of FMRG vs.\ the $k = 1$ plug-in flow on an isotropic Gaussian target. Panels A,B (narrow, $\sigma = 0.5 < \pi^2$, $\lambda = 1$): FMRG concentrates much more sharply at the reward target than the plug-in. Panels C,D (wide, $\sigma = 16 > \pi^2$, $\lambda = 0.1$): the relative aggressiveness reverses, with FMRG samples visibly more spread out, exactly as predicted by \Cref{thm:fmrg-gaussian}.}{fig:fmrg-gaussian-crossover}

\end{document}